\documentclass[letterpaper,11pt]{article}
\usepackage{parskip}
\usepackage[english]{babel}
\usepackage[nottoc]{tocbibind}
\usepackage{rotating}
\usepackage[square]{natbib}
\usepackage{pifont}
\usepackage{diagbox}
\usepackage{makecell}
\usepackage{booktabs}
\usepackage{tikz}
\usepackage{amsmath}
\usepackage{amsfonts}
\usepackage{amssymb}
\usepackage{amsthm}
\usepackage{url,ifthen}
\usepackage{enumitem}
\usepackage{srcltx}
\usepackage{multirow}
\usepackage{boxedminipage}
\usepackage[margin=1.1in]{geometry}
\usepackage{nicefrac}
\usepackage{xspace}
\usepackage{graphicx}
\usepackage{colortbl}
\usepackage{setspace}
\usepackage{soul}

\usepackage{hyperref}
\usepackage{cleveref}
\usepackage{mathptmx}

\definecolor{DarkGreen}{rgb}{0.1,0.5,0.1}
\definecolor{DarkRed}{rgb}{0.5,0.1,0.1}
\definecolor{DarkBlue}{rgb}{0.1,0.1,0.5}
\definecolor{Gray}{rgb}{0.2,0.2,0.2}

\usepackage{listings}
\lstdefinestyle{mystyle}{
    commentstyle=\color{DarkBlue},
    keywordstyle=\color{DarkRed},
    numberstyle=\tiny\color{Gray},
    stringstyle=\color{DarkGreen},
    basicstyle=\footnotesize,
    breakatwhitespace=false,         
    breaklines=true,                 
    captionpos=b,                    
    keepspaces=true,                 
    numbers=left,                    
    numbersep=5pt,                  
    showspaces=false,                
    showstringspaces=false,
    showtabs=false,                  
    tabsize=2
}
\usepackage[small]{caption}
\usepackage{hyperref}
\hypersetup{
    unicode=false,          
    pdftoolbar=true,        
    pdfmenubar=true,        
    pdffitwindow=false,      
    pdfnewwindow=true,      
    colorlinks=true,       
    linkcolor=DarkGreen,          
    citecolor=DarkGreen,        
    filecolor=DarkRed,      
    urlcolor=DarkBlue,          
    pdftitle={},
    pdfauthor={},
}

\def\draft{1}

\def\submit{0}

\ifnum\draft=1 
    \def\ShowAuthNotes{1}
\else
    \def\ShowAuthNotes{0}
\fi

\ifnum\submit=1
\newcommand{\forsubmit}[1]{#1}
\newcommand{\forreals}[1]{}
\else
\newcommand{\forreals}[1]{#1}
\newcommand{\forsubmit}[1]{}
\fi

\ifnum\ShowAuthNotes=1
\newcommand{\authnote}[2]{{ \footnotesize \bf{\color{DarkRed}[#1's Note:
{\color{DarkBlue}#2}]}}}
\else
\newcommand{\authnote}[2]{}
\fi

\newtheorem{theorem}{Theorem}[section]
\newtheorem{remark}[theorem]{Remark}
\newtheorem{lemma}[theorem]{Lemma}

\newtheorem{proposition}[theorem]{Proposition}

\theoremstyle{definition}

\newtheoremstyle{example_contd}
{\topsep} {\topsep}%
{}
{}
{\bfseries}
{.}
{1em}
{\thmname{#1} \thmnumber{ #2}\thmnote{#3} (continued)}

\theoremstyle{example_contd}

\newcommand{\chapterref}[1]{\hyperref[ch:#1]{Chapter~\ref{ch:#1}}}

\newcommand{\claimref}[1]{\hyperref[claim:#1]{Claim~\ref{claim:#1}}}

\newcommand{\corollaryref}[1]{\hyperref[cor:#1]{Corollary~\ref{cor:#1}}}

\newcommand{\definitionref}[1]{\hyperref[def:#1]{Definition~\ref{def:#1}}}

\newcommand{\equationref}[1]{\hyperref[eq:#1]{Equation~\ref{eq:#1}}}

\newcommand{\factref}[1]{\hyperref[fact:#1]{Fact~\ref{fact:#1}}}

\newcommand{\figureref}[1]{\hyperref[fig:#1]{Figure~\ref{fig:#1}}}

\newcommand{\tableref}[1]{\hyperref[tab:#1]{Table~\ref{tab:#1}}}

\newcommand{\itemref}[1]{\hyperref[item:#1]{Item~(\ref{item:#1})}}

\newcommand{\lemmaref}[1]{\hyperref[lem:#1]{Lemma~\ref{lem:#1}}}

\newcommand{\propref}[1]{\hyperref[prop:#1]{Proposition~\ref{prop:#1}}}

\newcommand{\propositionref}[1]{\hyperref[prop:#1]{Proposition~\ref{prop:#1}}}

\newcommand{\remarkref}[1]{\hyperref[rem:#1]{Remark~\ref{rem:#1}}}

\newcommand{\sectionref}[1]{\hyperref[sec:#1]{Section~\ref{sec:#1}}}

\newcommand{\theoremref}[1]{\hyperref[thm:#1]{Theorem~\ref{thm:#1}}}

\usepackage[utf8]{inputenc}
\usepackage[T1]{fontenc}
\usepackage{kpfonts}
\usepackage{microtype}

\renewcommand{\hat}{\widehat}

\newcommand{\cE}{{\cal E}}

\newcommand{\cN}{{\cal N}}

\newcommand{\cP}{{\cal P}}
\newcommand{\cQ}{{\cal Q}}

\newcommand{\cY}{{\cal Y}}

\renewcommand{\leq}{\leqslant}
\renewcommand{\le}{\leqslant}
\renewcommand{\geq}{\geqslant}
\renewcommand{\ge}{\geqslant}

\newcommand{\R}{\mathbb{R}}
\newcommand{\Z}{\mathbb Z}

\newcommand{\N}{\mathbb N}

\usepackage{bm}

\newcommand{\ignore}[1]{}

\newcommand{\KL}{\mathrm{KL}}

\renewcommand{\epsilon}{\varepsilon}

\newcommand{\btheta}{\boldsymbol\theta}

\newcommand{\remove}[1]{}

\providecommand{\E}{\mathbb{E}}

\providecommand{\R}{\mathbb{R}}
\providecommand{\N}{\mathbb{N}}
\providecommand{\Z}{\mathbb{Z}}

\providecommand{\Var}{\mathbb{V}\mathrm{ar}}
\providecommand{\Cov}{\mathbb{C}\mathrm{ov}}

\usepackage{color-edits}
\addauthor{hui}{cyan}

\def \Cov{\text{Cov}}

\usepackage{bm}
\usepackage{amsfonts}
\usepackage[utf8]{inputenc} 
\usepackage[T1]{fontenc}    
\usepackage{hyperref}       
\usepackage{url}            
\usepackage{booktabs}       
\usepackage{amsfonts}       
\usepackage{nicefrac}       
\usepackage{microtype}      

\usepackage{algorithm}
\usepackage{algpseudocode}
\usepackage{diagbox}
\usepackage{makecell}
\usepackage{booktabs}
\usepackage{tikz}
\usepackage{amsmath}
\allowdisplaybreaks
\usepackage{amssymb}
\usepackage{amsthm}
\usepackage{url,ifthen}
\usepackage{enumitem}
\usepackage{srcltx}
\usepackage{multirow}
\usepackage{boxedminipage}
\usepackage{nicefrac}
\usepackage{xspace}
\usepackage{graphicx}
\usepackage[table,xcdraw]{xcolor}
\usepackage{colortbl}
\usepackage{setspace}
\usepackage{soul}
\usepackage{graphicx}
\usepackage{wrapfig}
\usepackage{subcaption}
\usepackage{enumitem}
\usepackage{cleveref}
\definecolor{DarkGreen}{rgb}{0.1,0.5,0.1}
\definecolor{DarkRed}{rgb}{0.5,0.1,0.1}
\definecolor{DarkBlue}{rgb}{0.1,0.1,0.5}
\definecolor{Gray}{rgb}{0.2,0.2,0.2}

\newcommand{\method}{PETS\xspace}

\usepackage{mathtools}
\usepackage{xurl}
\usepackage[normalem]{ulem}
\useunder{\uline}{\ul}{}
\graphicspath{{../}}
\newcommand{\myState}[1]{%
    \State \parbox[t]{\dimexpr\linewidth-\algorithmicindent}{\raggedright #1\strut}
}

\title{\method: A Principled Framework Towards Optimal Trajectory Allocation for Efficient Test-Time Self-Consistency}

\usepackage{authblk}

\author[1]{Zhangyi Liu\thanks{Equal contribution.}}
\author[2]{Huaizhi Qu\textsuperscript{*}}
\author[2]{Xiaowei Yin\textsuperscript{*}}
\author[3]{He Sun}
\author[4]{Yanjun Han}
\author[2]{Tianlong Chen}
\author[2]{Zhun Deng \thanks{Contact: zhundeng@cs.unc.edu}}

\affil[1]{Stanford University}
\affil[2]{UNC at Chapel Hill}
\affil[3]{Yale University}
\affil[4]{New York University}

\date{}
\begin{document}

\maketitle

\begin{abstract}
 Test-time scaling can improve model performance by aggregating stochastic reasoning trajectories. However, achieving sample-efficient test-time self-consistency under a limited budget remains an open challenge. We introduce PETS (\textbf{P}rincipled and \textbf{E}fficient \textbf{T}est-Time \textbf{S}elf-Consistency), which initiates a principled study of trajectory allocation through an optimization framework. Central to our approach is the \emph{self-consistency rate}, a new measure defined as agreement with the infinite-budget majority vote. This formulation makes sample-efficient test-time allocation theoretically grounded and amenable to rigorous analysis. We study both offline and online settings. In the offline regime, where all questions are known in advance, we connect trajectory allocation to crowdsourcing, a classic and well-developed area, by modeling reasoning traces as workers. This perspective allows us to leverage rich existing theory, yielding theoretical guarantees and an efficient majority-voting-based allocation algorithm. In the online streaming regime, where questions arrive sequentially and allocations must be made on the fly, we propose a novel method inspired by the offline framework. Our approach adapts budgets to question difficulty while preserving strong theoretical guarantees and computational efficiency. Experiments show that PETS consistently outperforms uniform allocation. On GPQA, PETS achieves perfect self-consistency in both settings while reducing the sampling budget by up to $75\%$ (offline) and $55\%$ (online) relative to uniform allocation. Code is  available at:  \href{https://github.com/ZDCSlab/PETS.git}{this https URL}.
 

\end{abstract}

\begin{figure}[t]
  \centering
  \includegraphics[width=\linewidth]{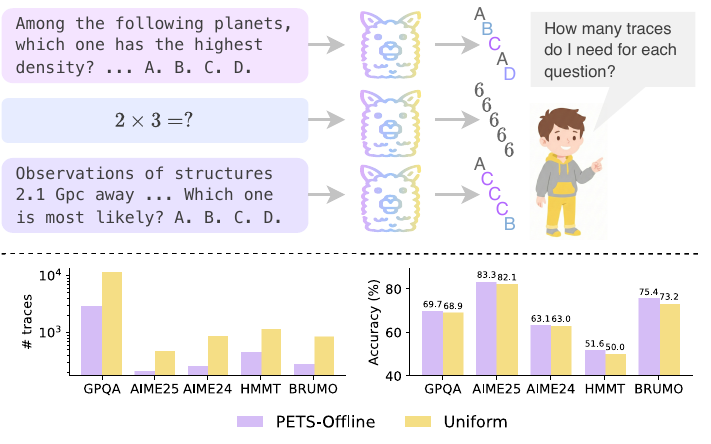}
  \caption{In this paper, we study how to allocate an LLM’s sampling budget across questions to best match the full-budget outcome under self-consistency. Our results show that \method substantially reduces the required budget while maintaining accuracy.}
  \label{fig:problem}
  \vspace{-1em}
\end{figure}

\section{Introduction}

Test-time scaling methods can substantially enhance LLMs’ reasoning performance \cite{muennighoff2025s1, huang2025efficient, guo2025deepseek, snell2024scaling, fu2025deep}. Self-consistency \cite{wang2022self}, adopted in Gemini’s Deep Think mode \cite{thangluongAdvancedVersionGemini}, is a simple and effective approach but can be computationally expensive due to the need to sample many reasoning traces per query. A natural remedy is to allocate different sampling budgets across questions (Figure~\ref{fig:problem}). Most existing methods rely on heuristic signals, such as trace-level confidence \cite{fu2025deep} or LLM-predicted difficulty \cite{wang2025make}, which typically lack theoretical guarantees and may use the budget inefficiently. Another line of work analyzes test-time scaling with external reward models or verifiers \cite{snell2024scaling, huang2025bestofn,zuo2025strategicscaling}.  However, reward models can be mis-specified in practice, especially on complex or out-of-distribution queries, leading to unreliable guidance. They also introduce additional training and deployment costs.

These limitations motivate the need for theoretically principled test-time scaling methods that do not rely on auxiliary supervision, but instead leverage the intrinsic structure of self-consistency itself. Under this lens, we propose \method, a \textbf{P}rincipled and \textbf{E}fficient \textbf{T}est-Time \textbf{S}elf-Consistency framework for trajectory allocation. Our approach is inspired by the principle that sampling effort should be distributed according to the \emph{relative difficulty} of different questions, rather than allocated uniformly. Central to \method is the optimization of \textbf{\emph{self-consistency rate}}, defined as the probability that majority voting with a finite budget $B$ matches the population majority. Because questions at different difficulty levels $\bm\theta$ exhibit distinct convergence behaviors, they require different numbers of traces to attain high self-consistency. Our goal is therefore to allocate traces across questions so as to maximize the aggregate self-consistency rate of the question set under a fixed budget.

We study two settings: \ding{182} \emph{\textbf{Offline batch setting}}, where the entire question set is available for optimization. In this regime, we connect trajectory allocation to crowdsourcing by modeling reasoning traces as
workers. We adopt and extend a Bayesian adaptive trajectory allocation algorithm (Section \ref{sec:offline_met}) that iteratively allocates additional trajectories while maintaining a posterior over each question’s difficulty level. \ding{183} \emph{\textbf{Online streaming setting}}, where questions arrive sequentially and allocation must be made without access to the full dataset. Inspired by the offline case. We propose a one-shot allocation strategy (Section \ref{sec:online_met}), which can be cast into a supervised learning case as in the batch setting with the help of additional samples as the training set to estimate the distribution of question difficulties. We solve a constrained allocation problem to obtain a budget allocation plan over difficulty levels, and use a \textit{provably optimal} procedure to instantiate per-question budgets upon arrival.



Experiments show that \method significantly improves the efficiency of self-consistency while maintaining accuracy, demonstrating its strong practicability. As shown in Figure \ref{fig:problem}, compared to naive uniform allocation, it requires substantially fewer trajectories to reach full self-consistency, achieves higher self-consistency under the same budget, and ultimately translates these gains into improved accuracy.

\textbf{Our contribution.} To summarize, we develop theoretically principled test-time scaling methods that leverage the intrinsic structure of self-consistency. Specifically,
\begin{itemize}[noitemsep, topsep=0pt]
    \item We introduce \emph{self-consistency rate}, a new performance measure defined as agreement with the infinite-budget majority vote. This formulation provides a principled target for finite-budget inference and enables rigorous theoretical analysis of sample-efficient allocation policies in finite-budget regimes.
    
    \item In the offline setting, we make the first connection between self-consistency trajectory allocation and optimal budget allocation in crowdsourcing. This allows us to employ a rich body of existing tools to maximize the expected gain in self-consistency rate per allocation decision.
    
    \item In the online setting, we propose a novel one-shot allocation strategy that estimates the difficulty distribution from a small training set and solves a constrained optimization problem to determine per-question budgets for allocation upon arrival.
    
    \item Extensive experiments show that \method  consistently reduces the traces needed to reach full self-consistency in both unweighted and weighted cases, improves self-consistency under fixed budgets, and yields higher accuracy than uniform baselines.
\end{itemize}



\section{Setup}\label{sec:setup}



We consider a collection of multiple-choice (or fill-in-the-blank) questions $\{q_i\}_{i=1}^N$. For each question $q_i$, we assume access to a stochastic reasoning procedure. Each sampled reasoning trace produces a random answer $Y_i$ sampled from answer set $\mathcal{Y}$. Without loss of generality, we use $\mathcal{Y} = \{1, \ldots, M\}$ to denote the collection of possible answers. Reasoning traces are assumed to be i.i.d.\ conditioned on $q_i$.


\textbf{Majority voting and related concepts.} Given a finite integer budget $B\in\mathbb{N}_+$, for each question $q_i$, $B$ reasoning traces
~$T^{(1)}_i,T^{(2)}_i,\cdots,T^{(B)}_i$ are i.i.d.\ drawn and lead to $B$ answers $Y^{(1)}_i,Y^{(2)}_i,\cdots,Y^{(B)}_i$. For each answer in $y\in \mathcal{Y}$, the corresponding vote count is
$$V_i(y;B)=\sum_{j=1}^B\bm{1}\{Y^{(j)}_i=y\}.$$
The final answer based on majority voting is denoted as
$$Y^{\text{Maj}}_i(B)=\arg\max_{y\in\mathcal{Y}} V_i(y;B).$$
Similarly, for each question $q_i$, we define weighted majority voting by considering the weighted vote count for weight vector $\bm{w}_i=(w^{(1)}_i,w^{(2)}_i,\cdots,w^{(B)}_i)^\top \in \mathbb{R}_+^B$ of all $j\in [B]$. 
$$V^{\text{W}}_i(y;\bm{w}_i,B)=\sum_{j=1}^Bw^{(j)}_i\bm{1}\{Y^{(j)}_i=y\}.$$
Correspondingly, we can define the final answer based on weighted majority voting as:
$$Y^{\text{WMaj}}_i(B)=\arg\max_{y\in\mathcal{Y}} V_i^{\text{W}}(y;\bm{w}_i,B).$$
In this paper, we mainly follow \citet{fu2025deep} 
so that for each question $q_i$, the corresponding weight $w^{(j)}_i$ depends only on $T^{(j)}_i$, so that the final weight vector takes the form $\bm{w}_i=(w(T^{(1)}_i),w(T^{(2)}_i),\cdots, w(T^{(B)}_i))^\top$.

We make a natural assumption\footnote{Traces are independently generated from the same question $q_i$ without fixing a random seed.}
that the trace and answer pairs $\{(T^{(j)}_i,Y^{(j)}_i)\}_{j=1}^B$ are i.i.d. drawn from an unknown joint distribution $\mathcal{D}_i$. The marginal distribution of $Y^{(j)}_i\in \{1,\dots,M\}$ is given by a pmf $\bm{\theta}_i=(\theta_{i,1},\theta_{i,2},\cdots,\theta_{i,M})^\top$, where $\theta_{i,y}=\mathbb{P}(Y_i=y)$ are unknown parameters. The pmf $\bm{\theta}$ characterizes the uncertainty in LLM's answer to the question $q_i$, and can thus represent the difficulty of each problem.\footnote{If the maximum coordinate $\arg\max_{y\in\cY} \theta_{y}$ is much larger than the remaining parts, then it shows that this problem is relatively easier. Figure~\ref{fig:voting_curves_panels} in Appendix~\ref{sec:approx-2} shows how $\bm\theta$ affects the self-consistency rate under different budget $B$.
}

We further define $y^\infty_i=\operatorname*{arg\,max}_{y\in \mathcal{Y}} \theta_{i,y}$ and $y^{W,\infty}_i=\operatorname*{arg\,max}_{y\in \mathcal{Y}} \E_{(T_i,Y_i)\sim \mathcal{D}_i}[w(T_i)\bm{1}\{Y_i=y\}],$ which correspond to the population (weighted) majority label in the infinite sample limit for question $q_i$. $y^\infty_i$ and $y^{W,\infty}_i$  formalize the fundamental limit of test-time scaling. With an infinite sampling budget, aggregating infinitely many reasoning trajectories yields a well-defined consensus prediction given by the infinite-budget (weighted) majority vote.

Given the above content, we can define the \emph{self-consistency rate}
of the answer obtained by majority voting for question $q_i$ under budget $B$ as
\begin{equation}\label{def:SC}
    \mathrm{SC}(q_i;B)=\mathbb{P}(Y_i^{\text{Maj}}(B)=y_i^\infty|\boldsymbol{\theta}_i)
\end{equation}
and its weighted counterpart as
\begin{equation}\label{def:weightedSC}
    \mathrm{SC}^W(q_i;B)=\mathbb{P}(Y_i^{\text{WMaj}}(B)=y_i^{W,\infty}|\mathcal{D}_i).
\end{equation}
 Intuitively, self-consistency measures how likely a budget-$B$ (weighted) vote recovers the population (weighted) majority label, providing a principled target for finite-budget inference. By defining self-consistency rate, we can \textbf{\textit{rigorously study how test-time samples approximate infinite-budget behavior and design sample-efficient allocation strategies under realistic computational constraints}}. Moreover, the rate at which it improves with $B$ provides a natural notion of question difficulty. Both quantities are monotonically increasing in $B$, and their growth is governed by $\bm\theta_i$ (or $\mathcal{D}_i$ in the weighted case).


\textbf{Optimal trajectory budget allocation problem.} Our general goal is to design budget allocation strategies that maximize self-consistency rate under a limited reasoning total budget in various settings. Loosely speaking, given a set of questions $\{q_i\}_{i=1}^N$, our aim is to find a policy $\pi$ to allocate budget for each question, i.e., number of reasoning traces for each question.
\begin{equation}\label{eq:general-case}
\begin{aligned}
 \max_{\pi}
 &\sum_{i=1}^N \E_{\mathcal D_i\sim \mathcal{D}^{\text{meta}}_i(\pi)}\bigl[
 \mathrm{SC}^W(q_i;B_\pi(q_i))\bigr],
\\
 &\text{s.t. }\sum_{i=1}^N B_\pi(q_i) \le B_{\text{total}}.
\end{aligned}
\end{equation}
Here, the joint distribution of $(T_i,Y_i)$ can be more complex than a fixed distribution $\mathcal{D}_i$ as it might be dependent on the policy. For instance, in a Bayesian perspective, we can consider a meta distribution (distribution of distributions) and consider $\mathcal D_i$ is drawn from $\mathcal{D}_i^{\text{meta}}(\pi)$. In Section~\ref{sec:offline_met},  $\mathcal{D}^{\text{meta}}_i(\pi)$ can be a posterior distribution over parameters like $\bm\theta_i$ based on $\pi$. Details are specified in later sections.


We study this problem under two different information settings that can both be unified under our above formulation. \ding{182} \textit{Offline batch setting}: the full question set $\{q_i\}_{i=1}^N$ is known in advance, so the policy can allocate and adapt budgets globally across questions during execution. The outcome is summarized by a final per-question budget $B_\pi(q_i)$ for each $q_i$. \ding{183} \textit{Online streaming setting}: Questions $\{q_1,q_2,\ldots,q_N\}$ arrive sequentially as i.i.d. draws from a latent distribution $\mathcal D$ and is not observable in advance. When $q_t$ arrives, the policy must choose a budget $B_t=B_\pi(q_t)$ immediately, without seeing future questions.


\section{Offline \method in the Batch Setting}
\label{sec:offline_met}


In this section, we study the problem of optimal trajectory allocation in the offline setting. Here, allocation proceeds sequentially, and the availability of all questions enables us to gradually assess their relative difficulties throughout the process. Building on our notion of the self-consistency rate, we establish an interesting connection to the fruitful and well-developed literature on crowdsourcing. Table~\ref{tab:crowdsourcing_connection} summarizes the correspondence.



Specifically, under standard majority voting, the trajectory allocation problem is closely related to optimal budget allocation in crowdsourcing. Each reasoning answer $Y_i^{(b)}$ can be viewed as a noisy worker label for question $q_i$, while allocating additional traces corresponds to assigning more labeling effort to an item under a global budget constraint. As a result, the offline trajectory allocation problem can be formulated as a learning-while-allocating problem, and we can directly adopt the Bayesian framework of \citet{pmlr-v28-chen13f}. To sum up, the learner maintains a posterior over the answer distribution of each question, analogous to the posterior over item labels in crowdsourcing, and sequentially decides which question to sample next.

Beyond standard majority voting, we extend the Bayesian allocation framework to confidence-weighted aggregation, a strategy that has recently gained popularity in test-time scaling and self-consistency methods, e.g., \citep{fu2025deep}. We remark that the weighted majority voting formulation in \citep{fu2025deep} differs from the worker-reliability weighting scheme considered in \citep{pmlr-v28-chen13f}, and thus requires additional care when extending the crowdsourcing-based allocation framework. 




\begin{table}[htbp]
\centering
\caption{Connection to crowdsourcing.}
\label{tab:crowdsourcing_connection}
\small
\begin{tabular}{p{0.45\linewidth} p{0.48\linewidth}}
\toprule
\textbf{Crowdsourcing} & \textbf{Test-time self-consistency} \\
\midrule
Item $i$,  Worker label $Y_i^{(b)}$ &
Question $q_i$, Trace answer $Y_i^{(b)}$ \\
\smallskip
Adaptive worker assignment to items  &
\smallskip
Adaptive trajectory sampling for questions\\
\smallskip
Posterior over labels and worker reliability &
\smallskip
Posterior over answers and confidence weights\\

\smallskip
Worker-specific reliability  &
\smallskip
Trace confidence without persistent worker ID
\\

\smallskip
Goal: infer the true label &
\smallskip
Goal: infer the population majority answer $y_i^{\infty}$ (or $y_i^{W,\infty}$) \\
\bottomrule
\end{tabular}
\vspace{-1em}
\end{table}


\textbf{Bayesian Setup for Trajectory Allocation.} We formalize offline allocation as a Bayesian decision problem. For each question $q_i$, each sampled trace yields an answer $Y_i\in\mathcal{Y}$ and a confidence score $C_i=w(T_i)\in\mathbb{R}_+$. Let $\bm\theta_i\in\Delta^{M-1}$ denote the (unknown) answer distribution, and let $\boldsymbol{\mu}_i=(\mu_{i,1},\ldots,\mu_{i,M})\in\mathbb{R}^M$ be the (unknown) class-conditional confidence means (we fix $\sigma^2=1$ for simplicity). We make a natural assumption that
answers follow a categorical distribution, and confidence is Gaussian with mean depending on the sampled answer. The joint likelihood for a single trace-induced pair $(Y_i,C_i)$ factorizes as
\begin{equation*}\label{eq:lik_factor}
p(y,c\mid \boldsymbol{\theta}_i,\boldsymbol{\mu}_i)
=
p(y\mid \boldsymbol{\theta}_i)\,p(c\mid y,\boldsymbol{\mu}_i),
\end{equation*}

Therefore, the weighted population-optimal label is
\begin{equation*}
y_i^{W,\infty}=
\arg\max_{m\in\mathcal{Y}} \theta_{i,m}\mu_{i,m}.
\label{eq:ywinf}
\end{equation*}
The weighted self-consistency rate~\eqref{def:weightedSC} can equivalently writes in the following form
\begin{equation}\label{eq:offline weighted SC}
\mathrm{SC}^W(q_i;B)
=
\mathbb{P}\!\left(
Y_i^{\mathrm{WMaj}}(B)=y_i^{W,\infty}\ \middle|\ \boldsymbol{\theta}_i,\boldsymbol{\mu}_i
\right),
\end{equation}

Since latent parameters $(\bm\theta_i,\bm\mu_i)$ are unknown and ground-truth labels are unavailable, a direct quantification of the  objective~\eqref{eq:general-case} is unclear. We thus adopt a Bayesian framework that maintains a posterior over the latent parameters, for us to quantify self-consistency under posterior belief.

\paragraph{Modeling as an MDP process. }

We model offline trajectory allocation as a finite-horizon Bayesian Markov Decision Process (MDP), following the crowdsourcing formulation of \citet{pmlr-v28-chen13f}, with adaptations to accommodate confidence-weighted traces. At stage $t$, the state $S^t$ summarizes the posterior beliefs over $(\boldsymbol{\theta}_i,\boldsymbol{\mu}_i)$ for each question via their respective posterior hyperparameters in the belief state.

The action $i_t \in [N]$ selects a question to which one additional trace is allocated.
After allocating a trace to question $i_t$, we observe $(y_t,c_t)$, which induces a conjugate Bayesian update of the posterior parameters in $S^t$.
The process has a fixed horizon $H$, corresponding to the total trace budget $B_{\text{total}}$. The terminal reward is defined as the sum of posterior self-consistency across all questions in the batch.
\begin{equation}
    \{\hat{y}_i\}_{i=1}^N = \arg\max_{\{\hat{y}_i\}} \sum_{i=1}^N\mathbb{P}(\hat{y}_i=y_i^{W,\infty}|S^H) 
\end{equation}
which measures the posterior probability of recovering the (weighted) population-majority answer for each question. Please refer to Appendix~\ref{app:offline-MDP} for further details on MDP formulation in our paper.
\begin{lemma}[Bayes-optimal terminal decision]\label{lem:bayes-terminal-decision}
    Given the terminal belief $S^H$, the Bayes-optimal decision for each question is therefore
\begin{equation}\label{eq:terminal decision rule}
\hat{y}_i \in \arg\max_{m\in[M]}\mathbb{P}\!\left(y_i^{W,\infty}=m\mid S_i^H\right),
\end{equation}
\end{lemma}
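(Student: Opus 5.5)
The argument is a pointwise maximization: the terminal objective separates across questions, and each per-question term is maximized by choosing the posterior mode of $y_i^{W,\infty}$.

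\emph{Separability.} The objective is $\sum_{i=1}^N \mathbb{P}(\hat y_i = y_i^{W,\infty}\mid S^H)$. In the belief state of the MDP, the posteriors over $(\boldsymbol\theta_i,\boldsymbol\mu_i)$ are maintained independently for each question. The $i$-th term therefore depends only on $\hat y_i$ and the component $S_i^H$, and equals $\mathbb{P}(\hat y_i = y_i^{W,\infty}\mid S_i^H)$. The choices $\hat y_i$ are unconstrained across questions, so maximizing the sum is equivalent to maximizing each summand separately in $\hat y_i\in[M]$.

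\emph{Pointwise maximization.} Fix $i$. The decision $\hat y_i$ is a function of $S^H$, so it is deterministic given the terminal belief. Hence, for any fixed $m\in[M]$,
\begin{equation*}
\mathbb{P}(\hat y_i = y_i^{W,\infty}\mid S_i^H)\big|_{\hat y_i=m} = \mathbb{P}(y_i^{W,\infty}=m\mid S_i^H).
\end{equation*}
Here $y_i^{W,\infty}=\arg\max_m \theta_{i,m}\mu_{i,m}$ is a measurable function of the latent parameters, so its posterior law is well defined. The function $m\mapsto \mathbb{P}(y_i^{W,\infty}=m\mid S_i^H)$ is maximized over the finite set $[M]$ by any element of its argmax, which gives \eqref{eq:terminal decision rule}.

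\emph{Randomized rules.} A randomized rule puts weights $r_m$ on the labels $m$. Its value is the convex combination $\sum_m r_m\,\mathbb{P}(y_i^{W,\infty}=m\mid S_i^H)$, which cannot exceed the maximum, so randomization gives no improvement. The same bound extends to Bayes optimality with respect to the prior: by the tower property, the expected terminal reward of any policy equals the expectation over $S^H$ of the conditional reward, and the conditional reward is maximized pointwise.

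The main thing to check is that $y_i^{W,\infty}$ is a.s.\ uniquely defined. Ties in $\theta_{i,m}\mu_{i,m}$ occur on a null set under the continuous Gaussian/Dirichlet posterior, so the event $\{y_i^{W,\infty}=m\}$ is unambiguous.
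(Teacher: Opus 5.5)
Your proposal is correct and follows the paper's proof. The paper likewise conditions on the sample path (which fixes each $S_i^H$), writes the objective as $\sum_i\sum_m \mathbb{I}(\hat y_i=m)\,\mathbb{P}(y_i^{W,\infty}=m\mid S_i^H)$, notes that it separates across questions, and maximizes each summand by the posterior mode; your remarks on randomized rules and on ties being a null event are harmless additions that the paper does not need.
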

The posterior probability $\mathbb{P}(y_i^{W,\infty}=m\mid S_i^H)$ quantifies the posterior belief that class $m$ is the population majority label, which can be estimated via Monte Carlo sampling.

Under the terminal decision rule~\eqref{eq:terminal decision rule}, we seek an allocation policy that maximizes the expected terminal utility:
\begin{equation}\label{eq:value}
V(S^0)=\mathbb{E}_\pi\Big[\sum_{i=1}^N
\max_{m\in[M]}\Pr\!\left(y_i^{W,\infty}=m\mid S_i^H\right)\Big],
\end{equation}

where the expectation is taken over all sample paths induced by policy $\pi=(i_0,\ldots,i_{H-1})$, which sequentially selects a question to allocate one additional budget at each time step.

\paragraph{Approximation of dynamic programming.}
Exact dynamic programming of Equation \eqref{eq:value} is intractable due to the exponentially growing belief space.
Our PETS-Offline therefore similarly adopt the \emph{Optimistic Knowledge Gradient (OKG)} heuristic \citep{pmlr-v28-chen13f}, which selects the question with the largest optimistic one-step improvement in the terminal utility of Equation \eqref{eq:value}. 
We provide the details and full algorithm (Algorithm~\ref{alg:okg_conf}) in Appendix~\ref{app:offline-algo}.

\section{Online \method in the Streaming Setting}
\label{sec:online_met}


The key idea of online allocation is to assign different numbers of samples to questions with different answer distribution parameters $\btheta$ (i.e., difficulty vectors) in one shot upon seeing the question. Unlike the offline setting in Section~\ref{sec:offline_met}, where $\btheta$ and voting weights can be estimated during the allocation process, the main challenge in the online regime is the lack of information about how the current question’s difficulty compares to that of future questions, given the online nature. In this setting, we need to have access to \textbf{\textit{a prior distribution over question difficulties via additional training data that are assumed to be drawn from the same distribution}}. Consequently, when a new question arrives, its budget could be determined immediately based solely on its estimated difficulty label and the prior distribution.

To simplify the problem, we restrict our attention to the setting where we know that there will be $N$ questions in the upcoming estimation period, although we do not know their exact content or arrival times. This assumption is common in practice, since model deployers can often estimate the query volume over a given time window, for example, based on historical usage statistics, service-level forecasts, or system capacity planning. 



\textbf{Mathematical formulation.} There are $N$ questions $\{q_1,q_2,\ldots,q_N\}$ arriving sequentially. 
Each question $q_t$  is associated with a difficulty vector 
$\btheta_t = \btheta(q_t)$ as defined in Section \ref{sec:setup}. 
With total budget $B_{\mathrm{total}}$, upon observing $\btheta_t$ for question $t$, an allocation policy $\pi$ assigns a budget 
\(
B_t = B_\pi(q_t)=B_\pi(\btheta_t)
\)
without access to future questions or any intermediate feedback from other questions.
Therefore, in the online setting, $\pi$ can only depend on the realized prior distribution $\mathcal{D}$ and the current difficulty label $\btheta_t$.
\subsection{Execution Protocol}

We here describe the execution protocol of \method-Online.
At a high level, the procedure consists of estimating \ding{182} the distribution of problem parameters, \ding{183} the mapping from each incoming question to its corresponding parameter grid, and \ding{184} solving a budget allocation optimization problem based on these estimates to obtain the final allocation plan.
This gives a two-stage architecture: a training-time stage that builds a small library of difficulty grids and their representative self-consistency curves, and a test-time stage that maps each streaming question to one grid and allocates its budget in one shot.

\textbf{Estimation of parameters and sample distribution.}
We estimate question difficulty in two steps.
First, because the original $(M\!-\!1)$-dimensional difficulty parameter $\btheta_i$ can be high-dimensional in the multi-choice case, we use a Gaussian-probit curve for the two-dimensional surrogate $(a_i,b_i)$:
\[
g_{a_i,b_i}(B):=\Phi(a_i\sqrt{B}+b_i),
\]
where $\Phi$ is the standard normal CDF.
This surrogate approximates the self-consistency curve
$\mathrm{SC}(\btheta_i;B)$ over different budgets $B$.
The Gaussian approximation is motivated by applying a normal approximation to multinomial vote margins; in practice, $(a_i,b_i)$ is fitted by regression from a large pool of sampled answers for the training question.
We then discretize the reduced space into $K$ difficulty grids with prototype parameters
$\{\hat\btheta^j\}_{j=1}^K$; in the multi-choice implementation, $\hat\btheta^j$ is represented by a prototype probit curve $(\hat a_j,\hat b_j)$.

Second, we use a lightweight warm-up procedure to assign each incoming question to a grid, after which the corresponding budget is allocated in one shot.
For a question $q$, we first draw $4$ warm-up responses and let
\[
C_q^{(4)}=(c_1,c_2,c_3,c_4)
\]
be the sorted option-count vector in descending order, padded with zeros if fewer than four distinct options appear. These patterns induce a deterministic grid assignment
\[
T_q=g(C_q^{(4)})\in\{1,\ldots,K\},
\]
This construction is independent of the total number of answer options, since at most four distinct options can appear in four warm-up samples. See Appendix~\ref{sec:approx-2} and Appendix~\ref{app:warmup-griding} for more details.




\textbf{Optimization framework.}
After discretization, the self-consistency rate in \eqref{eq:general-case} for a question $q_i$ under budget $B$ can be approximated as\footnote{A weighted version can be defined analogously. Since weights are difficult to estimate in the online setting, in experiments, we use the unweighted version for policy optimization and apply weighted majority voting only at inference time.}
\begin{equation}\label{eq:def_SC}
\mathrm{SC}(\btheta^{\mathrm{buc}}(q_i);B)
=\mathbb{P}\!\left(Y_i^{\mathrm{Maj}}(B)=y_i^\infty \mid \btheta^{\mathrm{buc}}(q_i)\right).
\end{equation}

An online streaming policy $\pi$ is therefore specified by integer grid budgets
$B_1,\ldots,B_K$: any question assigned to grid $j$ receives budget $B_j$.
Over a horizon of $N$ streaming questions with total budget $B_{\mathrm{total}}$, the optimal allocation problem can be  rewritten as
\begin{equation}\label{eq:discrete-case}
\max_{\{B_j\}_{j=1}^K} \sum_{j=1}^K \hat p_j \, \mathrm{SC}(\hat \btheta^j; B_j)
\quad \text{s.t.}\quad \sum_{j=1}^K \hat p_j B_j \le \bar{B}_{\mathrm{total}},
\end{equation}
where $\bar{B}_{\mathrm{total}} := B_{\mathrm{total}}/N$ is the average budget per round, $\hat p_j$ is the estimated distribution, and $\hat\btheta^j$ is the representation parameter of the grid $j$. Finally, we note that although \eqref{eq:discrete-case} is a static optimization problem, our final policy is a dynamic streaming policy which calls \eqref{eq:discrete-case} at every round; we refer to Appendix~\ref{app:dynamic-streaming} for more details.

\subsection{Optimal Budget Allocation}\label{sec:optimal-online}
In this section, we present an efficient greedy algorithm that solves the integer program \eqref{eq:discrete-case}. We start with the binary-choice case ($M=2$) in \Cref{alg:greedy}, and extend it to the multi-choice case in Appendix~\ref{sec:approx-2}. For a binary-choice question $q_i$, the choice set contains only two choices $\cY=\{0,1\}$, and the difficulty vector reduces to a scalar parameter $\theta=\mathbb{P}[Y_i=1]\in [0,1]$. 
Under $B$ i.i.d.\ samples and majority voting (with random tie-breaking when $B$ is even), we obtain
\begin{equation}\label{eq:mv-acc}
\mathbb{P}[Y_i^{\mathrm{Maj}}(B)=1\mid\theta]
=
\begin{cases}
 \sum_{b=(B+1)/2}^{B}\binom{B}{b}\theta^{b}(1-\theta)^{B-b}, & \text{$B$ odd}\\[1.1em]
\sum_{b=B/2+1}^{B}\binom{B}{b}\theta^{b}(1-\theta)^{B-b}
+\frac12\binom{B}{B/2}\theta^{B/2}(1-\theta)^{B/2}, & \text{$B$ even}
\end{cases}
\end{equation}
and $\mathbb{P}[Y_i^{\text{Maj}}(B)=0\mid\theta]=1-\mathbb{P}[Y_i^{\text{Maj}}(B)=1\mid\theta]$. Thus, the self-consistency rate is
\begin{equation}\label{eq:sc}
    \mathrm{SC}(\theta;B)=\max\bigl\{\mathbb{P}[Y_i^{\text{Maj}}(B)=1\mid\theta],\mathbb{P}[Y_i^{\text{Maj}}(B)=0\mid\theta]\bigr\},
\end{equation}
where $\mathrm{SC}(\theta;0)=\frac12$. We define the marginal gain $R(\theta,n)$ at budget level $n$ for a question with difficulty parameter $\theta$ as the increase in self-consistency rate resulting from allocating one additional unit of budget: 
\[
R(\theta,n):= \mathrm{SC}(\theta;n+1)-\mathrm{SC}(\theta;n).
\]
We propose a greedy yet optimal algorithm (Algorithm~\ref{alg:greedy}) that repeatedly allocates budget to the grid with the largest current marginal improvement. 

\begin{algorithm}[htbp]
\caption{Greedy Budget Allocation Algorithm}
\label{alg:greedy}
\begin{algorithmic}[1]
\Require \parbox[t]{0.84\linewidth}{Grid $(\theta^j)_{j=1}^K$, probabilities $(p_j)_{j=1}^K$, and average per-round budget $\bar{B}_{\mathrm{total}}$}
\State Initialize current budget $B_j\gets 0$ for all $j\in [K]$.
\State Initialize current marginal gain $\delta_j\gets R(\theta^j, 0)$ for all $j\in [K]$.
\State Initialize total used budget $T \gets 0$. 
\While{$T \leq \bar{B}_{\mathrm{total}}$}
    \State $j^\star \gets \arg\max_j \delta_j$
    \If{$B_{j^\star}=0$}
        \State $B_{j^\star} \gets 1$, $T \gets T + p_{j^\star}$
    \Else
        \State $B_{j^\star} \gets B_{j^\star} + 2, T \gets T + 2p_{j^\star}$
    \EndIf
    \State Update $\delta_{j^\star} \gets R(\theta_{j^\star},B_{j^\star})$
\EndWhile
\end{algorithmic}
\end{algorithm}

\begin{theorem}\label{thm:optimal}
Algorithm~\ref{alg:greedy} outputs an optimal solution to the discretized online allocation problem~\eqref{eq:discrete-case} in expectation, with a randomized rounding rule.\footnote{This is because the remaining budget may not suffice for another iteration. See Appendix~\ref{app:dynamic-streaming} for details.}
\end{theorem}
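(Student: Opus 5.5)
The plan is to recognize \eqref{eq:discrete-case} as a separable resource-allocation problem, namely a knapsack with one "ladder" of increments per grid, and to show three things: each ladder has diminishing per-unit returns, the greedy rule sorts all increments by per-unit return, and the randomized rounding of the last step attains the value of the fractional (LP) relaxation. Since the LP relaxation upper-bounds every integer allocation, the greedy output is then optimal in expectation.

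\emph{Step 1 (only odd budgets matter).} With random tie-breaking, for any $k\ge1$ we have $\mathrm{SC}(\theta;2k)=\mathrm{SC}(\theta;2k-1)$. To see this, condition on the first $2k-1$ votes and note that the $2k$-th vote only creates or breaks a tie, and ties are resolved by a fair coin. Any even budget is therefore weakly dominated by the odd budget one below it, which is cheaper. This justifies the algorithm's increments: $0\to1$ at cost $p_j$, then steps of $+2$ at cost $2p_j$.

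The quantity $\delta_j$ has to be read with care. Literally, $R(\theta,B)=\mathrm{SC}(\theta;B+1)-\mathrm{SC}(\theta;B)$ vanishes at odd $B$, so the greedy would stall. I will therefore interpret $\delta_j$ as the per-unit gain of the next admissible increment, $\tfrac12\bigl(\mathrm{SC}(\theta^j;B_j+2)-\mathrm{SC}(\theta^j;B_j)\bigr)$ for odd $B_j$, and $R(\theta^j,0)=|\theta^j-\tfrac12|$ at $B_j=0$. Because the objective weights $\hat p_j\,\mathrm{SC}$ and the cost $\hat p_j B_j$ carry the same factor $\hat p_j$, the gain-to-cost ratio of each increment is independent of $\hat p_j$. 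This is why comparing the $\delta_j$ alone is the correct greedy criterion.

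\emph{Step 2 (diminishing returns).} Take $\theta>\tfrac12$ without loss of generality. I would show
\[
\Delta_k:=\mathrm{SC}(\theta;2k+1)-\mathrm{SC}(\theta;2k-1)=(2\theta-1)\binom{2k-1}{k}\bigl(\theta(1-\theta)\bigr)^k .
\]
The argument conditions on the state after $2k-1$ votes: the majority changes only if the vote margin is exactly $\pm1$ and the next two votes both go the other way.

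From this formula,
\[
\frac{\Delta_{k+1}}{\Delta_k}=4\theta(1-\theta)\,\frac{2k+1}{2k+2}<1 ,
\]
so the per-unit gains $\Delta_k/2$ strictly decrease in $k$. It remains to compare the first rung with the second: $\Delta_1/2=\tfrac12(2\theta-1)\theta(1-\theta)\le\tfrac18(2\theta-1)<\theta-\tfrac12=R(\theta,0)$. Hence every ladder is concave on its admissible lattice.

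\emph{Step 3 (greedy equals LP optimum).} Relax each ladder to allow a fractional amount of its next increment. Under concavity, the fractional problem is solved by sorting all increments across grids by per-unit gain and filling them until the budget $\bar B_{\mathrm{total}}$ is exhausted. This follows from a standard exchange argument, or equivalently from Lagrangian duality with a threshold multiplier $\lambda$.

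Concavity guarantees that within each grid the increments are taken in ladder order, so Algorithm~\ref{alg:greedy}, which always picks the largest current $\delta_j$, produces exactly this sorted prefix. Only the last increment may be fractional. The randomized rounding rule (see Appendix~\ref{app:dynamic-streaming}) takes that increment with probability equal to its fractional part and otherwise stops one rung below. This meets the budget in expectation and achieves the LP value in expectation, and the LP value is at least the value of every feasible integer allocation. This proves the theorem.

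I expect the main obstacle to be Step 2. Deriving the closed form of $\Delta_k$ cleanly, and handling the $0\to1$ rung and the even-budget and tie-breaking conventions consistently with how $\delta_j$ is defined in the algorithm, is where the argument can go wrong. Once concavity is established, Step 3 is routine.
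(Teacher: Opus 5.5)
Your proposal is correct and is essentially the paper's argument. Your $\Delta_k$ is the paper's $R(\theta,2k)$ (using $\binom{2k}{k}=2\binom{2k-1}{k}$), you get the same ratio $4\theta(1-\theta)\frac{2k+1}{2k+2}<1$, and the paper likewise cancels $\hat p_j$, picks greedily from nonincreasing per-grid lists, and randomizes the last increment. Your refinements are what make the paper's informal ``greedy takes the largest elements'' step rigorous: you certify optimality via the concave LP relaxation, and you read $\delta_j$ as the \emph{per-unit} gain of the next admissible increment. The latter matters because the algorithm as written sets $\delta_{j^\star}\gets R(\theta_{j^\star},B_{j^\star})$ at an odd budget, which is $0$, and the paper's proof compares the two-unit gain $R(\theta,2m)$ unnormalized against the one-unit gain $R(\theta,0)$. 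That comparison is strictly suboptimal for, e.g., $\hat p=(\tfrac12,\tfrac12)$, $\theta=(0.9,0.55)$, $\bar B_{\mathrm{total}}=1$, where it attains $0.718$ in expectation versus $0.725$ for $B=(1,1)$.
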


\begin{figure*}[!t]
    \centering
    \includegraphics[width=1\linewidth]{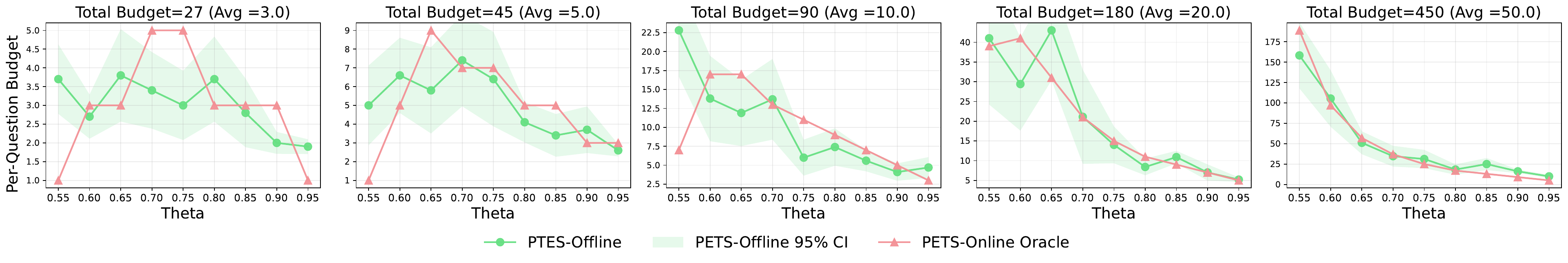}
    \caption{Budget allocation plan of the offline and online settings on 9 simulated binary choice questions, $\mathcal{Y}=\{1,2\}$. Each question is associated with a  $\theta=\max(\theta_1,\theta_2)$, and larger theta indicates easier questions. }
    \label{fig:budget plan}
\end{figure*}

\subsubsection{Connection with the Offline Case}
Although the offline and online cases rely on different allocation procedures, their budget proportions become nearly identical as the budget grows. As shown in Figure~\ref{fig:budget plan}, increasing the average per-question budget leads the two allocations to converge to similar proportions. Further discussion and theoretical support are provided in Appendix~\ref{app:equiv}.

\begin{table*}[!t]
\centering
\caption{\method-Offline Results. We compare \method against confidence-guided and uniform sample budget allocation. ``(conf)'' denotes the trace confidence-weighted variant. \# traces reports the number of sampled traces required to reach full consistency (consistency $=1$); Con. and Acc. report each method’s achieved consistency and accuracy evaluated at the trace count where \method-Off. attains consistency 1.}
\label{tab:offline_results}
\resizebox{\textwidth}{!}{%
\begin{tabular}{@{}l|l|ccc|ccc|ccc|ccc|ccc@{}}
\toprule
 &
   &
  \multicolumn{3}{c|}{Qwen3-4B} &
  \multicolumn{3}{c|}{Qwen3-30B} &
  \multicolumn{3}{c|}{Qwen-Long} &
  \multicolumn{3}{c|}{GPT-20B} &
  \multicolumn{3}{c}{GPT-120B} \\ \cmidrule(l){3-17}
\multirow{-2}{*}{Dataset} &
  \multirow{-2}{*}{Method} &
  \# traces $\downarrow$ & Con. $\uparrow$ & Acc. $\uparrow$ &
  \# traces & Con. & Acc. &
  \# traces & Con. & Acc. &
  \# traces & Con. & Acc. &
  \# traces & Con. & Acc. \\ \midrule
  & \cellcolor[HTML]{F7F0FF}PETS-Off. & \cellcolor[HTML]{F7F0FF}\textbf{2780} & \cellcolor[HTML]{F7F0FF}\textbf{1.00} & \cellcolor[HTML]{F7F0FF}69.7 & \cellcolor[HTML]{F7F0FF}\textbf{2607} & \cellcolor[HTML]{F7F0FF}\textbf{1.00} & \cellcolor[HTML]{F7F0FF}71.8 & \cellcolor[HTML]{F7F0FF}\textbf{2513} & \cellcolor[HTML]{F7F0FF}\textbf{1.00} & \cellcolor[HTML]{F7F0FF}76.3 & \cellcolor[HTML]{F7F0FF}\textbf{3180} & \cellcolor[HTML]{F7F0FF}\textbf{1.00} & \cellcolor[HTML]{F7F0FF}{\ul 75.7} & \cellcolor[HTML]{F7F0FF}\textbf{2580} & \cellcolor[HTML]{F7F0FF}\textbf{1.00} & \cellcolor[HTML]{F7F0FF}\textbf{82.5} \\
  & \cellcolor[HTML]{F7F0FF}PETS-Off. (conf) & \cellcolor[HTML]{F7F0FF}{\ul 3667} & \cellcolor[HTML]{F7F0FF}{\ul 0.99} & \cellcolor[HTML]{F7F0FF}\textbf{70.3} & \cellcolor[HTML]{F7F0FF}{\ul 3687} & \cellcolor[HTML]{F7F0FF}{\ul 0.99} & \cellcolor[HTML]{F7F0FF}{\ul 72.1} & \cellcolor[HTML]{F7F0FF}{\ul 2887} & \cellcolor[HTML]{F7F0FF}{\ul 0.99} & \cellcolor[HTML]{F7F0FF}\textbf{77.0} & \cellcolor[HTML]{F7F0FF}{\ul 3693} & \cellcolor[HTML]{F7F0FF}{\ul 0.99} & \cellcolor[HTML]{F7F0FF}\textbf{76.2} & \cellcolor[HTML]{F7F0FF}{\ul 3540} & \cellcolor[HTML]{F7F0FF}{\ul 0.99} & \cellcolor[HTML]{F7F0FF}{\ul 82.1} \\
  & \cellcolor[HTML]{E8F5E9}Conf. guided & \cellcolor[HTML]{E8F5E9}10652 & \cellcolor[HTML]{E8F5E9}0.96 & \cellcolor[HTML]{E8F5E9}68.5 & \cellcolor[HTML]{E8F5E9}9702 & \cellcolor[HTML]{E8F5E9}0.96 & \cellcolor[HTML]{E8F5E9}71.2 & \cellcolor[HTML]{E8F5E9}8158 & \cellcolor[HTML]{E8F5E9}0.96 & \cellcolor[HTML]{E8F5E9}76.1 & \cellcolor[HTML]{E8F5E9}10058 & \cellcolor[HTML]{E8F5E9}0.96 & \cellcolor[HTML]{E8F5E9}75.4 & \cellcolor[HTML]{E8F5E9}8474 & \cellcolor[HTML]{E8F5E9}0.97 & \cellcolor[HTML]{E8F5E9}81.6 \\
  & \cellcolor[HTML]{F8F9FA}Uniform & \cellcolor[HTML]{F8F9FA}11013 & \cellcolor[HTML]{F8F9FA}0.96 & \cellcolor[HTML]{F8F9FA}68.9 & \cellcolor[HTML]{F8F9FA}10367 & \cellcolor[HTML]{F8F9FA}0.96 & \cellcolor[HTML]{F8F9FA}71.5 & \cellcolor[HTML]{F8F9FA}9973 & \cellcolor[HTML]{F8F9FA}0.96 & \cellcolor[HTML]{F8F9FA}{\ul 76.5} & \cellcolor[HTML]{F8F9FA}10853 & \cellcolor[HTML]{F8F9FA}0.96 & \cellcolor[HTML]{F8F9FA}75.1 & \cellcolor[HTML]{F8F9FA}10393 & \cellcolor[HTML]{F8F9FA}0.97 & \cellcolor[HTML]{F8F9FA}81.5 \\
\multirow{-5}{*}{GPQA} & \cellcolor[HTML]{F8F9FA}Uniform (conf) & \cellcolor[HTML]{F8F9FA}11453 & \cellcolor[HTML]{F8F9FA}0.95 & \cellcolor[HTML]{F8F9FA}{\ul 69.9} & \cellcolor[HTML]{F8F9FA}11607 & \cellcolor[HTML]{F8F9FA}0.95 & \cellcolor[HTML]{F8F9FA}\textbf{72.2} & \cellcolor[HTML]{F8F9FA}11400 & \cellcolor[HTML]{F8F9FA}0.95 & \cellcolor[HTML]{F8F9FA}76.4 & \cellcolor[HTML]{F8F9FA}11193 & \cellcolor[HTML]{F8F9FA}0.96 & \cellcolor[HTML]{F8F9FA}75.6 & \cellcolor[HTML]{F8F9FA}10500 & \cellcolor[HTML]{F8F9FA}0.96 & \cellcolor[HTML]{F8F9FA}80.8 \\ \midrule
  & \cellcolor[HTML]{F7F0FF}PETS-Off. & \cellcolor[HTML]{F7F0FF}\textbf{212} & \cellcolor[HTML]{F7F0FF}\textbf{1.00} & \cellcolor[HTML]{F7F0FF}{\ul 83.3} & \cellcolor[HTML]{F7F0FF}\textbf{190} & \cellcolor[HTML]{F7F0FF}\textbf{1.00} & \cellcolor[HTML]{F7F0FF}\textbf{88.6} & \cellcolor[HTML]{F7F0FF}\textbf{152} & \cellcolor[HTML]{F7F0FF}\textbf{1.00} & \cellcolor[HTML]{F7F0FF}{\ul 90.0} & \cellcolor[HTML]{F7F0FF}\textbf{181} & \cellcolor[HTML]{F7F0FF}\textbf{1.00} & \cellcolor[HTML]{F7F0FF}{\ul 90.0} & \cellcolor[HTML]{F7F0FF}{\ul 212} & \cellcolor[HTML]{F7F0FF}\textbf{1.00} & \cellcolor[HTML]{F7F0FF}\textbf{93.9} \\
  & \cellcolor[HTML]{F7F0FF}PETS-Off. (conf) & \cellcolor[HTML]{F7F0FF}{\ul 257} & \cellcolor[HTML]{F7F0FF}{\ul 0.98} & \cellcolor[HTML]{F7F0FF}\textbf{84.0} & \cellcolor[HTML]{F7F0FF}{\ul 223} & \cellcolor[HTML]{F7F0FF}{\ul 0.97} & \cellcolor[HTML]{F7F0FF}86.7 & \cellcolor[HTML]{F7F0FF}{\ul 203} & \cellcolor[HTML]{F7F0FF}{\ul 0.97} & \cellcolor[HTML]{F7F0FF}\textbf{90.4} & \cellcolor[HTML]{F7F0FF}{\ul 251} & \cellcolor[HTML]{F7F0FF}{\ul 0.97} & \cellcolor[HTML]{F7F0FF}89.7 & \cellcolor[HTML]{F7F0FF}\textbf{211} & \cellcolor[HTML]{F7F0FF}{\ul 0.98} & \cellcolor[HTML]{F7F0FF}{\ul 93.3} \\
  & \cellcolor[HTML]{E8F5E9}Conf. guided & \cellcolor[HTML]{E8F5E9}402 & \cellcolor[HTML]{E8F5E9}0.95 & \cellcolor[HTML]{E8F5E9}81.3 & \cellcolor[HTML]{E8F5E9}312 & \cellcolor[HTML]{E8F5E9}0.95 & \cellcolor[HTML]{E8F5E9}{\ul 87.5} & \cellcolor[HTML]{E8F5E9}234 & \cellcolor[HTML]{E8F5E9}\textbf{1.00} & \cellcolor[HTML]{E8F5E9}{\ul 90.0} & \cellcolor[HTML]{E8F5E9}408 & \cellcolor[HTML]{E8F5E9}{\ul 0.97} & \cellcolor[HTML]{E8F5E9}89.2 & \cellcolor[HTML]{E8F5E9}732 & \cellcolor[HTML]{E8F5E9}0.95 & \cellcolor[HTML]{E8F5E9}91.3 \\
  & \cellcolor[HTML]{F8F9FA}Uniform & \cellcolor[HTML]{F8F9FA}470 & \cellcolor[HTML]{F8F9FA}0.94 & \cellcolor[HTML]{F8F9FA}82.1 & \cellcolor[HTML]{F8F9FA}545 & \cellcolor[HTML]{F8F9FA}0.96 & \cellcolor[HTML]{F8F9FA}86.1 & \cellcolor[HTML]{F8F9FA}288 & \cellcolor[HTML]{F8F9FA}{\ul 0.97} & \cellcolor[HTML]{F8F9FA}89.3 & \cellcolor[HTML]{F8F9FA}410 & \cellcolor[HTML]{F8F9FA}0.95 & \cellcolor[HTML]{F8F9FA}\textbf{90.2} & \cellcolor[HTML]{F8F9FA}681 & \cellcolor[HTML]{F8F9FA}0.95 & \cellcolor[HTML]{F8F9FA}91.3 \\
\multirow{-5}{*}{AIME25} & \cellcolor[HTML]{F8F9FA}Uniform (conf) & \cellcolor[HTML]{F8F9FA}610 & \cellcolor[HTML]{F8F9FA}0.91 & \cellcolor[HTML]{F8F9FA}{\ul 83.3} & \cellcolor[HTML]{F8F9FA}763 & \cellcolor[HTML]{F8F9FA}0.93 & \cellcolor[HTML]{F8F9FA}85.6 & \cellcolor[HTML]{F8F9FA}752 & \cellcolor[HTML]{F8F9FA}0.94 & \cellcolor[HTML]{F8F9FA}88.9 & \cellcolor[HTML]{F8F9FA}618 & \cellcolor[HTML]{F8F9FA}0.94 & \cellcolor[HTML]{F8F9FA}89.0 & \cellcolor[HTML]{F8F9FA}911 & \cellcolor[HTML]{F8F9FA}0.94 & \cellcolor[HTML]{F8F9FA}91.9 \\ \midrule
  & \cellcolor[HTML]{F7F0FF}PETS-Off. & \cellcolor[HTML]{F7F0FF}\textbf{259} & \cellcolor[HTML]{F7F0FF}\textbf{1.00} & \cellcolor[HTML]{F7F0FF}63.1 & \cellcolor[HTML]{F7F0FF}{\ul 135} & \cellcolor[HTML]{F7F0FF}\textbf{1.00} & \cellcolor[HTML]{F7F0FF}\textbf{70.0} & \cellcolor[HTML]{F7F0FF}\textbf{83} & \cellcolor[HTML]{F7F0FF}\textbf{1.00} & \cellcolor[HTML]{F7F0FF}\textbf{70.0} & \cellcolor[HTML]{F7F0FF}\textbf{200} & \cellcolor[HTML]{F7F0FF}\textbf{1.00} & \cellcolor[HTML]{F7F0FF}\textbf{73.3} & \cellcolor[HTML]{F7F0FF}\textbf{184} & \cellcolor[HTML]{F7F0FF}\textbf{1.00} & \cellcolor[HTML]{F7F0FF}\textbf{74.4} \\
  & \cellcolor[HTML]{F7F0FF}PETS-Off. (conf) & \cellcolor[HTML]{F7F0FF}{\ul 369} & \cellcolor[HTML]{F7F0FF}{\ul 0.96} & \cellcolor[HTML]{F7F0FF}\textbf{66.0} & \cellcolor[HTML]{F7F0FF}\textbf{120} & \cellcolor[HTML]{F7F0FF}\textbf{1.00} & \cellcolor[HTML]{F7F0FF}{\ul 69.9} & \cellcolor[HTML]{F7F0FF}91 & \cellcolor[HTML]{F7F0FF}{\ul 0.99} & \cellcolor[HTML]{F7F0FF}{\ul 69.9} & \cellcolor[HTML]{F7F0FF}{\ul 232} & \cellcolor[HTML]{F7F0FF}0.97 & \cellcolor[HTML]{F7F0FF}{\ul 72.6} & \cellcolor[HTML]{F7F0FF}{\ul 218} & \cellcolor[HTML]{F7F0FF}{\ul 0.99} & \cellcolor[HTML]{F7F0FF}{\ul 73.7} \\
  & \cellcolor[HTML]{E8F5E9}Conf. guided & \cellcolor[HTML]{E8F5E9}648 & \cellcolor[HTML]{E8F5E9}0.95 & \cellcolor[HTML]{E8F5E9}62.9 & \cellcolor[HTML]{E8F5E9}138 & \cellcolor[HTML]{E8F5E9}{\ul 0.99} & \cellcolor[HTML]{E8F5E9}\textbf{70.0} & \cellcolor[HTML]{E8F5E9}{\ul 84} & \cellcolor[HTML]{E8F5E9}\textbf{1.00} & \cellcolor[HTML]{E8F5E9}\textbf{70.0} & \cellcolor[HTML]{E8F5E9}546 & \cellcolor[HTML]{E8F5E9}{\ul 0.98} & \cellcolor[HTML]{E8F5E9}72.5 & \cellcolor[HTML]{E8F5E9}378 & \cellcolor[HTML]{E8F5E9}0.96 & \cellcolor[HTML]{E8F5E9}73.3 \\
  & \cellcolor[HTML]{F8F9FA}Uniform & \cellcolor[HTML]{F8F9FA}861 & \cellcolor[HTML]{F8F9FA}0.94 & \cellcolor[HTML]{F8F9FA}63.0 & \cellcolor[HTML]{F8F9FA}202 & \cellcolor[HTML]{F8F9FA}0.98 & \cellcolor[HTML]{F8F9FA}69.6 & \cellcolor[HTML]{F8F9FA}96 & \cellcolor[HTML]{F8F9FA}{\ul 0.99} & \cellcolor[HTML]{F8F9FA}{\ul 69.9} & \cellcolor[HTML]{F8F9FA}665 & \cellcolor[HTML]{F8F9FA}0.96 & \cellcolor[HTML]{F8F9FA}71.9 & \cellcolor[HTML]{F8F9FA}448 & \cellcolor[HTML]{F8F9FA}0.97 & \cellcolor[HTML]{F8F9FA}{\ul 73.7} \\
\multirow{-5}{*}{AIME24} & \cellcolor[HTML]{F8F9FA}Uniform (conf) & \cellcolor[HTML]{F8F9FA}942 & \cellcolor[HTML]{F8F9FA}0.92 & \cellcolor[HTML]{F8F9FA}{\ul 63.4} & \cellcolor[HTML]{F8F9FA}179 & \cellcolor[HTML]{F8F9FA}0.97 & \cellcolor[HTML]{F8F9FA}68.6 & \cellcolor[HTML]{F8F9FA}115 & \cellcolor[HTML]{F8F9FA}0.98 & \cellcolor[HTML]{F8F9FA}69.7 & \cellcolor[HTML]{F8F9FA}1206 & \cellcolor[HTML]{F8F9FA}0.94 & \cellcolor[HTML]{F8F9FA}70.8 & \cellcolor[HTML]{F8F9FA}535 & \cellcolor[HTML]{F8F9FA}0.95 & \cellcolor[HTML]{F8F9FA}71.9 \\ \midrule
  & \cellcolor[HTML]{F7F0FF}PETS-Off. & \cellcolor[HTML]{F7F0FF}\textbf{464} & \cellcolor[HTML]{F7F0FF}\textbf{1.00} & \cellcolor[HTML]{F7F0FF}{\ul 51.6} & \cellcolor[HTML]{F7F0FF}{\ul 381} & \cellcolor[HTML]{F7F0FF}\textbf{1.00} & \cellcolor[HTML]{F7F0FF}64.7 & \cellcolor[HTML]{F7F0FF}\textbf{363} & \cellcolor[HTML]{F7F0FF}\textbf{1.00} & \cellcolor[HTML]{F7F0FF}{\ul 71.7} & \cellcolor[HTML]{F7F0FF}{\ul 329} & \cellcolor[HTML]{F7F0FF}\textbf{1.00} & \cellcolor[HTML]{F7F0FF}{\ul 83.3} & \cellcolor[HTML]{F7F0FF}{\ul 329} & \cellcolor[HTML]{F7F0FF}\textbf{1.00} & \cellcolor[HTML]{F7F0FF}{\ul 83.3} \\
  & \cellcolor[HTML]{F7F0FF}PETS-Off. (conf) & \cellcolor[HTML]{F7F0FF}{\ul 579} & \cellcolor[HTML]{F7F0FF}{\ul 0.96} & \cellcolor[HTML]{F7F0FF}\textbf{52.0} & \cellcolor[HTML]{F7F0FF}\textbf{361} & \cellcolor[HTML]{F7F0FF}{\ul 0.97} & \cellcolor[HTML]{F7F0FF}\textbf{65.2} & \cellcolor[HTML]{F7F0FF}{\ul 371} & \cellcolor[HTML]{F7F0FF}{\ul 0.97} & \cellcolor[HTML]{F7F0FF}\textbf{73.2} & \cellcolor[HTML]{F7F0FF}\textbf{324} & \cellcolor[HTML]{F7F0FF}{\ul 0.99} & \cellcolor[HTML]{F7F0FF}83.1 & \cellcolor[HTML]{F7F0FF}\textbf{324} & \cellcolor[HTML]{F7F0FF}{\ul 0.99} & \cellcolor[HTML]{F7F0FF}83.1 \\
  & \cellcolor[HTML]{E8F5E9}Conf. guided & \cellcolor[HTML]{E8F5E9}1206 & \cellcolor[HTML]{E8F5E9}0.92 & \cellcolor[HTML]{E8F5E9}48.3 & \cellcolor[HTML]{E8F5E9}894 & \cellcolor[HTML]{E8F5E9}0.95 & \cellcolor[HTML]{E8F5E9}58.1 & \cellcolor[HTML]{E8F5E9}984 & \cellcolor[HTML]{E8F5E9}0.91 & \cellcolor[HTML]{E8F5E9}64.6 & \cellcolor[HTML]{E8F5E9}732 & \cellcolor[HTML]{E8F5E9}0.94 & \cellcolor[HTML]{E8F5E9}80.5 & \cellcolor[HTML]{E8F5E9}882 & \cellcolor[HTML]{E8F5E9}0.93 & \cellcolor[HTML]{E8F5E9}81.1 \\
  & \cellcolor[HTML]{F8F9FA}Uniform & \cellcolor[HTML]{F8F9FA}1133 & \cellcolor[HTML]{F8F9FA}0.90 & \cellcolor[HTML]{F8F9FA}50.0 & \cellcolor[HTML]{F8F9FA}1219 & \cellcolor[HTML]{F8F9FA}0.91 & \cellcolor[HTML]{F8F9FA}64.7 & \cellcolor[HTML]{F8F9FA}1227 & \cellcolor[HTML]{F8F9FA}0.91 & \cellcolor[HTML]{F8F9FA}68.8 & \cellcolor[HTML]{F8F9FA}916 & \cellcolor[HTML]{F8F9FA}0.94 & \cellcolor[HTML]{F8F9FA}\textbf{83.6} & \cellcolor[HTML]{F8F9FA}916 & \cellcolor[HTML]{F8F9FA}0.94 & \cellcolor[HTML]{F8F9FA}\textbf{83.6} \\
\multirow{-5}{*}{HMMT} & \cellcolor[HTML]{F8F9FA}Uniform (conf) & \cellcolor[HTML]{F8F9FA}1383 & \cellcolor[HTML]{F8F9FA}0.87 & \cellcolor[HTML]{F8F9FA}50.6 & \cellcolor[HTML]{F8F9FA}1089 & \cellcolor[HTML]{F8F9FA}0.91 & \cellcolor[HTML]{F8F9FA}{\ul 65.1} & \cellcolor[HTML]{F8F9FA}1165 & \cellcolor[HTML]{F8F9FA}0.90 & \cellcolor[HTML]{F8F9FA}{\ul 71.7} & \cellcolor[HTML]{F8F9FA}902 & \cellcolor[HTML]{F8F9FA}0.93 & \cellcolor[HTML]{F8F9FA}82.6 & \cellcolor[HTML]{F8F9FA}902 & \cellcolor[HTML]{F8F9FA}0.93 & \cellcolor[HTML]{F8F9FA}82.6 \\ \midrule
  & \cellcolor[HTML]{F7F0FF}PETS-Off. & \cellcolor[HTML]{F7F0FF}\textbf{280} & \cellcolor[HTML]{F7F0FF}\textbf{1.00} & \cellcolor[HTML]{F7F0FF}75.4 & \cellcolor[HTML]{F7F0FF}\textbf{148} & \cellcolor[HTML]{F7F0FF}\textbf{1.00} & \cellcolor[HTML]{F7F0FF}\textbf{86.7} & \cellcolor[HTML]{F7F0FF}\textbf{149} & \cellcolor[HTML]{F7F0FF}\textbf{1.00} & \cellcolor[HTML]{F7F0FF}86.7 & \cellcolor[HTML]{F7F0FF}{\ul 253} & \cellcolor[HTML]{F7F0FF}\textbf{1.00} & \cellcolor[HTML]{F7F0FF}92.1 & \cellcolor[HTML]{F7F0FF}{\ul 253} & \cellcolor[HTML]{F7F0FF}\textbf{1.00} & \cellcolor[HTML]{F7F0FF}{\ul 92.1} \\
  & \cellcolor[HTML]{F7F0FF}PETS-Off. (conf) & \cellcolor[HTML]{F7F0FF}{\ul 292} & \cellcolor[HTML]{F7F0FF}{\ul 0.98} & \cellcolor[HTML]{F7F0FF}\textbf{77.3} & \cellcolor[HTML]{F7F0FF}{\ul 162} & \cellcolor[HTML]{F7F0FF}{\ul 0.98} & \cellcolor[HTML]{F7F0FF}{\ul 86.1} & \cellcolor[HTML]{F7F0FF}{\ul 150} & \cellcolor[HTML]{F7F0FF}{\ul 0.98} & \cellcolor[HTML]{F7F0FF}\textbf{87.8} & \cellcolor[HTML]{F7F0FF}\textbf{217} & \cellcolor[HTML]{F7F0FF}{\ul 0.98} & \cellcolor[HTML]{F7F0FF}{\ul 92.4} & \cellcolor[HTML]{F7F0FF}\textbf{217} & \cellcolor[HTML]{F7F0FF}{\ul 0.98} & \cellcolor[HTML]{F7F0FF}\textbf{92.4} \\
  & \cellcolor[HTML]{E8F5E9}Conf. guided & \cellcolor[HTML]{E8F5E9}654 & \cellcolor[HTML]{E8F5E9}0.94 & \cellcolor[HTML]{E8F5E9}71.3 & \cellcolor[HTML]{E8F5E9}252 & \cellcolor[HTML]{E8F5E9}0.95 & \cellcolor[HTML]{E8F5E9}80.8 & \cellcolor[HTML]{E8F5E9}348 & \cellcolor[HTML]{E8F5E9}{\ul 0.98} & \cellcolor[HTML]{E8F5E9}85.0 & \cellcolor[HTML]{E8F5E9}336 & \cellcolor[HTML]{E8F5E9}0.96 & \cellcolor[HTML]{E8F5E9}\textbf{93.8} & \cellcolor[HTML]{E8F5E9}552 & \cellcolor[HTML]{E8F5E9}0.95 & \cellcolor[HTML]{E8F5E9}90.5 \\
  & \cellcolor[HTML]{F8F9FA}Uniform & \cellcolor[HTML]{F8F9FA}826 & \cellcolor[HTML]{F8F9FA}0.94 & \cellcolor[HTML]{F8F9FA}73.2 & \cellcolor[HTML]{F8F9FA}237 & \cellcolor[HTML]{F8F9FA}0.96 & \cellcolor[HTML]{F8F9FA}83.7 & \cellcolor[HTML]{F8F9FA}320 & \cellcolor[HTML]{F8F9FA}0.97 & \cellcolor[HTML]{F8F9FA}{\ul 87.3} & \cellcolor[HTML]{F8F9FA}776 & \cellcolor[HTML]{F8F9FA}0.94 & \cellcolor[HTML]{F8F9FA}90.1 & \cellcolor[HTML]{F8F9FA}776 & \cellcolor[HTML]{F8F9FA}0.94 & \cellcolor[HTML]{F8F9FA}90.1 \\
\multirow{-5}{*}{BRUMO} & \cellcolor[HTML]{F8F9FA}Uniform (conf) & \cellcolor[HTML]{F8F9FA}589 & \cellcolor[HTML]{F8F9FA}0.94 & \cellcolor[HTML]{F8F9FA}{\ul 75.7} & \cellcolor[HTML]{F8F9FA}310 & \cellcolor[HTML]{F8F9FA}0.95 & \cellcolor[HTML]{F8F9FA}83.6 & \cellcolor[HTML]{F8F9FA}421 & \cellcolor[HTML]{F8F9FA}0.97 & \cellcolor[HTML]{F8F9FA}87.2 & \cellcolor[HTML]{F8F9FA}574 & \cellcolor[HTML]{F8F9FA}0.95 & \cellcolor[HTML]{F8F9FA}90.7 & \cellcolor[HTML]{F8F9FA}574 & \cellcolor[HTML]{F8F9FA}0.95 & \cellcolor[HTML]{F8F9FA}90.7 \\ \bottomrule
\end{tabular}%
}
\end{table*}

\begin{figure*}[!t]
    \centering
    \includegraphics[width=1\linewidth]{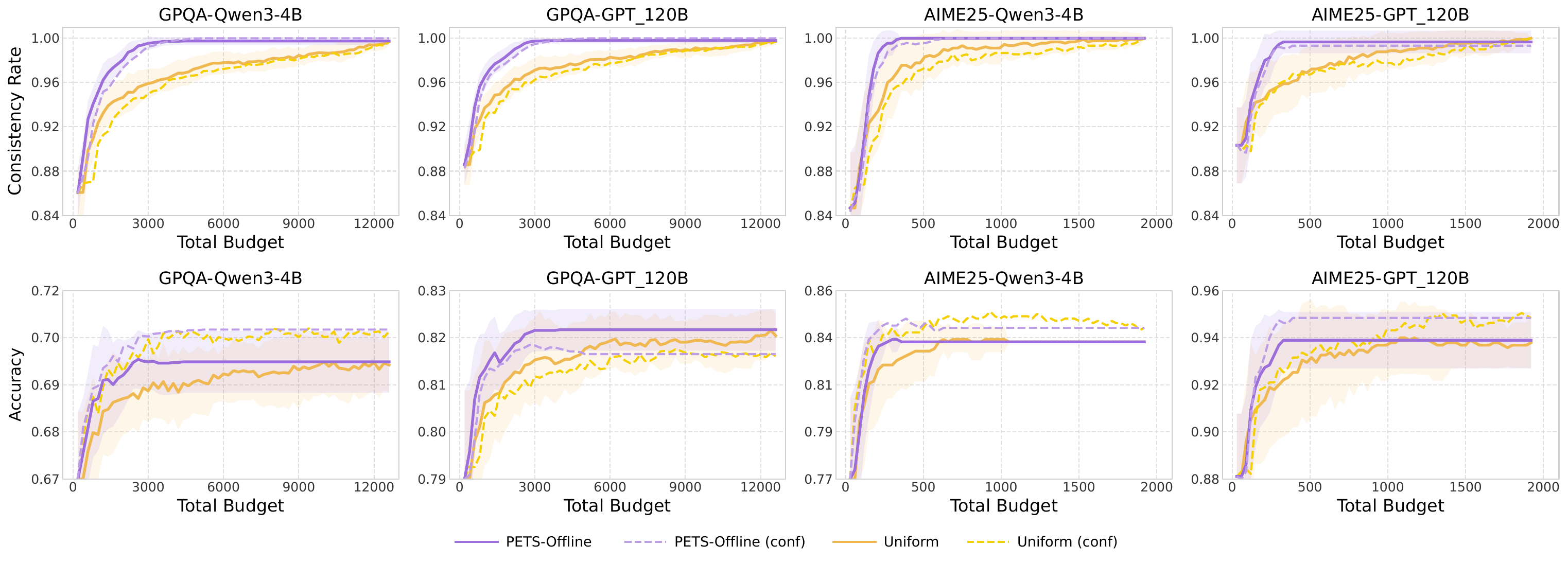}
    \caption{Budget allocation curve in the offline setting. ``(conf)'' denotes the trace confidence-weighted variant. Consistency is computed within each matched-variant comparison group: \method-Offline vs. Uniform, and \method-Offline (conf) vs. Uniform (conf).}
    \label{fig:offline_mix}
\end{figure*}

\begin{table*}[t]
\centering
\caption{\method-Online Results. We compare \method against confidence-guided and uniform sample budget allocation. ``(conf)'' denotes the trace confidence-weighted variant. \# traces reports the number of sampled traces required to reach full consistency (consistency $=1$); Con. and Acc. report each method’s achieved consistency and accuracy evaluated at the trace count where \method-On. attains consistency 1.}
\label{tab:online_results}
\resizebox{\textwidth}{!}{%
\begin{tabular}{@{}l|l|ccc|ccc|ccc|ccc|ccc@{}}
\toprule
 & & \multicolumn{3}{c|}{Qwen3-4B} & \multicolumn{3}{c|}{Qwen3-30B} & \multicolumn{3}{c|}{Qwen-Long} & \multicolumn{3}{c|}{GPT-20B} & \multicolumn{3}{c}{GPT-120B} \\ \cmidrule(l){3-17}
\multirow{-2}{*}{Dataset} & \multirow{-2}{*}{Method} & \# traces $\downarrow$ & Con. $\uparrow$ & Acc. $\uparrow$ & \# traces & Con. & Acc. & \# traces & Con. & Acc. & \# traces & Con. & Acc. & \# traces & Con. & Acc. \\ \midrule
  & \cellcolor[HTML]{F7F0FF}PETS-On. & \cellcolor[HTML]{F7F0FF}\textbf{4662} & \cellcolor[HTML]{F7F0FF}\textbf{1.00} & \cellcolor[HTML]{F7F0FF}68.8 & \cellcolor[HTML]{F7F0FF}\textbf{3826} & \cellcolor[HTML]{F7F0FF}\textbf{1.00} & \cellcolor[HTML]{F7F0FF}71.6 & \cellcolor[HTML]{F7F0FF}{\ul 4852} & \cellcolor[HTML]{F7F0FF}\textbf{1.00} & \cellcolor[HTML]{F7F0FF}76.3 & \cellcolor[HTML]{F7F0FF}\textbf{6826} & \cellcolor[HTML]{F7F0FF}\textbf{1.00} & \cellcolor[HTML]{F7F0FF}75.9 & \cellcolor[HTML]{F7F0FF}\textbf{4172} & \cellcolor[HTML]{F7F0FF}\textbf{1.00} & \cellcolor[HTML]{F7F0FF}\textbf{82.4} \\
  & \cellcolor[HTML]{F7F0FF}PETS-On. (conf) & \cellcolor[HTML]{F7F0FF}{\ul 4888} & \cellcolor[HTML]{F7F0FF}\textbf{1.00} & \cellcolor[HTML]{F7F0FF}\textbf{69.6} & \cellcolor[HTML]{F7F0FF}{\ul 4357} & \cellcolor[HTML]{F7F0FF}{\ul 0.99} & \cellcolor[HTML]{F7F0FF}{\ul 71.8} & \cellcolor[HTML]{F7F0FF}\textbf{4146} & \cellcolor[HTML]{F7F0FF}\textbf{1.00} & \cellcolor[HTML]{F7F0FF}\textbf{76.9} & \cellcolor[HTML]{F7F0FF}{\ul 7269} & \cellcolor[HTML]{F7F0FF}{\ul 0.99} & \cellcolor[HTML]{F7F0FF}{\ul 76.1} & \cellcolor[HTML]{F7F0FF}{\ul 5094} & \cellcolor[HTML]{F7F0FF}{\ul 0.99} & \cellcolor[HTML]{F7F0FF}81.6 \\
  & \cellcolor[HTML]{E8F5E9}Conf. guided & \cellcolor[HTML]{E8F5E9}7623 & \cellcolor[HTML]{E8F5E9}{\ul 0.97} & \cellcolor[HTML]{E8F5E9}68.2 & \cellcolor[HTML]{E8F5E9}7361 & \cellcolor[HTML]{E8F5E9}0.98 & \cellcolor[HTML]{E8F5E9}70.5 & \cellcolor[HTML]{E8F5E9}7782 & \cellcolor[HTML]{E8F5E9}{\ul 0.98} & \cellcolor[HTML]{E8F5E9}75.4 & \cellcolor[HTML]{E8F5E9}9355 & \cellcolor[HTML]{E8F5E9}0.98 & \cellcolor[HTML]{E8F5E9}74.3 & \cellcolor[HTML]{E8F5E9}7162 & \cellcolor[HTML]{E8F5E9}0.97 & \cellcolor[HTML]{E8F5E9}80.5 \\
  & \cellcolor[HTML]{F8F9FA}Uniform & \cellcolor[HTML]{F8F9FA}9520 & \cellcolor[HTML]{F8F9FA}{\ul 0.97} & \cellcolor[HTML]{F8F9FA}68.1 & \cellcolor[HTML]{F8F9FA}8456 & \cellcolor[HTML]{F8F9FA}0.97 & \cellcolor[HTML]{F8F9FA}71.2 & \cellcolor[HTML]{F8F9FA}9195 & \cellcolor[HTML]{F8F9FA}0.97 & \cellcolor[HTML]{F8F9FA}76.0 & \cellcolor[HTML]{F8F9FA}9699 & \cellcolor[HTML]{F8F9FA}0.98 & \cellcolor[HTML]{F8F9FA}{\ul 76.1} & \cellcolor[HTML]{F8F9FA}8109 & \cellcolor[HTML]{F8F9FA}0.98 & \cellcolor[HTML]{F8F9FA}{\ul 81.8} \\
\multirow{-5}{*}{GPQA} & \cellcolor[HTML]{F8F9FA}Uniform (conf) & \cellcolor[HTML]{F8F9FA}9856 & \cellcolor[HTML]{F8F9FA}{\ul 0.97} & \cellcolor[HTML]{F8F9FA}{\ul 69.4} & \cellcolor[HTML]{F8F9FA}9257 & \cellcolor[HTML]{F8F9FA}0.96 & \cellcolor[HTML]{F8F9FA}\textbf{72.0} & \cellcolor[HTML]{F8F9FA}9526 & \cellcolor[HTML]{F8F9FA}0.97 & \cellcolor[HTML]{F8F9FA}{\ul 76.4} & \cellcolor[HTML]{F8F9FA}9895 & \cellcolor[HTML]{F8F9FA}0.98 & \cellcolor[HTML]{F8F9FA}\textbf{76.3} & \cellcolor[HTML]{F8F9FA}9593 & \cellcolor[HTML]{F8F9FA}0.97 & \cellcolor[HTML]{F8F9FA}81.4 \\ \midrule
  & \cellcolor[HTML]{F7F0FF}PETS-On. & \cellcolor[HTML]{F7F0FF}\textbf{194} & \cellcolor[HTML]{F7F0FF}\textbf{1.00} & \cellcolor[HTML]{F7F0FF}{\ul 75.0} & \cellcolor[HTML]{F7F0FF}504 & \cellcolor[HTML]{F7F0FF}\textbf{1.00} & \cellcolor[HTML]{F7F0FF}\textbf{83.5} & \cellcolor[HTML]{F7F0FF}{\ul 218} & \cellcolor[HTML]{F7F0FF}\textbf{1.00} & \cellcolor[HTML]{F7F0FF}85.0 & \cellcolor[HTML]{F7F0FF}\textbf{195} & \cellcolor[HTML]{F7F0FF}\textbf{1.00} & \cellcolor[HTML]{F7F0FF}85.0 & \cellcolor[HTML]{F7F0FF}454 & \cellcolor[HTML]{F7F0FF}\textbf{1.00} & \cellcolor[HTML]{F7F0FF}{\ul 91.7} \\
  & \cellcolor[HTML]{F7F0FF}PETS-On. (conf) & \cellcolor[HTML]{F7F0FF}287 & \cellcolor[HTML]{F7F0FF}0.95 & \cellcolor[HTML]{F7F0FF}\textbf{76.3} & \cellcolor[HTML]{F7F0FF}535 & \cellcolor[HTML]{F7F0FF}{\ul 0.97} & \cellcolor[HTML]{F7F0FF}{\ul 82.5} & \cellcolor[HTML]{F7F0FF}681 & \cellcolor[HTML]{F7F0FF}0.96 & \cellcolor[HTML]{F7F0FF}{\ul 85.7} & \cellcolor[HTML]{F7F0FF}330 & \cellcolor[HTML]{F7F0FF}{\ul 0.96} & \cellcolor[HTML]{F7F0FF}84.8 & \cellcolor[HTML]{F7F0FF}493 & \cellcolor[HTML]{F7F0FF}{\ul 0.98} & \cellcolor[HTML]{F7F0FF}\textbf{92.3} \\
  & \cellcolor[HTML]{E8F5E9}Conf. guided & \cellcolor[HTML]{E8F5E9}{\ul 232} & \cellcolor[HTML]{E8F5E9}{\ul 0.96} & \cellcolor[HTML]{E8F5E9}{\ul 75.0} & \cellcolor[HTML]{E8F5E9}\textbf{272} & \cellcolor[HTML]{E8F5E9}0.96 & \cellcolor[HTML]{E8F5E9}{\ul 82.5} & \cellcolor[HTML]{E8F5E9}\textbf{146} & \cellcolor[HTML]{E8F5E9}{\ul 0.97} & \cellcolor[HTML]{E8F5E9}\textbf{86.0} & \cellcolor[HTML]{E8F5E9}385 & \cellcolor[HTML]{E8F5E9}0.94 & \cellcolor[HTML]{E8F5E9}84.5 & \cellcolor[HTML]{E8F5E9}\textbf{232} & \cellcolor[HTML]{E8F5E9}0.96 & \cellcolor[HTML]{E8F5E9}89.5 \\
  & \cellcolor[HTML]{F8F9FA}Uniform & \cellcolor[HTML]{F8F9FA}290 & \cellcolor[HTML]{F8F9FA}0.95 & \cellcolor[HTML]{F8F9FA}74.8 & \cellcolor[HTML]{F8F9FA}{\ul 415} & \cellcolor[HTML]{F8F9FA}0.95 & \cellcolor[HTML]{F8F9FA}81.0 & \cellcolor[HTML]{F8F9FA}229 & \cellcolor[HTML]{F8F9FA}0.95 & \cellcolor[HTML]{F8F9FA}84.7 & \cellcolor[HTML]{F8F9FA}{\ul 221} & \cellcolor[HTML]{F8F9FA}{\ul 0.96} & \cellcolor[HTML]{F8F9FA}\textbf{86.0} & \cellcolor[HTML]{F8F9FA}{\ul 416} & \cellcolor[HTML]{F8F9FA}0.95 & \cellcolor[HTML]{F8F9FA}89.2 \\
\multirow{-5}{*}{AIME25} & \cellcolor[HTML]{F8F9FA}Uniform (conf) & \cellcolor[HTML]{F8F9FA}452 & \cellcolor[HTML]{F8F9FA}0.92 & \cellcolor[HTML]{F8F9FA}\textbf{76.3} & \cellcolor[HTML]{F8F9FA}486 & \cellcolor[HTML]{F8F9FA}0.94 & \cellcolor[HTML]{F8F9FA}81.2 & \cellcolor[HTML]{F8F9FA}495 & \cellcolor[HTML]{F8F9FA}0.93 & \cellcolor[HTML]{F8F9FA}83.5 & \cellcolor[HTML]{F8F9FA}349 & \cellcolor[HTML]{F8F9FA}0.93 & \cellcolor[HTML]{F8F9FA}{\ul 85.2} & \cellcolor[HTML]{F8F9FA}459 & \cellcolor[HTML]{F8F9FA}0.94 & \cellcolor[HTML]{F8F9FA}89.8 \\ \midrule
  & \cellcolor[HTML]{F7F0FF}PETS-On. & \cellcolor[HTML]{F7F0FF}\textbf{170} & \cellcolor[HTML]{F7F0FF}\textbf{1.00} & \cellcolor[HTML]{F7F0FF}56.2 & \cellcolor[HTML]{F7F0FF}{\ul 81} & \cellcolor[HTML]{F7F0FF}\textbf{1.00} & \cellcolor[HTML]{F7F0FF}\textbf{65.0} & \cellcolor[HTML]{F7F0FF}83 & \cellcolor[HTML]{F7F0FF}\textbf{1.00} & \cellcolor[HTML]{F7F0FF}\textbf{65.0} & \cellcolor[HTML]{F7F0FF}130 & \cellcolor[HTML]{F7F0FF}\textbf{1.00} & \cellcolor[HTML]{F7F0FF}\textbf{70.0} & \cellcolor[HTML]{F7F0FF}{\ul 107} & \cellcolor[HTML]{F7F0FF}\textbf{1.00} & \cellcolor[HTML]{F7F0FF}\textbf{69.8} \\
  & \cellcolor[HTML]{F7F0FF}PETS-On. (conf) & \cellcolor[HTML]{F7F0FF}{\ul 185} & \cellcolor[HTML]{F7F0FF}{\ul 0.96} & \cellcolor[HTML]{F7F0FF}\textbf{60.0} & \cellcolor[HTML]{F7F0FF}{\ul 81} & \cellcolor[HTML]{F7F0FF}\textbf{1.00} & \cellcolor[HTML]{F7F0FF}{\ul 64.8} & \cellcolor[HTML]{F7F0FF}86 & \cellcolor[HTML]{F7F0FF}{\ul 0.99} & \cellcolor[HTML]{F7F0FF}{\ul 64.8} & \cellcolor[HTML]{F7F0FF}\textbf{104} & \cellcolor[HTML]{F7F0FF}{\ul 0.99} & \cellcolor[HTML]{F7F0FF}{\ul 68.8} & \cellcolor[HTML]{F7F0FF}\textbf{92} & \cellcolor[HTML]{F7F0FF}\textbf{1.00} & \cellcolor[HTML]{F7F0FF}{\ul 69.5} \\
  & \cellcolor[HTML]{E8F5E9}Conf. guided & \cellcolor[HTML]{E8F5E9}292 & \cellcolor[HTML]{E8F5E9}{\ul 0.96} & \cellcolor[HTML]{E8F5E9}57.0 & \cellcolor[HTML]{E8F5E9}\textbf{80} & \cellcolor[HTML]{E8F5E9}{\ul 0.99} & \cellcolor[HTML]{E8F5E9}64.5 & \cellcolor[HTML]{E8F5E9}\textbf{80} & \cellcolor[HTML]{E8F5E9}\textbf{1.00} & \cellcolor[HTML]{E8F5E9}\textbf{65.0} & \cellcolor[HTML]{E8F5E9}{\ul 106} & \cellcolor[HTML]{E8F5E9}0.98 & \cellcolor[HTML]{E8F5E9}68.0 & \cellcolor[HTML]{E8F5E9}170 & \cellcolor[HTML]{E8F5E9}{\ul 0.98} & \cellcolor[HTML]{E8F5E9}68.5 \\
  & \cellcolor[HTML]{F8F9FA}Uniform & \cellcolor[HTML]{F8F9FA}393 & \cellcolor[HTML]{F8F9FA}0.94 & \cellcolor[HTML]{F8F9FA}56.8 & \cellcolor[HTML]{F8F9FA}83 & \cellcolor[HTML]{F8F9FA}\textbf{1.00} & \cellcolor[HTML]{F8F9FA}64.7 & \cellcolor[HTML]{F8F9FA}{\ul 82} & \cellcolor[HTML]{F8F9FA}\textbf{1.00} & \cellcolor[HTML]{F8F9FA}\textbf{65.0} & \cellcolor[HTML]{F8F9FA}237 & \cellcolor[HTML]{F8F9FA}0.98 & \cellcolor[HTML]{F8F9FA}67.7 & \cellcolor[HTML]{F8F9FA}205 & \cellcolor[HTML]{F8F9FA}0.97 & \cellcolor[HTML]{F8F9FA}67.0 \\
\multirow{-5}{*}{AIME24} & \cellcolor[HTML]{F8F9FA}Uniform (conf) & \cellcolor[HTML]{F8F9FA}405 & \cellcolor[HTML]{F8F9FA}0.94 & \cellcolor[HTML]{F8F9FA}{\ul 58.8} & \cellcolor[HTML]{F8F9FA}83 & \cellcolor[HTML]{F8F9FA}\textbf{1.00} & \cellcolor[HTML]{F8F9FA}64.7 & \cellcolor[HTML]{F8F9FA}101 & \cellcolor[HTML]{F8F9FA}{\ul 0.99} & \cellcolor[HTML]{F8F9FA}{\ul 64.8} & \cellcolor[HTML]{F8F9FA}171 & \cellcolor[HTML]{F8F9FA}0.98 & \cellcolor[HTML]{F8F9FA}68.0 & \cellcolor[HTML]{F8F9FA}144 & \cellcolor[HTML]{F8F9FA}{\ul 0.98} & \cellcolor[HTML]{F8F9FA}67.7 \\ \midrule
  & \cellcolor[HTML]{F7F0FF}PETS-On. & \cellcolor[HTML]{F7F0FF}\textbf{531} & \cellcolor[HTML]{F7F0FF}\textbf{1.00} & \cellcolor[HTML]{F7F0FF}\textbf{48.5} & \cellcolor[HTML]{F7F0FF}825 & \cellcolor[HTML]{F7F0FF}\textbf{1.00} & \cellcolor[HTML]{F7F0FF}56.2 & \cellcolor[HTML]{F7F0FF}872 & \cellcolor[HTML]{F7F0FF}\textbf{1.00} & \cellcolor[HTML]{F7F0FF}67.0 & \cellcolor[HTML]{F7F0FF}{\ul 248} & \cellcolor[HTML]{F7F0FF}\textbf{1.00} & \cellcolor[HTML]{F7F0FF}{\ul 85.0} & \cellcolor[HTML]{F7F0FF}\textbf{221} & \cellcolor[HTML]{F7F0FF}\textbf{1.00} & \cellcolor[HTML]{F7F0FF}{\ul 85.0} \\
  & \cellcolor[HTML]{F7F0FF}PETS-On. (conf) & \cellcolor[HTML]{F7F0FF}{\ul 615} & \cellcolor[HTML]{F7F0FF}{\ul 0.93} & \cellcolor[HTML]{F7F0FF}{\ul 46.8} & \cellcolor[HTML]{F7F0FF}{\ul 761} & \cellcolor[HTML]{F7F0FF}{\ul 0.95} & \cellcolor[HTML]{F7F0FF}\textbf{56.8} & \cellcolor[HTML]{F7F0FF}1061 & \cellcolor[HTML]{F7F0FF}0.95 & \cellcolor[HTML]{F7F0FF}\textbf{68.8} & \cellcolor[HTML]{F7F0FF}\textbf{233} & \cellcolor[HTML]{F7F0FF}{\ul 0.97} & \cellcolor[HTML]{F7F0FF}\textbf{85.5} & \cellcolor[HTML]{F7F0FF}{\ul 263} & \cellcolor[HTML]{F7F0FF}{\ul 0.96} & \cellcolor[HTML]{F7F0FF}\textbf{86.5} \\
  & \cellcolor[HTML]{E8F5E9}Conf. guided & \cellcolor[HTML]{E8F5E9}727 & \cellcolor[HTML]{E8F5E9}0.91 & \cellcolor[HTML]{E8F5E9}43.0 & \cellcolor[HTML]{E8F5E9}\textbf{623} & \cellcolor[HTML]{E8F5E9}0.94 & \cellcolor[HTML]{E8F5E9}52.5 & \cellcolor[HTML]{E8F5E9}\textbf{666} & \cellcolor[HTML]{E8F5E9}{\ul 0.99} & \cellcolor[HTML]{E8F5E9}64.5 & \cellcolor[HTML]{E8F5E9}436 & \cellcolor[HTML]{E8F5E9}0.91 & \cellcolor[HTML]{E8F5E9}81.5 & \cellcolor[HTML]{E8F5E9}382 & \cellcolor[HTML]{E8F5E9}0.93 & \cellcolor[HTML]{E8F5E9}80.0 \\
  & \cellcolor[HTML]{F8F9FA}Uniform & \cellcolor[HTML]{F8F9FA}792 & \cellcolor[HTML]{F8F9FA}0.89 & \cellcolor[HTML]{F8F9FA}46.2 & \cellcolor[HTML]{F8F9FA}813 & \cellcolor[HTML]{F8F9FA}0.93 & \cellcolor[HTML]{F8F9FA}56.2 & \cellcolor[HTML]{F8F9FA}{\ul 803} & \cellcolor[HTML]{F8F9FA}0.96 & \cellcolor[HTML]{F8F9FA}66.7 & \cellcolor[HTML]{F8F9FA}475 & \cellcolor[HTML]{F8F9FA}0.92 & \cellcolor[HTML]{F8F9FA}83.3 & \cellcolor[HTML]{F8F9FA}309 & \cellcolor[HTML]{F8F9FA}0.95 & \cellcolor[HTML]{F8F9FA}84.8 \\
\multirow{-5}{*}{HMMT} & \cellcolor[HTML]{F8F9FA}Uniform (conf) & \cellcolor[HTML]{F8F9FA}931 & \cellcolor[HTML]{F8F9FA}0.89 & \cellcolor[HTML]{F8F9FA}45.7 & \cellcolor[HTML]{F8F9FA}786 & \cellcolor[HTML]{F8F9FA}0.92 & \cellcolor[HTML]{F8F9FA}{\ul 56.7} & \cellcolor[HTML]{F8F9FA}959 & \cellcolor[HTML]{F8F9FA}0.95 & \cellcolor[HTML]{F8F9FA}{\ul 67.8} & \cellcolor[HTML]{F8F9FA}409 & \cellcolor[HTML]{F8F9FA}0.92 & \cellcolor[HTML]{F8F9FA}83.2 & \cellcolor[HTML]{F8F9FA}458 & \cellcolor[HTML]{F8F9FA}0.91 & \cellcolor[HTML]{F8F9FA}{\ul 85.0} \\ \midrule
  & \cellcolor[HTML]{F7F0FF}PETS-On. & \cellcolor[HTML]{F7F0FF}601 & \cellcolor[HTML]{F7F0FF}\textbf{1.00} & \cellcolor[HTML]{F7F0FF}63.2 & \cellcolor[HTML]{F7F0FF}{\ul 125} & \cellcolor[HTML]{F7F0FF}\textbf{1.00} & \cellcolor[HTML]{F7F0FF}\textbf{80.0} & \cellcolor[HTML]{F7F0FF}\textbf{130} & \cellcolor[HTML]{F7F0FF}\textbf{1.00} & \cellcolor[HTML]{F7F0FF}80.2 & \cellcolor[HTML]{F7F0FF}213 & \cellcolor[HTML]{F7F0FF}\textbf{1.00} & \cellcolor[HTML]{F7F0FF}\textbf{95.0} & \cellcolor[HTML]{F7F0FF}383 & \cellcolor[HTML]{F7F0FF}\textbf{1.00} & \cellcolor[HTML]{F7F0FF}{\ul 88.8} \\
  & \cellcolor[HTML]{F7F0FF}PETS-On. (conf) & \cellcolor[HTML]{F7F0FF}{\ul 494} & \cellcolor[HTML]{F7F0FF}{\ul 0.97} & \cellcolor[HTML]{F7F0FF}{\ul 64.7} & \cellcolor[HTML]{F7F0FF}\textbf{121} & \cellcolor[HTML]{F7F0FF}{\ul 0.99} & \cellcolor[HTML]{F7F0FF}{\ul 79.5} & \cellcolor[HTML]{F7F0FF}{\ul 138} & \cellcolor[HTML]{F7F0FF}{\ul 0.99} & \cellcolor[HTML]{F7F0FF}\textbf{81.5} & \cellcolor[HTML]{F7F0FF}\textbf{162} & \cellcolor[HTML]{F7F0FF}{\ul 0.99} & \cellcolor[HTML]{F7F0FF}{\ul 94.7} & \cellcolor[HTML]{F7F0FF}\textbf{241} & \cellcolor[HTML]{F7F0FF}{\ul 0.99} & \cellcolor[HTML]{F7F0FF}\textbf{89.7} \\
  & \cellcolor[HTML]{E8F5E9}Conf. guided & \cellcolor[HTML]{E8F5E9}\textbf{398} & \cellcolor[HTML]{E8F5E9}0.93 & \cellcolor[HTML]{E8F5E9}58.0 & \cellcolor[HTML]{E8F5E9}202 & \cellcolor[HTML]{E8F5E9}0.96 & \cellcolor[HTML]{E8F5E9}73.5 & \cellcolor[HTML]{E8F5E9}142 & \cellcolor[HTML]{E8F5E9}0.95 & \cellcolor[HTML]{E8F5E9}76.5 & \cellcolor[HTML]{E8F5E9}368 & \cellcolor[HTML]{E8F5E9}0.89 & \cellcolor[HTML]{E8F5E9}88.0 & \cellcolor[HTML]{E8F5E9}424 & \cellcolor[HTML]{E8F5E9}0.92 & \cellcolor[HTML]{E8F5E9}85.5 \\
  & \cellcolor[HTML]{F8F9FA}Uniform & \cellcolor[HTML]{F8F9FA}613 & \cellcolor[HTML]{F8F9FA}0.94 & \cellcolor[HTML]{F8F9FA}61.5 & \cellcolor[HTML]{F8F9FA}172 & \cellcolor[HTML]{F8F9FA}0.96 & \cellcolor[HTML]{F8F9FA}77.7 & \cellcolor[HTML]{F8F9FA}167 & \cellcolor[HTML]{F8F9FA}0.97 & \cellcolor[HTML]{F8F9FA}80.7 & \cellcolor[HTML]{F8F9FA}245 & \cellcolor[HTML]{F8F9FA}0.95 & \cellcolor[HTML]{F8F9FA}91.7 & \cellcolor[HTML]{F8F9FA}521 & \cellcolor[HTML]{F8F9FA}0.92 & \cellcolor[HTML]{F8F9FA}86.3 \\
\multirow{-5}{*}{BRUMO} & \cellcolor[HTML]{F8F9FA}Uniform (conf) & \cellcolor[HTML]{F8F9FA}519 & \cellcolor[HTML]{F8F9FA}0.93 & \cellcolor[HTML]{F8F9FA}\textbf{65.5} & \cellcolor[HTML]{F8F9FA}159 & \cellcolor[HTML]{F8F9FA}0.96 & \cellcolor[HTML]{F8F9FA}77.7 & \cellcolor[HTML]{F8F9FA}189 & \cellcolor[HTML]{F8F9FA}0.98 & \cellcolor[HTML]{F8F9FA}{\ul 81.3} & \cellcolor[HTML]{F8F9FA}{\ul 184} & \cellcolor[HTML]{F8F9FA}0.95 & \cellcolor[HTML]{F8F9FA}91.7 & \cellcolor[HTML]{F8F9FA}{\ul 353} & \cellcolor[HTML]{F8F9FA}0.94 & \cellcolor[HTML]{F8F9FA}87.2 \\ \bottomrule
\end{tabular}%
}
\end{table*}

\begin{figure*}[!t]
    \centering
    \includegraphics[width=1\linewidth]{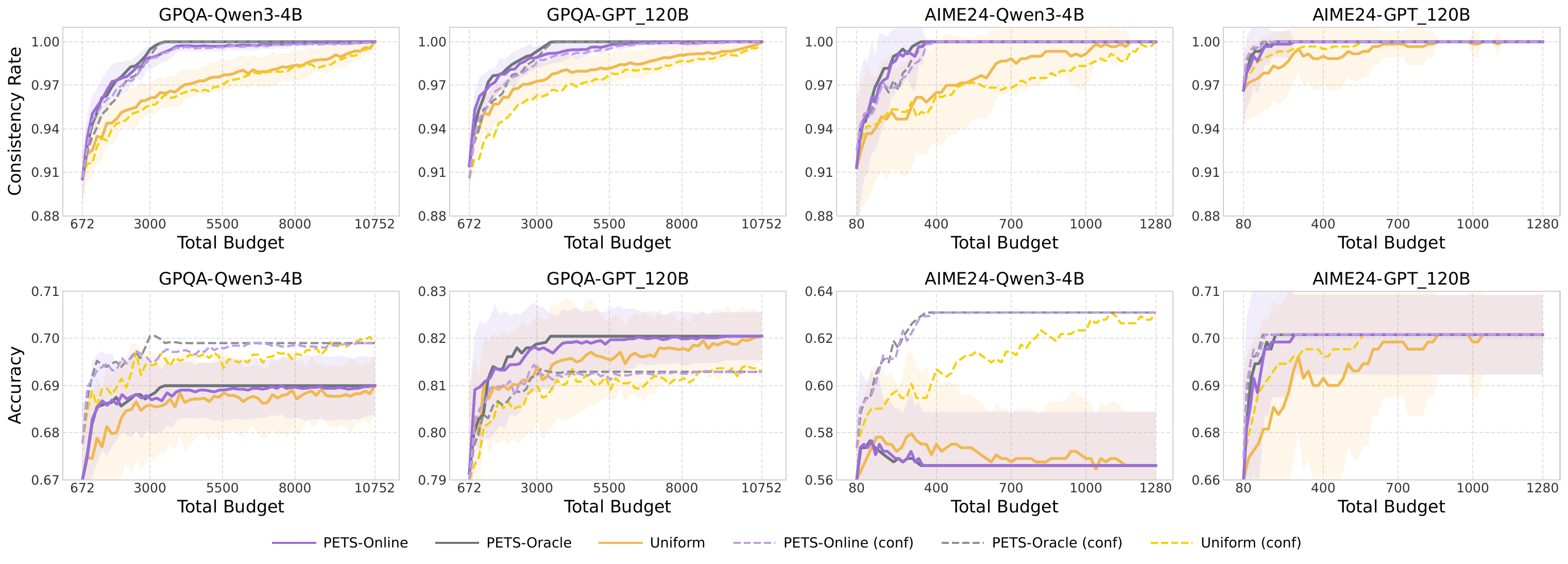}
    \vspace{-2mm}
    \caption{Budget allocation curve in the online setting. ``(conf)'' denotes the trace confidence-weighted variant. Consistency is computed within each matched-variant comparison group: \method-Online vs. Uniform, and \method-Online (conf) vs. Uniform (conf). Oracle variant assumes access to the latent parameter $\bm\theta$, while in the online setting, $\bm\theta$ is learnt from a training dataset.}
    \label{fig:online_mix}
\end{figure*}

\section{Experiment}

In this section, we conduct evaluation of \method on widely used knowledge and reasoning benchmarks, including GPQA-Diamond \cite{rein2024gpqa}, AIME 24 \cite{huggingfaceh4/aime_2024_2025}, AIME 25 \cite{math-ai/aime25_2025}, Brumo 25 \cite{matharena/brumo_2025_2026}, and HMMT Feb 25 \cite{matharena/hmmt_feb_2025_2026}. We consider popular reasoning LLMs: Qwen3-4B-Thinking and Qwen3-30B-A3B-Thinking \cite{yang2025qwen3}, gpt-oss-20b and gpt-oss-120b \cite{agarwal2025gpt}, and QwenLong-L1.5-30B-A3B \cite{shen2025qwenlong}. Following our offline budget allocation (Section \ref{sec:offline_met}) and online budget allocation (Section \ref{sec:online_met}), we evaluate \method under the offline and online scenarios.

For each dataset, we sample $128$ responses per question from each model. To define a maximum allocation budget of $64$ traces per question as the finite proxy for the infinite sampling limit, we uniformly subsample $64$ responses from the $128$. We repeat this subsampling process $30$ times and report the mean performance.

We also include confidence weighted version in our experiment, where we follow \citet{fu2025deepthinkconfidence} to compute tail trace confidence $C_i=w(T_i)$ as the weight for weighted majority voting at aggregation time (In \method-offline we also involve it at sampling time as shown in Section~\ref{sec:offline_met}).

We evaluate self-consistency rate as defined in Equation \ref{def:SC} and Equation \ref{def:weightedSC}. Our primary metric is the number of traces required to reach full self-consistency $(=1)$, where the majority answer matches the population majority answer $y_i^\infty$ (or $y_i^{W,\infty}$ in the weighted version), approximated using 64 traces. We also report accuracy on the dataset. More details and full results are provided in Appendix~\ref{sec:add_exp}.

In both offline and online settings, we compare against two baselines: uniform budget allocation across questions (Uniform) and the confidence-guided allocation (Conf. guided) method of DeepConf~\cite{fu2025deep}.

\subsection{\method for Offline Budget Allocation}
\label{sec:offline}

We first evaluate \method-Offline, where all questions are available to the budget-allocation optimizer. Following Section~\ref{sec:setup}, we consider both unweighted and confidence-weighted settings, and only compare methods within the same setting. We report self-consistency and accuracy at the trace count where \method-Offline first reaches full self-consistency $(=1)$. Results are shown in Table~\ref{tab:offline_results} and Figure~\ref{fig:offline_mix}.

Across models and datasets, \ding{182} \method-Offline consistently reduces the number of traces required to reach full self-consistency by up to $75\%$ compared to uniform allocation, while achieving higher self-consistency at any budget (Figure~\ref{fig:offline_mix}). \ding{183} When the population majority aligns with the true answer, improved self-consistency directly translates into higher accuracy. \ding{184} Confidence-weighted majority voting further shifts the population majority towards the ground truth, thus resulting in improved accuracy.

\subsection{\method for Online Budget Allocation}
\label{sec:online}

We next evaluate \method in the online case, where questions arrive sequentially, and allocation decisions are irreversible. Besides Uniform, we also compare \method-Online with \method-Oracle, which can access the latent difficulty $\bm{\theta}$ and thus serves as an estimation-free upper bound.

As in the offline setting, we also consider both unweighted and confidence-weighted versions. \method-Online allocates budget on the fly using only observed traces. Table~\ref{tab:online_results} and Figure~\ref{fig:online_mix} show the results.

Consistent with offline results, \ding{182} \method-Online substantially reduces the traces required to reach full self-consistency, \ding{183} leading to improved accuracy. \ding{184} Confidence-weighted self-consistency further boosts accuracy by reshaping the population majority. \ding{185} \method-Online closely matches \method-Oracle in both self-consistency and accuracy, indicating that the estimation procedure in Appendix~\ref{app:warmup-griding} is effective.

We further report Pass@1 and MV@128 accuracy as lower and upper reference bounds, respectively, for all methods. The results are shown in Figures~\ref{fig:offline-rebuttal} and~\ref{fig:online-rebuttal}.

\subsection{Additional Results}

In this section, we provide two ablations for the online difficulty discretization. Tables~\ref{tab:oracle-k-sweep} and~\ref{tab:predictor-bins-sweep} show that both the oracle allocation, which clusters ground-truth $(a,b)$ parameters, and the practical predictor, which uses only $K_0{=}4$ warm-up samples, are not highly sensitive to the number of difficulty bins. Accuracy varies only modestly across choices of $k$, and in several AIME 2024 settings remains unchanged, suggesting that coarse difficulty grouping is sufficient for the online allocation policy.

We also report wall-clock allocation overhead in Table~\ref{tab:wallclock}. The online predictor adds only a few seconds of preprocessing and budget assignment on the evaluated datasets, while the offline OKG allocator is substantially more expensive due to sequential Monte Carlo allocation. This supports the intended use of \method-Online as a practical streaming policy: it preserves the adaptive allocation behavior of \method while keeping the decision-time overhead small.

\begin{table}[t]
\centering
\caption{Effect of Oracle KMeans cluster number $k$ on Oracle accuracy. 
The Oracle uses KMeans clustering on ground-truth $(a,b)$ parameters 
to partition questions into $k$ difficulty bins. 
Predictor and Baseline results are unaffected by this parameter.}
\label{tab:oracle-k-sweep}
\begingroup
\small
\setlength{\tabcolsep}{7pt}
\renewcommand{\arraystretch}{1.08}
\begin{tabular}{@{}llcccc@{}}
\toprule
\textbf{Dataset} & \textbf{Model} & $k{=}2$ & $k{=}3$ & $k{=}5$ & $k{=}8$ \\
\midrule
\multirow{2}{*}{GPQA}
  & Qwen3-4B  & 0.6905 & 0.6905 & \textbf{0.6964} & 0.6905 \\
  & GPT-120B  & 0.8036 & 0.8036 & 0.8214 & \textbf{0.8274} \\
\midrule
\multirow{2}{*}{AIME 2024}
  & Qwen3-4B  & \textbf{0.9500} & 0.9000 & 0.9000 & 0.9000 \\
  & GPT-120B  & 1.0000 & 1.0000 & 1.0000 & 1.0000 \\
\bottomrule
\end{tabular}
\endgroup
\vspace{-0.5em}
\end{table}

\begin{table}[t]
\centering
\caption{Effect of Predictor difficulty bin number $k$ on Predictor accuracy. 
The Predictor assigns each question to one of $k$ bins based on the number of 
distinct answers among $K_0{=}4$ warm-up.}
\label{tab:predictor-bins-sweep}
\begingroup
\small
\setlength{\tabcolsep}{7pt}
\renewcommand{\arraystretch}{1.08}
\begin{tabular}{@{}llcccc@{}}
\toprule
\textbf{Dataset} & \textbf{Model} & $k{=}2$ & $k{=}3$ & $k{=}4$ & $k{=}5$ \\
\midrule
\multirow{2}{*}{GPQA}
  & Qwen3-4B  & \textbf{0.6845} & \textbf{0.6845} & \textbf{0.6845} & 0.6786 \\
  & GPT-120B  & \textbf{0.8214} & 0.8095 & 0.8095 & 0.8095 \\
\midrule
\multirow{2}{*}{AIME 2024}
  & Qwen3-4B  & \textbf{0.9500} & 0.9000 & 0.9000 & 0.9000 \\
  & GPT-120B  & 1.0000 & 1.0000 & 1.0000 & 1.0000 \\
\bottomrule
\end{tabular}
\endgroup
\vspace{-0.5em}
\end{table}

\begin{table}[t]
\centering
\caption{Wall-clock runtime (seconds) of different budget allocation methods.
\textbf{Online Predictor} trains bucket statistics on 30 questions and then assigns each test question a budget via $K_0{=}4$ warm-up samples.
\textbf{Online Uniform} applies a fixed per-question budget with no training.
\textbf{Offline OKG} performs sequential Monte Carlo allocation ($B{=}16$ per question, $n_{\mathrm{samples}}{=}500$).}
\label{tab:wallclock}
\begingroup
\small
\setlength{\tabcolsep}{6pt}
\renewcommand{\arraystretch}{1.08}
\begin{tabular}{@{}llccc@{}}
\toprule
\textbf{Dataset} & \textbf{Model} & \textbf{Online Predictor} & \textbf{Online Uniform} & \textbf{Offline OKG} \\
\midrule
\multirow{2}{*}{GPQA}
  & Qwen3-4B  & $4.73$   & $< 0.01$  & $212.2$ \\
  & GPT-120B  & $4.68$   & $< 0.01$  & $204.1$ \\
\midrule
\multirow{2}{*}{AIME 2024}
  & Qwen3-4B  & $1.55$   & $0.08$    & $36.0$ \\
  & GPT-120B  & $1.19$   & $0.03$    & $39.0$ \\
\bottomrule
\end{tabular}
\endgroup
\vspace{-0.5em}
\end{table}

\section{Related Work}
\paragraph{Test-time scaling and efficient reasoning.}
Test-time scaling improves reasoning either by extending chain-of-thought trajectories or by sampling multiple trajectories in parallel \citep{snell2024scaling,welleck2024decoding,wei2022chain}. Self-consistency and Best-of-$N$ aggregate such samples by voting or selection \citep{wang2022self,brown2024large}. Because naive scaling is costly, efficient-reasoning methods shorten traces, impose token budgets, prune trajectories, or filter samples using confidence and difficulty estimates rather than directly optimizing allocation \citep{muennighoff2025s1,chen2024not,fu2025deep,wang2025make}.

\paragraph{Crowdsourcing and adaptive allocation.}
Our formulation is also connected to crowdsourcing and budgeted labeling: Dawid--Skene models noisy workers and item difficulty \citep{dawid1979maximum}, while adaptive assignment and optimistic knowledge gradient methods allocate labels by expected value of information \citep{sheng2008get,pmlr-v28-chen13f}. Related work also studies aggregation, adaptive stopping, and compute allocation across queries, tokens, or rollouts \citep{komiyama2025bestofinf,chen2024provablescaling,zuo2025strategicscaling,lin2025planbudget,wang2025everyrollout}. Unlike reward- or judge-based selection, which can suffer from reward hacking or miscalibration \citep{huang2025bestofn,gao2024bayesiancalibration}, \method optimizes self-consistency against an infinite-compute voting oracle.

\section{Conclusion}
We present \method, a principled framework that improves the efficiency of parallel sampling with self-consistency without sacrificing performance. Across challenging benchmarks and state-of-the-art reasoning models, \method consistently increases self-consistency and accuracy while substantially reducing the number of required reasoning trajectories. Our experiments reveal that while majority voting can approximate the population majority and improve accuracy when the population majority matches the true answer, but when the majority is systematically wrong, additional sampling provides little benefit, revealing a limitation of allocation-only approaches. Finally, while \method-Online estimates the latent question difficulty $\bm\theta$ via a short warm-up phase, an important next step is to train models to predict $\bm\theta$ directly from the question prior to generation.


\clearpage
\section*{Acknowledgement}
Z. Deng, Y. Han, and T. Chen are grateful for the kind support of Renaissance Philanthropy
AI4Math fund.
\section*{Impact Statement}
This paper presents work whose goal is to advance the field of LLM reasoning. There are many potential societal consequences of our work, none of which we feel must be specifically highlighted here.
\bibliographystyle{icml2026}
\bibliography{example_paper,references}

\newpage
\appendix
\onecolumn

\section{Additional Related Work}\label{app:relatedworks}

\paragraph{Test-Time Scaling.} Since the emergence of long-reasoning models such as GPT-o1 \cite{jaech2024openai} and DeepSeek-R1 \cite{guo2025deepseek}, test-time scaling \cite{snell2024scaling,welleck2024decoding, chen2025conformal} has gained traction as a way to improve performance by allocating substantially more computation (e.g., reasoning tokens) at inference time. One line of work scales the length of chain-of-thought (CoT) trajectory by extending the reasoning process \cite{wei2022chain}; representative examples include GPT-o1 \cite{jaech2024openai}, DeepSeek-R1 \cite{guo2025deepseek}, Kimi K2 \cite{team2025kimi}, Qwen3 \cite{yang2025qwen3}, gpt-oss \cite{agarwal2025gpt}, and Gemini 3 Pro \cite{gemini}. Another line of work scales via parallel sampling of multiple trajectories, leveraging self-consistency \cite{wang2022self} or Best-of-$N$ selection \cite{brown2024large} and aggregating outputs (e.g., by majority vote). In this paper, \method focuses on the intersection of these two lines and considers the efficient selection of multiple long reasoning trajectories.

\paragraph{Efficient Reasoning.}
Although test-time scaling can improve performance, its increased inference cost has motivated works on efficient reasoning. Along the line of CoT extension, \citet{muennighoff2025s1} proposes budget forcing to control test-time compute by terminating the model’s reasoning once a preset budget is reached. Other approaches aim to elicit shorter yet effective reasoning by fine-tuning with condensed CoT traces \cite{chen2024not,luo2025o1,hou2025thinkprune,zhang2025quest}. Along the parallel sampling direction, several methods introduce more efficient variants of self-consistency that reduce the number of trajectories while maintaining accuracy \cite{li2024escape,wan2025reasoning,fu2024efficiently}. Most closely related to our work, DeepConf \cite{fu2025deep} leverages per-trajectory confidence to filter low-confidence traces and terminate generation when confidence falls below a threshold.

\paragraph{Crowdsourcing.}

Crowdsourcing has been extensively studied as a budgeted labeling problem, with classical models such as Dawid--Skene capturing item difficulty and annotation noise \citep{dawid1979maximum}. Building on these probabilistic foundations, a line of work has focused on adaptive task assignment and budget allocation, where items are sequentially selected for labeling based on expected benefit \citep{sheng2008get}. In particular, \citet{pmlr-v28-chen13f} formulate crowdsourcing as a Bayesian decision process and propose the optimistic knowledge gradient (OKG) policy, providing a principled approach to optimal budget allocation under majority-style aggregation. A central ingredient in these approaches is uncertainty quantification \citep{deng2025quest,zollo2024improving,zollo2024prompt,deng2023distribution}: reliable estimates of uncertainty determine which items require additional labels, which workers or responses should be trusted, and when the current aggregate label is sufficiently confident. In this sense, crowdsourcing systems depend on uncertainty quantification to trade off exploration, such as collecting more annotations for ambiguous items, against exploitation, such as allocating budget to decisions with high expected utility. Conversely, crowdsourcing can be viewed as a mechanism for reducing epistemic uncertainty by strategically acquiring diverse and informative human judgments. However, these methods are primarily designed for human annotators, and few works have extended them to the setting of test-time consistency in large language models.

\paragraph{Computational Resources Allocation.} A common family of approaches generates multiple candidate answers and then selects or aggregates them. \citet{komiyama2025bestofinf} analyze majority-vote best-of-$N$ in the asymptotic regime $N\to\infty$, and propose an adaptive generation/early-stopping rule based on answer agreement, including extensions to weighted multi-LLM ensembles.
Complementarily, \citet{chen2024provablescaling} provide simple two-stage aggregation schemes (knockout and league) with provable failure-probability decay as test-time compute grows, using the LLM itself as a black-box generator and pairwise comparator. 

When selection relies on an imperfect reward/judge model, scaling can backfire. \citet{huang2025bestofn} formalize inference-time alignment through Best-of-$N$ sampling and show that large $N$ can induce reward hacking; they propose a pessimistic alternative with scaling-monotonic guarantees. Beyond per-question scaling, allocating compute across queries is crucial under a global budget. \citet{zuo2025strategicscaling} cast Test-time scaling allocation as a bandit learning problem that adapts compute online to query difficulty and solvability. Other work targets different compute axes: \citet{lin2025planbudget} study budget tokens within a single reasoning process via planning and uncertainty-aware scheduling, while \citet{wang2025everyrollout} study efficient test-time scaling for search by allocating rollouts at the direction level to avoid candidate-count bias.

Finally, several Bayesian works address reliability and evaluation. \citet{yao2024bayesiantruths} treat multiple LLMs as annotators and extend Dawid--Skene for uncertainty-aware truth aggregation. \citet{gao2024bayesiancalibration} calibrate win-rate estimates from LLM evaluators, mitigating bias in LLM-as-judge comparisons. In contrast to reward-based TTS, our work focuses on budget allocation to optimize self-consistency defined against an infinite-compute (weighted) voting oracle, enabling principled allocation without depending on a potentially misspecified reward model.

\section{Appendix of Offline Case (Section~\ref{sec:offline_met})}

\subsection{Bayesian MDP}\label{app:offline-MDP}

We place independent priors on $\boldsymbol{\theta}_i$ and $\boldsymbol{\mu}_i$:
\begin{equation*}
\boldsymbol{\theta}_i \sim \mathrm{Dir}(\boldsymbol{\alpha}_i^0),
\qquad
\boldsymbol{\alpha}_i^0\in\mathbb{R}_+^M,
\qquad\text{and}\qquad
\mu_{i,m} \sim \mathcal{N}(\beta_{i,m}^0,v_{i,m}^0),
\ \ v_{i,m}^0>0,\ \ m\in[M],
\label{eq:priors}
\end{equation*}
with $\boldsymbol{\theta}_i \perp \boldsymbol{\mu}_i$ a priori.
The Dirichlet parameters $\alpha_{i,m}^0$ can be viewed as pseudo-counts for class $m$.
The Gaussian parameters $(\beta_{i,m}^0,v_{i,m}^0)$ specify the prior mean and variance of the class-conditional confidence mean $\mu_{i,m}$.

At stage $t$, let $S^t$ denote the global belief state, which collects posterior parameters for all questions as follows:
\begin{equation*}
S^t \triangleq \bigl(\{\boldsymbol{\alpha}_i^t\}_{i=1}^N,\ \{\boldsymbol{\beta}_i^t\}_{i=1}^N,\ \{\boldsymbol{v}_i^t\}_{i=1}^N\bigr),
\qquad
\boldsymbol{\beta}_i^t=(\beta_{i,1}^t,\ldots,\beta_{i,M}^t),\ 
\boldsymbol{v}_i^t=(v_{i,1}^t,\ldots,v_{i,M}^t).
\end{equation*}
In the fully observed offline setting, each observation reveals $(Y_i,C_i)$ for the sampled trace, so $\boldsymbol{\alpha}_i^t$ admits a Dirichlet-multinomial conjugate update and $(\boldsymbol{\beta}_i^t,\boldsymbol{v}_i^t)$ admit Gaussian conjugate updates conditioned on $Y_i$ .


At each state $S^t$, we would select a question $i_t$ and observe $(Y_t,C_t)=(y_t,c_t)$, the Dirichlet parameters update via the conjugate Categorical--Dirichlet rule
\begin{equation*}
\boldsymbol{\alpha}_i^{t+1}=\boldsymbol{\alpha}_i^{t}+\boldsymbol{\delta}_{y_t},
\qquad
\boldsymbol{\alpha}_j^{t+1}=\boldsymbol{\alpha}_j^{t}\ \ \forall j\neq i,
\end{equation*}
where $\boldsymbol{\delta}_{m}\in\mathbb{R}^M$ is the one-hot vector with a $1$ at entry $m$.

For the confidence model, only the component $\mu_{i,y_t}$ is updated.
Recall that under state $S^t$, the maintained confidence model is
\[
\mu_{i,m}\mid S^t \sim \mathcal{N}(\beta_{i,m}^t,v_{i,m}^t),
\qquad m\in[M],
\]
and we assume the observation model $C_t\mid(Y_t=m,\mu_{i,m})\sim \mathcal{N}(\mu_{i,m},1)$.
The Normal--Normal conjugate update yields
\begin{equation*}
v_{i,y_t}^{t+1}=\left(\frac{1}{v_{i,y_t}^{t}}+1\right)^{-1},
\qquad
\beta_{i,y_t}^{t+1}
=
v_{i,y_t}^{t+1}\left(\frac{\beta_{i,y_t}^{t}}{v_{i,y_t}^{t}}+c_t\right),
\end{equation*}
while $(\beta_{i,m}^{t+1},v_{i,m}^{t+1})=(\beta_{i,m}^{t},v_{i,m}^{t})$ for all $m\neq y_t$,
and $(\boldsymbol{\beta}_{j}^{t+1},\boldsymbol{v}_{j}^{t+1})=(\boldsymbol{\beta}_{j}^{t},\boldsymbol{v}_{j}^{t})$ for all $j\neq i$.

Given belief state $S^t$ and action $i_t=i$, the posterior distribution of the next label is
\begin{equation*}
\Pr(Y_t=m \mid S^t,i_t=i)
=
\mathbb{E}[\theta_{i,m}\mid \boldsymbol{\alpha}_i^t]
=
\frac{\alpha_{i,m}^t}{\sum_{k=1}^M \alpha_{i,k}^t}.
\end{equation*}
Moreover, conditional on $Y_t=m$, the posterior predictive confidence is Gaussian:
\begin{equation*}
C_t \mid (Y_t=m,S^t,i_t=i)
\sim
\mathcal{N}\!\left(\beta_{i,m}^t,\ 1+v_{i,m}^t\right),
\end{equation*}
Together, the posterior predictive distribution and the conjugate update rules characterize the state transitions and transition probabilities of the resulting Bayesian MDP.

\subsection{Algorithm of \method-Offline}\label{app:offline-algo}
We introduce the \method-Offline algorithm here. Before doing that, we will give further explanation on $R^+(S_i^t)$, an optimistic approximation of the expected one-step utility gain $R(S_i^t)$. 
\begin{align*}
    R(S_i^t)
    & =\mathbb{E}_{Y_t, C_t}\!\left[
U(S_i^{t+1})-U(S_i^t)
\ \middle|\ S^t,\ i_t=i
\right] \notag \\
& =\sum_{m=1}^M
\mathbb{P}(Y_t=m\mid S^t,i_t=i)\,
\mathbb{E}_{C_t}\!\left[
U(S_i^{t+1})-U(S_i^t)
\ \middle|\ S^t,\ i_t=i,\ Y_t=m
\right]
\end{align*}
Here  the per-question terminal utility $U(S_i^t)$ under belief $S_i^t \triangleq (\bm\alpha_i^t,\bm\beta_i^t,\bm v_i^t)$ is defined as 

\begin{equation*}
U(S_i^t)
\triangleq
\max_{m\in[M]}
\Pr\!\left(y_i^{W,\infty}=m \mid S_i^t\right),
\end{equation*}

In the optimistic version following \cite{pmlr-v28-chen13f}, we use $R^+(S_i^t)$ to approximate $R(S_i^t)$ to avoid the complex computation of  $\mathbb{P}(Y_t=m\mid S^t,i_t=i)$ and to achieve consistency of the algorithm as has been proven in \cite{pmlr-v28-chen13f}. 

In our setting where we have an additional parameters governing the posterior distribution of $C_t$, to relieve the computational burden, we make further simplification: 

\begin{align*}
    R^{+}_i(S^t)
&\triangleq
\max_{m\in[M]}
\mathbb{E}\!\left[
U(S_i^{t+1})-U(S_i^t)
\ \middle|\ S^t,\ i_t=i,\ Y_t=m
\right] \notag\\
&=
\max_{m\in[M]}
\Big(
\mathbb{E}_{C_t\mid S^t,i_t=i,Y_t=m}\big[
U(\mathrm{Upd}(S_i^t;m,C_t))
\big]
-
U(S_i^t)
\Big) \notag\\
&\approx
\max_{m\in[M]}
\Big(
U(\mathrm{Upd}(S_i^t;m,\beta_{i,m}^t))
-
U(S_i^t)
\Big),
\end{align*}

Here $\mathrm{Upd}(S_i^t;m,c)$ denotes the posterior update obtained by hypothetically observing $(Y_t=m,C_t=c)$, and the approximation plugs in the predictive mean $\mathbb{E}[C_t\mid S^t,i_t=i,Y_t=m]=\beta_{i,m}^t$. This policy is myopic (and thus not globally optimal in general), but provides a principled and computationally efficient heuristic for adaptive budget allocation.

\begin{algorithm}[H]
\caption{Optimistic Knowledge Gradient (confidence-weighted Bayesian setting)}
\label{alg:okg_conf}
\begin{algorithmic}[1]
\Require Priors $\{(\boldsymbol{\alpha}_i^0,\boldsymbol{\beta}_i^0,\boldsymbol{v}_i^0)\}_{i=1}^N$, total budget $T$, plug-in parameter $\kappa\ge 0$
\For{$t=0,1,\ldots,T-1$}
    \myState{Compute per-question optimistic utility gain for all $i\in[N]$:}
    \[
    R_i^{+}(S^t)\ \approx\ 
    \max_{m\in[M]}
    \Bigl(
    U(\mathrm{Upd}(S_i^t; m, \beta_{i,m}^t))-U(S_i^t)
    \Bigr)
    \]
    \myState{Select $i_t=\arg\max_{i\in[N]} R_i^{+}(S^t)$.}
    \myState{Query the LLM on question $i_t$ and observe $(y_t,c_t)$ with $y_t\in[M]$, $c_t\in\mathbb{R}_+$.}
    \myState{Dirichlet update (label): $\boldsymbol{\alpha}_{i_t}^{t+1}=\boldsymbol{\alpha}_{i_t}^{t}+\boldsymbol{\delta}_{y_t}$.}
    \myState{Normal update (confidence mean for class $y_t$):}
    \[
    v_{i_t,y_t}^{t+1}=\left(\frac{1}{v_{i_t,y_t}^{t}}+1\right)^{-1},
    \quad
    \beta_{i_t,y_t}^{t+1}
    =
    v_{i_t,y_t}^{t+1}\left(\frac{\beta_{i_t,y_t}^{t}}{v_{i_t,y_t}^{t}}+c_t\right).
    \]
    \myState{Keep all other parameters unchanged: for $j\neq i_t$, set $(\boldsymbol{\alpha}_j^{t+1},\boldsymbol{\beta}_j^{t+1},\boldsymbol{v}_j^{t+1})=(\boldsymbol{\alpha}_j^{t},\boldsymbol{\beta}_j^{t},\boldsymbol{v}_j^{t})$, and for $m\neq y_t$, set $(\beta_{i_t,m}^{t+1},v_{i_t,m}^{t+1})=(\beta_{i_t,m}^{t},v_{i_t,m}^{t})$.}
\EndFor
\myState{\Return Terminal labels $\hat y_i \in \arg\max_{m\in[M]} \Pr(y_i^{W,\infty}=m\mid S_i^T)$ for all $i\in[N]$.}
\end{algorithmic}
\end{algorithm}

In the special case $C_t\equiv1$, the unweighted variant of our algorithm was proved to be consistent by \citet{pmlr-v28-chen13f}.

Note that a key component in this algorithm is to compute $\mathbb{P}(y_i^{W,\infty}=m|S_i^t)$, which can be achieved via Monte Carlo sampling.

\subsection{Proof of lemma~\ref{lem:bayes-terminal-decision}}

\begin{proof}[Proof of Lemma~\ref{lem:bayes-terminal-decision}]
In the offline trajectory allocation setting, after allocating a total of $H$ traces,
we output a terminal label $\hat y_i$ for each question $i\in[N]$.
Given the terminal posterior (belief) state $S^H=\{\bm\alpha_i^H\}_{i=1}^N$,
our goal is to maximize the conditional expected self-consistency rate:
\begin{align}
\{\hat y_i\}_{i=1}^N
&\in \arg\max_{\{\hat y_i\}_{i=1}^N}
\sum_{i=1}^N \mathbb{P}\!\left(\hat y_i = y_i^{W,\infty} \mid \mathcal{F}^H\right)
\notag \\
&= \arg\max_{\{\hat y_i\}_{i=1}^N}
\sum_{i=1}^N \sum_{m=1}^M
\mathbb{I}(\hat y_i=m)\,
\mathbb{P}\!\left(y_i^{W,\infty}=m \mid S_i^H\right),
\label{eq:obj_offline_full}
\end{align}
where $\{\mathcal{F}^t\}_{t=0}^H$ denotes the filtration generated by the sample path
$(i_0,y_{i_0},c_{i_0}\ldots,i_{t-1},y_{i_{t-1}},c_{i_{t-1}})$, and the second equality follows because
conditioning on $\mathcal{F}^H$ fixes the terminal posterior $S_i^H$ for each $i$.

Now observe that the objective in \eqref{eq:obj_offline_full} is separable across questions.
Indeed, for each fixed $i$, the decision variable $\hat y_i$ only appears in the $i$-th summand:
\begin{equation*}
\sum_{m=1}^M \mathbb{I}(\hat y_i=m)\,
\mathbb{P}\!\left(y_i^\infty=m \mid S_i^H\right).
\end{equation*}
Therefore, maximizing the total sum is equivalent to maximizing each term independently.
For any question $i$, we have
\begin{align*}
\hat y_i
&\in \arg\max_{\hat y \in [M]}
\sum_{m=1}^M \mathbb{I}(\hat y=m)\,
\mathbb{P}\!\left(y_i^{W,\infty}=m \mid S_i^H\right) \\
&= \arg\max_{m\in[M]}
\mathbb{P}\!\left(y_i^{W,\infty}=m \mid S_i^H\right),
\end{align*}
which proves the claim.
\end{proof}

\section{Appendix of Online Case (Section~\ref{sec:online_met})}
\subsection{Details of Algorithm~\ref{alg:greedy}} \label{app:dynamic-streaming}
\paragraph{Streaming Policy.}
Recall that the discretized formulation \eqref{eq:discrete-case} yields a
static grid-level allocation vector $(B_1,\ldots,B_K)$ under the average budget constraint $\sum_{j=1}^K \hat p_j B_j \le B_{\mathrm{total}}/N$. In the actual online streaming setting, however, the policy must remain feasible
under a remaining-budget constraint that evolves with time. We therefore implement the online policy as a state-dependent dynamic re-planning rule.

At time $t$, define the remaining total budget
\[
R_t \in \mathbb{Z}_{\ge 0}, \qquad R_1 := B_{\mathrm{total}},
\]
and the remaining horizon
\[
H_t := N-t+1.
\]
Equivalently, the remaining per-question average budget is
\[
\bar B_t := \frac{R_t}{H_t}.
\]

Upon observing $q_t$, the policy computes a grid-level budget vector by solving (approximately) the discretized allocation problem under the
current average budget $\bar B_t$:
\begin{equation}
\label{eq:replan}
(B^{(t)}_1,\ldots,B^{(t)}_K)
\;\leftarrow\;
\texttt{GreedyAlloc}\bigl(\{\hat p_j,\hat{\btheta}^j\}_{j=1}^K,\;\bar B_t\bigr),
\end{equation}
where \texttt{GreedyAlloc} is Algorithm~\ref{alg:greedy} (run with capacity
$\bar B_t$) and returns an integer vector satisfying
\[
\sum_{j=1}^K \hat p_j\, B^{(t)}_j \le \bar B_t.
\]
The actual budget spent on the current question is then $b_t := B^{(t)}_{j_t}$. After spending $b_t$ on question $q_t$, the remaining budget updates as
\[
R_{t+1} := R_t - b_t.
\]
By construction, the policy always respects the global budget constraint
$\sum_{t=1}^N b_t \le B_{\mathrm{total}}$: indeed, $R_t$ is nonnegative and decreases
by exactly the spent amount each step.

The rule \eqref{eq:replan} makes the policy dynamic:
even though \eqref{eq:discrete-case} is written with a fixed average constraint,
the online execution continuously recomputes the target grid budgets using the
remaining budget-to-horizon ratio $\bar B_t$. This re-planning step corrects
for randomness and prevents early overspending, while retaining the same structure as in Algorithm~\ref{alg:greedy}.

\paragraph{Random Rounding Method.}
Our greedy allocation procedure increases the grid-level budgets in discrete steps. In particular, for grid $j$ one increment corresponds to
adding two additional trials, which consumes expected budget
$2\hat p_j$ (since a $\hat p_j$ fraction of incoming questions fall into grid $j$).
As a result, near the end of the budget, the remaining expected budget may be
insufficient to execute the next full $2$-trial increment for the current best
grid. Let $\delta$ denote the remaining expected budget (per-question average budget
under the gridized constraint) at the moment when the algorithm is about to take
one more $2$-trial increment for the currently selected grid $i^\star$. Suppose that
\[
\delta \in (0, 2\hat p_{i^\star}),
\]
so a full increment for grid $i^\star$ would violate feasibility. Let $B=(B_1,\ldots,B_K)$ be the current integer allocation vector. Define $B^{+}$ as the neighboring allocation obtained by applying
one additional greedy increment to grid $i^\star$:
\[
B^{+}_{i^\star}=B_{i^\star}+2,\qquad B^{+}_j=B_j\ (j\neq i^\star).
\]
We then output a randomized allocation $\tilde B$ as follows:
\begin{itemize}
\item With probability $\;\rho := \delta/(2\hat p_{i^\star})$, set $\tilde B = B^{+}$.
\item With probability $\;1-\rho$, set $\tilde B = B$.
\end{itemize}
By construction,
\[
\mathbb{E}\!\left[\sum_{j=1}^K \hat p_j\, \tilde B_j\right]
= \sum_{j=1}^K \hat p_j\, B_j \;+\; \rho\cdot 2\hat p_{i^\star}
= \sum_{j=1}^K \hat p_j\, B_j \;+\; \delta,
\]
so the randomized rounding consumes exactly the remaining expected budget and
keeps the expected budget constraint tight (hence feasible).

\subsection{Proof of Optimality of Algorithm \ref{alg:greedy} (Theorem \ref{thm:optimal})}\label{app:proof-optimal}

We first establish a diminishing-returns property of $\mathrm{SC}(\theta;B)$ when $\theta$ is a scalar, which directly motivates the optimality of our greedy allocation rule in Algorithm~\ref{alg:greedy}. 

\begin{lemma}\label{lem:concave}
For any $\theta\in[0,1]$ and $n\in \N_+$, the marginal gain $R(\theta,n)$ is zero for odd $n$. Meanwhile, it is a nonincreasing function of $n$ over even integers, and strictly decreasing when $\theta\neq \frac{1}{2}$.
\end{lemma}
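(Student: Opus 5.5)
Throughout, the plan is to reduce to $\theta\ge\frac12$ and then compute the increments in closed form by conditioning on the first votes. The self-consistency rate in \eqref{eq:sc} is invariant under relabeling $\theta\mapsto 1-\theta$, since swapping the labels $0$ and $1$ exchanges $\mathbb{P}[Y^{\mathrm{Maj}}(B)=1\mid\theta]$ and $\mathbb{P}[Y^{\mathrm{Maj}}(B)=0\mid\theta]$. So we may assume $\theta\ge\frac12$. Write $P_B:=\mathbb{P}[Y^{\mathrm{Maj}}(B)=1\mid\theta]$ from \eqref{eq:mv-acc}. For $\theta\ge\frac12$ we have $P_B\ge\frac12$ for all $B\ge1$, which follows from the symmetry of the binomial and random tie-breaking. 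Hence $\mathrm{SC}(\theta;B)=P_B$ and $R(\theta,n)=P_{n+1}-P_n$ for $n\ge1$.

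\emph{Odd $n$.} Let $n=2k-1$ with $k\ge1$, and couple the $2k$-vote election with the first $2k-1$ votes. The extra vote changes the outcome only when the first $2k-1$ votes have margin one, and it then produces a tie that is broken by a fair coin.
\begin{itemize}
\item With $k$ ones followed by a $0$, the $2k$-vote election loses with probability $\frac12$. This event has probability $\binom{2k-1}{k}\theta^k(1-\theta)^{k}$.
\item With $k-1$ ones followed by a $1$, it wins with probability $\frac12$. This event has probability $\binom{2k-1}{k-1}\theta^{k}(1-\theta)^{k}$.
\end{itemize}
Since $\binom{2k-1}{k}=\binom{2k-1}{k-1}$, the two contributions cancel exactly. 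Hence $P_{2k}=P_{2k-1}$ and $R(\theta,2k-1)=0$.

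\emph{Even $n$.} Let $n=2k$ with $k\ge1$. By the odd case, $R(\theta,2k)=P_{2k+1}-P_{2k-1}$. I would again condition on the first $2k-1$ votes and add two more. The outcome flips from win to loss only when there are $k$ ones and both new votes are $0$. It flips from loss to win only when there are $k-1$ ones and both new votes are $1$. This yields
\[
R(\theta,2k)=\binom{2k-1}{k}\bigl(\theta(1-\theta)\bigr)^k\,(2\theta-1).
\]
Put $x=\theta(1-\theta)\le\frac14$ and $g_k=\binom{2k-1}{k}x^k$. Then
\[
\frac{g_{k+1}}{g_k}=\frac{2(2k+1)}{k+1}\,x<4x\le1 .
\]
Since $2\theta-1\ge0$, this shows that $R(\theta,2k)$ is nonincreasing in $k$. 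When $\theta\ne\frac12$ we have $2\theta-1>0$ and $g_k>0$, so the decrease is strict. When $\theta=\frac12$ all even increments are $0$, which is trivially nonincreasing. (If $\theta\in\{0,1\}$ the increments are all zero; the strictness claim should be read for $\theta\in(0,1)\setminus\{\frac12\}$, or noted as a degenerate edge case.)

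The main obstacle is bookkeeping rather than any deep step. First, the identification $\mathrm{SC}=P_B$ needs justifying via $P_B\ge\frac12$ for $\theta\ge\frac12$, so that the $\max$ in \eqref{eq:sc} never switches branches. Second, the tie-breaking cancellation in the odd case must be done carefully, because it is what makes the two-step increment formula valid.
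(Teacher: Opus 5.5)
Your proposal is correct and follows essentially the same route as the paper: the symmetry reduction to $\theta\ge\frac12$, the cancellation of the two margin-one configurations for odd $n$, an explicit closed form for the even increments, and a consecutive-ratio bound of $\bigl(4-\frac{2}{k+1}\bigr)\theta(1-\theta)<1$. The one bookkeeping difference is that you compute $R(\theta,2k)$ as the two-step increment $P_{2k+1}-P_{2k-1}$, whereas the paper conditions on a tie after $2k$ votes; your $\binom{2k-1}{k}(2\theta-1)$ equals the paper's $\binom{2k}{k}(\theta-\frac12)$. Beyond this, your justification of $\mathrm{SC}=P_B$ and your caveat that strictness fails at $\theta\in\{0,1\}$ (where every even increment with $n\ge2$ vanishes, so the paper's ratio step becomes $0/0$) are correct refinements that the paper's proof leaves implicit.
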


\begin{proof}[Proof of Lemma~\ref{lem:concave}]

We focus on the binary case where each call returns an answer in $\{0,1\}$ and the ground-truth label is $1$, since the function $\mathrm{SC}(\theta;n)$ enjoys a symmetry around $\theta=\tfrac12$:
\begin{equation}\label{eq:SC-symmetry}
\mathrm{SC}(\theta;n)=\mathrm{SC}(1-\theta;n),\qquad \forall\theta\in[0,1],~B\in\mathbb{N}.
\end{equation}
Indeed, flipping the answer of $0$ and $1$ for each call transforms the binomial count
$X\sim\mathrm{Bin}(n,\theta)$ into $B-X\sim\mathrm{Bin}(B,1-\theta)$, while the majority rule with random
tie-breaking is invariant under this relabeling. Consequently, by \eqref{eq:SC-symmetry},
\[
R(\theta,n)=R(1-\theta,n),\qquad \forall\theta\in[0,1],~B\in\mathbb{N}.
\]
Therefore, it suffices to analyze the case $\theta\ge\tfrac12$ (replace $\theta$ by $1-\theta$ if $\theta<\tfrac12$).

Finally, the boundary case $\theta=\tfrac12$ is trivial: when $\theta=\tfrac12$, we have
$X\sim\mathrm{Bin}(n,\tfrac12)$ which is symmetric around $n/2$, hence
\[
\mathrm{SC}\left(\tfrac12;n\right)
=\mathbb{P}\left(X>\tfrac{n}{2}\right)+\tfrac12\mathbb{P}\left(X=\tfrac{n}{2}\right)
=\tfrac12,\qquad \forall n,
\]
Therefore, for this boundary case,
\[
R\left(\tfrac12,n\right)\equiv 0.
\]
which means, any extra budgets on questions with $\theta=\frac{1}{2}$ bring no increment. In the remainder of the proof, we may assume $\theta>\tfrac12$.


    If $n$ is odd, set $n=2m-1$. Two cases can change the majority vote:
    \begin{itemize}
        \item When there are $m$ correct answers in $2m-1$ samples, the majority is correct. If the $2m$-th answer is wrong, the total becomes $m$ correct and $m$ wrong, which is a tie. With random tie breaking, correctness drops from 1 to 0.5. The “drop amount” is 0.5. \\
        The probability of the borderline configuration is
\[
\mathbb{P}[\text{Choose which m of the first~} (2m-1)~ \text{are correct}]=\binom{2m-1}{m}\theta^{m}(1-\theta)^{m-1}.
\]
The next sample then satisfies
\[
\mathbb{P}[\text{the}~2m~\text{-th sample is wrong}]=1-\theta.
\]      
        
        while the $2m$-th is wrong.  Then the decreasing probability is 
        \[
         \frac{1}{2}\binom{2m-1}{m}\theta^m(1-\theta)^{m-1}\cdot(1-\theta).
        \]
        \item When there are $m-1$ correct answers in $2m-1$ samples, and the $2m$-th is correct. Then similarly, increasing probability is
        \[
     \frac{1}{2}\binom{2m-1}{m-1}\theta^{m-1}(1-\theta)^{m}\cdot \theta.
        \]
    \end{itemize}
    Thus $R(\theta, 2m-1)=0$ for odd $n=2m-1$. Similarly, for even $n$, there are also two cases which will increasing the correct probability, which is $m$ correct answers in $2m$ samples and the $2m+1$-th is also correct or wrong, thus 
    \[
        R(\theta,2m)=\frac{1}{2} \theta\binom{2m}{m}[\theta(1-\theta)]^m- \frac{1}{2}(1-\theta)\binom{2m}{m}[\theta(1-\theta)]^m=\left(\theta-\frac{1}{2}\right)\binom{2m}{m}[\theta(1-\theta)]^m.
    \]
    Furthermore, since
    \[
    \frac{R(\theta,2m+2)}{R(\theta,2m)}=\frac{\binom{2m+2}{m+1}}{\binom{2m}{m}}\cdot\theta(1-\theta)=\left(4-\frac{2}{m+1}\right)\theta(1-\theta)<1.
    \]
    Then $R(\theta,2m)$ is strictly decreasing for any $m$ when $\theta\neq \frac{1}{2}$.
\end{proof}

\begin{proof}[Proof of Theorem~\ref{thm:optimal}]

Recall the discretized (grided) optimization problem:
\begin{equation}\label{eq:grid_opt}
    \max_{\{B_j\in\mathbb{Z}_{\ge 0}\}}
    \sum_{j=1}^K \hat p_j \mathrm{SC}(\theta_j;B_j)
    \qquad
    \text{s.t.}\quad
    \sum_{j=1}^K \hat p_j B_j \le B_{\mathrm{total}}/N,
\end{equation}
where $\hat p_j$ is the estimated probability mass of grid $j$ (i.e., the fraction of problems whose difficulty falls into grid $j$).

If we increase $B_j$ by one (i.e. add one extra trial to grid $j$), then the objective in~\eqref{eq:grid_opt} increases by
\[
   \hat  p_j R(\theta_j,B_j),
\]
and the expected-budget constraint increases by $p_j$.
Therefore, the marginal gain per unit expected cost equals
\[
    \frac{\hat p_j R(\theta_j,B_j)}{\hat p_j}=R(\theta_j,B_j).
\]
This shows that the grid probability $\hat p_j$ appears in both the objective and the expected-budget constraint, and cancels out when comparing actions by marginal gain per expected-cost unit. Hence, choosing the next increment by maximizing $R(\theta_j,B_j)$ is exactly greedy with respect to the correct marginal reward per expected-cost unit.

By Lemma~\ref{lem:concave}, for every $\theta>1/2$ we have
\[
    R(\theta,2m-1)=0
    \quad\text{and}\quad
    R(\theta,0)>R(\theta,2)>\cdots>R(\theta,2m)>\cdots,
\]
i.e., the nontrivial marginal gains occur only at even budgets and form a nonincreasing sequence.
In particular, since $R(\theta,2m-1)=0$, we have
\[
    \mathrm{SC}(\theta;2m)=\mathrm{SC}(\theta;2m-1),
\]
so allocating the $2m$-th trial does not improve the objective compared with $2m-1$.
Equivalently, for any $m\ge 0$,
\[
    \mathrm{SC}(\theta;2m+1)-\mathrm{SC}(\theta;2m-1)
    =
    \big(\mathrm{SC}(\theta;2m+1)-\mathrm{SC}(\theta;2m)\big)
    =
    R(\theta,2m).
\]
Thus, when we think in effective increments, adding two trials to a grid (from $2m-1$ to $2m+1$) yields exactly the marginal reward $R(\theta,2m)$, and these effective marginal rewards decrease with $m$.

Consider the multiset of all effective marginal rewards 
\[
    \mathcal{H}:=\bigl\{R(\theta_j,2m): j\in[K], m=0,1,2,\ldots \bigr\}.
\]
Any feasible allocation $\{B_j\}$ in~\eqref{eq:grid_opt} corresponds to choosing a prefix of the sequence
\[
    R(\theta_j,0),R(\theta_j,2),R(\theta_j,4),\ldots
\]
because one cannot obtain the $(m+1)$-th effective gain for grid $i$ without also taking the first $m$ effective gains.
Moreover, by Lemma~\ref{lem:concave}, each such sequence is nonincreasing.

Algorithm~\ref{alg:greedy} maintains the active set
\[
    S=\{(j,B_j)\}_{i=1}^K
\]
and repeatedly selects the grid with the largest currently available effective marginal reward,
i.e., it chooses $j^\star\in\arg\max_j R(\theta_j,B_j)$ and then increases $B_{j^\star}$ by $2$.
Because each grid's effective marginal rewards form a nonincreasing sequence, this procedure is exactly the process which picks the globally largest remaining element from a collection of nonincreasing lists.
Hence after $t$ effective steps, the greedy algorithm has selected the $t$ largest elements in $\mathcal{H}$ that are feasible under the prefix constraints, which maximizes the accumulated improvement in the objective among all allocations spending the same expected budget.

If the remaining budget is insufficient to complete the next $2$-trial effective step for the current best grid, we can randomize the last step:
suppose the remaining expected budget is $\delta\in(0,2p_{j^\star})$. We perform the next $2$ trials for grid $j^\star$ with probability $\delta/(2p_{j^\star})$ and do nothing otherwise.
This keeps the expected budget exactly feasible and achieves the optimal convex combination of the two neighboring integer allocations.
Therefore, Algorithm~\ref{alg:greedy} attains the optimum of~\eqref{eq:grid_opt}.
\end{proof}

\subsection{ $2$-parameter Approximation in Multi-Choice Case}\label{sec:approx-2}

In the multi-choice setting, the difficulty parameter $\btheta$ is high-dimensional, making direct optimization costly. We therefore use a Gaussian surrogate family $\{\Phi(a\sqrt{n}+b)\}_{a>0}$ to approximate the self-consistency rate $\mathrm{SC}(\btheta;n)$. This enables a direct extension of Algorithm~\ref{alg:greedy}: replace $\mathrm{SC}(\btheta;n)$ with $\Phi(a\sqrt{n}+b)$ for a suitable pair $(a,b)$.

\begin{proposition}[Gaussian-probit approximation with $1/\sqrt n$ rate]\label{prop:gaussian-approx-rate}
Fix $M\ge 2$ and $\btheta\in\Delta^{M-1}$ with $\theta_1>\theta_2\ge\cdots\ge\theta_M$.
Let $d:=M-1$. Define the margin vector $\Delta\in\R_+^d$ by
\[
\Delta_j:=\theta_1-\theta_{j+1}\qquad (j=1,\ldots,d),
\]
and define the covariance matrix $\Sigma\in\R^{d\times d}$ of $V_1$ as in \eqref{eq:Sigma-def} below. Define
\[
\sigma_{\min}:=\min_{1\le j\le d}\sqrt{\Sigma_{jj}},
\qquad
a:=\min_{1\le j\le d}\frac{\Delta_j}{\sqrt{\Sigma_{jj}}}.
\]
Then for all $n\ge 1$, the majority-vote success probability satisfies
\[
\Bigl|\mathrm{SC}(\btheta;n)-\Phi(a\sqrt n)\Bigr|
\le \frac{C(\btheta,M)}{\sqrt n},
\]
where one explicit admissible constant is given by
\[
C(\btheta,M)
:=C_{\mathrm{BE}}d^{1/4}\rho(\btheta)
+\frac{d}{\sqrt{2\pi}\sigma_{\min}}
+\frac{(d-1)\phi(a)}{a},
\]
with the auxiliary quantities
\[
\rho(\btheta):=\E\Bigl\|\Sigma^{-1/2}(V_1-\Delta)\Bigr\|_2^3,
\qquad \phi(x):=\frac{1}{\sqrt{2\pi}}e^{-x^2/2}.
\]
Here $C_{\mathrm{BE}}$ is the Berry-Esseen approximation constant\citep{raivc2019multivariate}. Moreover, since $\|V_1-\Delta\|_2\le 2\sqrt d$ a.s., we have the fully explicit bound
\[
\rho(\btheta)\le \frac{8d^{3/2}}{\lambda_{\min}(\Sigma)^{3/2}},
\]
which yields the explicit bound
\[
C(\btheta,M) \le
\frac{8C_{\mathrm{BE}}d^{7/4}}{\lambda_{\min}(\Sigma)^{3/2}}
+\frac{d}{\sqrt{2\pi}\sigma_{\min}}
+\frac{(d-1)\phi(a)}{a}.
\]
\end{proposition}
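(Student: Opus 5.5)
We plan to write the majority-vote success event as an orthant event for a sum of i.i.d.\ vectors, and then compare it with the Gaussian. For a single trace with answer $Y$, define $V_1\in\{-1,0,1\}^d$ by $(V_1)_j:=\bm 1\{Y=1\}-\bm 1\{Y=j+1\}$. Then $\E V_1=\Delta$, and $\Sigma=\Cov(V_1)$ has entries $\Sigma_{jj}=\theta_1+\theta_{j+1}-\Delta_j^2$ and $\Sigma_{jk}=\theta_1-\Delta_j\Delta_k$ for $j\neq k$. Let $S_n=\sum_{b=1}^n V_b$. Up to ties, which we handle separately, answer $1$ wins exactly when $S_n\in(0,\infty)^d$. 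Hence
\[
\mathrm{SC}(\btheta;n)=\mathbb{P}(S_n\in(0,\infty)^d)+\text{(tie contribution)}.
\]
We then split $|\mathrm{SC}(\btheta;n)-\Phi(a\sqrt n)|$ into three pieces, one for each term of $C(\btheta,M)$:
\begin{enumerate}[label=(\roman*)]
\item a Berry--Esseen step replacing $S_n$ by a Gaussian vector;
\item control of the boundary and tie events;
\item replacing the Gaussian orthant probability by the single worst-coordinate marginal $\Phi(a\sqrt n)$.
\end{enumerate}

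\textbf{Step (i).} Normalize $W_n:=n^{-1/2}\Sigma^{-1/2}(S_n-n\Delta)$. The orthant $\{S_n>0\}$ is the image of a convex set under an affine map, so it is convex in the $W_n$ coordinates. The multivariate Berry--Esseen theorem for convex sets \citep{raivc2019multivariate} then gives
\[
\bigl|\mathbb{P}(S_n>0)-\mathbb{P}(\sqrt n\,\Delta+\sqrt n\,\Sigma^{1/2}Z/\sqrt n\cdot\sqrt n>0)\bigr|\le C_{\mathrm{BE}}d^{1/4}\rho(\btheta)/\sqrt n .
\]
Concretely, the comparison is between $\mathbb{P}(S_n>0)$ and $\mathbb{P}(G_n>0)$ with $G_n\sim\mathcal N(n\Delta,n\Sigma)$, and the error is $C_{\mathrm{BE}}d^{1/4}\rho/\sqrt n$. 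This produces the first term of $C(\btheta,M)$. The explicit bound on $\rho$ follows because $\|V_1-\Delta\|_2\le 2\sqrt d$ almost surely, which gives $\|\Sigma^{-1/2}(V_1-\Delta)\|_2^3\le 8d^{3/2}\lambda_{\min}^{-3/2}$.

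\textbf{Step (ii).} The success probability lies between $\mathbb{P}(S_n>0)$ and $\mathbb{P}(S_n\ge0)$. So we only need to bound $\sum_j\mathbb{P}((S_n)_j=0)$, and a union bound does this one coordinate at a time. Each coordinate $(S_n)_j$ is a scalar lattice sum with span $1$ and variance $n\Sigma_{jj}$. A one-dimensional local bound (a Berry--Esseen-type bound, or directly the Gaussian density maximum $1/\sqrt{2\pi n\Sigma_{jj}}$ plus a lower-order correction) gives $\mathbb{P}((S_n)_j=0)\lesssim 1/(\sqrt{2\pi n}\,\sigma_{\min})$. Summing over $j$ gives the second term $d/(\sqrt{2\pi}\sigma_{\min}\sqrt n)$. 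If the lower-order correction does not fit exactly, we absorb it into the Berry--Esseen term by applying Step (i) to a slightly shifted orthant.

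\textbf{Step (iii).} This is the step we expect to be the main obstacle. Write the Gaussian orthant probability as $\mathbb{P}(\cap_j\{Z_j>-a_j\sqrt n\})$, where $a_j:=\Delta_j/\sqrt{\Sigma_{jj}}$ and the $Z_j$ are standard normals with some correlation. The upper bound is immediate: this probability is at most $\min_j\Phi(a_j\sqrt n)=\Phi(a\sqrt n)$. For the lower bound, the union bound gives
\[
\mathbb{P}\ge\Phi(a\sqrt n)-\sum_{j\ne j^*}\bar\Phi(a_j\sqrt n)\ge \Phi(a\sqrt n)-(d-1)\bar\Phi(a\sqrt n),
\]
where $j^*$ attains $a$. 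The Mills ratio then gives $\bar\Phi(a\sqrt n)\le \phi(a\sqrt n)/(a\sqrt n)\le\phi(a)/(a\sqrt n)$ for $n\ge1$. This is exactly the third term. Adding the three errors yields the stated constant. The care needed is mostly in Step (ii), namely making the lattice tie estimate fully explicit with the constant $1/\sqrt{2\pi}$.
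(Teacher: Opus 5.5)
\textbf{There is a genuine gap in Step (ii).} The per-coordinate estimate $\mathbb{P}((S_n)_j=0)\lesssim 1/(\sqrt{2\pi n}\,\sigma_{\min})$ does not hold with the Gaussian constant. The lattice local limit theorem only gives $\mathbb{P}((S_n)_j=k)\le h/\sqrt{2\pi n\Sigma_{jj}}+O(1/n)$, where $h$ is the span. Your claim that the span is $1$ fails whenever $\theta_1+\theta_{j+1}=1$, and in particular always for $M=2$. There $S_n\in n+2\Z$. For even $n$ and $\theta_1\downarrow\frac12$ one has $\mathbb{P}(S_n=0)=\binom{n}{n/2}(\theta_1\theta_2)^{n/2}\to\binom{n}{n/2}2^{-n}\sim 2/\sqrt{2\pi n}$, while $\sigma_{\min}\to 1$. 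Already at $n=2$ this is $\frac12$ against the claimed $1/(2\sqrt{\pi})\approx 0.28$. So the discrete tie mass exceeds the second term of $C(\btheta,M)$ by a factor $2$ at leading order; it is not a lower-order correction.

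Your fallback (``apply Step (i) to a shifted orthant'') points the right way, but as a patch to the tie estimate it is too costly. Bounding $\mathbb{P}(S_n\ge 0)-\mathbb{P}(S_n\ge 1)$ by two Berry--Esseen comparisons plus a Gaussian slab, and then adding the Step (i) error, gives three copies of $C_{\mathrm{BE}}d^{1/4}\rho(\btheta)/\sqrt n$ instead of one.

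\textbf{The missing idea is to never estimate lattice tie probabilities.} The paper sandwiches $\mathbb{P}(S_n\in[1,\infty)^d)\le \mathrm{SC}(\btheta;n)\le \mathbb{P}(S_n\in\R_+^d)$. It applies the convex-set Berry--Esseen bound to each of these two sets separately, so each direction costs one error. It then pays for the boundary only on the Gaussian side, $\mathbb{P}(G\in\R_+^d)-\mathbb{P}(G\in[1,\infty)^d)\le\sum_j\mathbb{P}(0\le G_j\le 1)\le d/(\sqrt{2\pi n}\,\sigma_{\min})$, where the density bound is exact.

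In fact your own sandwich closes without Step (ii). The convex-set bound also covers open sets, so use it on $(0,\infty)^d$ for the lower bound and on $\R_+^d$ for the upper bound. Since $G$ puts no mass on the boundary, you get $|\mathrm{SC}(\btheta;n)-\mathbb{P}(G\in\R_+^d)|\le C_{\mathrm{BE}}d^{1/4}\rho(\btheta)/\sqrt n$, and the stated constant holds a fortiori.

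\textbf{Step (iii) is correct and differs from the paper.} The paper lower-bounds the Gaussian orthant by $\Phi(a\sqrt n)^d$ via Gaussian association, which needs $\Sigma_{jk}\ge 0$, and then applies Bernoulli's inequality. Your union bound reaches the same $(d-1)\bigl(1-\Phi(a\sqrt n)\bigr)$ directly and needs no sign condition on the correlations.

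Finally, you use $\Sigma^{-1/2}$ without checking $\Sigma\succ 0$. The paper verifies this, and it requires all $\theta_k>0$.
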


\begin{proof}[Proof of Proposition~\ref{prop:gaussian-approx-rate}]
Let $A\in\{1,\ldots,M\}$ with $\mathbb{P}(A=i)=\theta_i$.
After $n$ i.i.d. draws, let
\[
X=(X_1,\ldots,X_M)\sim\mathrm{Multinomial}(n,\btheta),
\qquad
\mathrm{SC}(\btheta;n)=p_n:=\mathbb{P}(\arg\max_i X_i=1),
\]
with uniform random tie-breaking.

Define the margin vector
\[
D:=(D_2,\ldots,D_M)\in\Z^d,
\qquad D_j:=X_1-X_j\ (j=2,\ldots,M).
\]
Let $\cE_n:=\{\arg\max_i X_i=1\}$. Then deterministically
\[
\{D\in[1,\infty)^d\}\subseteq \cE_n\subseteq \{D\in\R_+^d\}.
\]
This gives the sandwich bound
\begin{equation}\label{eq:sandwich-D}
\mathbb{P}(D\in[1,\infty)^d)\le p_n\le \mathbb{P}(D\in\R_+^d).
\end{equation}

Let $A_1,\ldots,A_n$ be the i.i.d. samples. Define for each round $b$ the vector
\[
V_b=(V_{b,2},\ldots,V_{b,M})\in\{-1,0,1\}^d,
\qquad
V_{b,j}:=\mathbb I(A_b=1)-\mathbb I(A_b=j).
\]
This gives
\[
D=\sum_{b=1}^n V_b,
\qquad
\E[V_b]=\Delta:=(\theta_1-\theta_2,\ldots,\theta_1-\theta_M)\in\R_+^d.
\]
A direct computation gives the covariance matrix $\Sigma=\Cov(V_1)\in\R^{d\times d}$:
\begin{equation}\label{eq:Sigma-def}
\Sigma_{jj}
=\theta_1+\theta_{j+1}-(\theta_1-\theta_{j+1})^2,\qquad
\Sigma_{jk}
=\theta_1-(\theta_1-\theta_{j+1})(\theta_1-\theta_{k+1})\quad(j\neq k).
\end{equation}
Note that $\Delta_j\le\theta_1\le 1$ implies $\Sigma_{jk}\ge \theta_1-\theta_1^2=\theta_1(1-\theta_1)\ge 0$,
so $\Sigma$ has nonnegative off-diagonal entries.
Since $\Sigma=\Cov(V_1)$, it is always positive semidefinite. To prove $\Sigma\succ0$,
it suffices to show that for any $u\in\R^d$,
\[
u^\top \Sigma u=\Var(u^\top V_1)=0 \quad\Longrightarrow\quad u=0.
\]

Note that $V_1$ takes only the following values:
\[
V_1=
\begin{cases}
\mathbf 1_d, & A=1,\\
-e_j, & A=j+1,\quad j=1,\ldots,d,
\end{cases}
\]
where $\mathbf 1_d$ is the all-ones vector in $\R^d$ and $e_j$ is the $j$-th standard basis vector.
Thus the inner product is
\[
u^\top V_1=
\begin{cases}
\sum_{i=1}^d u_i, & A=1,\\
-u_j, & A=j+1,\quad j=1,\ldots,d.
\end{cases}
\]

If $\Var(u^\top V_1)=0$, then $u^\top V_1$ must be almost surely constant.
Since $\theta_1>0$ and $\theta_{j+1}>0$ for all $j$, all events $\{A=1\}$ and $\{A=j+1\}$ occur with positive probability,
so we must have
\[
\sum_{i=1}^d u_i = -u_j,\qquad \forall j=1,\ldots,d.
\]
In particular, the right-hand side does not depend on $j$, implying $u_1=\cdots=u_d=:c$.
Substituting into the above identities yields $dc=-c$, i.e., $(d+1)c=0$, hence $c=0$ and therefore $u=0$.

Thus $\Var(u^\top V_1)=0$ implies $u=0$, which proves $\Sigma\succ0$.

Let $G\sim\cN(n\Delta,n\Sigma)$.
By Bentkus' convex-set Berry--Esseen bound\citep{raivc2019multivariate}, for any convex set $A\subset\R^d$,
\[
\bigl|\mathbb{P}(D\in A)-\mathbb{P}(G\in A)\bigr|
\le \frac{C_{\mathrm{BE}}d^{1/4}}{\sqrt n}\,
\E\Bigl\|\Sigma^{-1/2}(V_1-\Delta)\Bigr\|_2^3.
\]
Both $A_0:=\R_+^d$ and $A_1:=[1,\infty)^d$ are convex, hence
\begin{equation}\label{eq:BE-two-orthants}
\Bigl|\mathbb{P}(D\in A_k)-\mathbb{P}(G\in A_k)\Bigr|
\le \frac{C_{\mathrm{BE}}d^{1/4}}{\sqrt n}\,\rho(\btheta),
\qquad k\in\{0,1\}.
\end{equation}

Write $G=n\Delta+\sqrt n\,Z$ with $Z\sim\cN(0,\Sigma)$.
This implies
\[
0\le \mathbb{P}(G\in A_0)-\mathbb{P}(G\in A_1)
\le \sum_{j=1}^d \mathbb{P}(0\le G_j\le 1)
= \sum_{j=1}^d \mathbb{P}\!\left(-\sqrt n\,\Delta_j \le Z_j \le \frac1{\sqrt n}-\sqrt n\,\Delta_j\right).
\]
Since $Z_j\sim\cN(0,\Sigma_{jj})$ has density bounded by $1/\sqrt{2\pi\Sigma_{jj}}$,
\[
\mathbb{P}\!\left(-\sqrt n\,\Delta_j \le Z_j \le \frac1{\sqrt n}-\sqrt n\,\Delta_j\right)
\le \frac{1}{\sqrt n}\cdot \frac{1}{\sqrt{2\pi\Sigma_{jj}}}
\le \frac{1}{\sqrt n}\cdot \frac{1}{\sqrt{2\pi}\,\sigma_{\min}}.
\]
Therefore, the boundary gap satisfies
\begin{equation}\label{eq:boundary-gap-rate}
0\le \mathbb{P}(G\in A_0)-\mathbb{P}(G\in A_1)
\le \frac{d}{\sqrt n}\cdot \frac{1}{\sqrt{2\pi}\,\sigma_{\min}}.
\end{equation}

Let $Z'=(Z'_1,\ldots,Z'_d)$ be the coordinate-wise standardized version of $Z$:
\[
Z'_j:=-\frac{Z_j}{\sqrt{\Sigma_{jj}}},
\qquad
c_j:=\frac{\Delta_j}{\sqrt{\Sigma_{jj}}}>0,
\qquad
a:=\min_{1\le j\le d}c_j.
\]
This gives
\[
\mathbb{P}(G\in A_0)=\mathbb{P}(Z\ge -\sqrt n\,\Delta)
=\mathbb{P}(Z'\le \sqrt n\,c)=:g(\sqrt n).
\]

Let $j^\star\in\arg\min_j c_j$ so that $c_{j^\star}=a$.
Since the event $\{Z'\le \sqrt n\,c\}$ implies $Z'_{j^\star}\le a\sqrt n$, we have the upper bound
\[
g(\sqrt n)\le \mathbb{P}(Z'_{j^\star}\le a\sqrt n)=\Phi(a\sqrt n).
\]
For the lower bound, by Gaussian association (nonnegative correlations) the decreasing events \citep{esary1967association}.
$\{Z'_j\le a\sqrt n\}$ satisfy
\[
\mathbb{P}\Bigl(\bigcap_{j=1}^d\{Z'_j\le a\sqrt n\}\Bigr)\ge \prod_{j=1}^d \mathbb{P}(Z'_j\le a\sqrt n)=\Phi(a\sqrt n)^d.
\]
Moreover $\bigcap_j\{Z'_j\le a\sqrt n\}\subseteq \{Z'\le \sqrt n\,c\}$, hence
\[
g(\sqrt n)\ge \Phi(a\sqrt n)^d.
\]
Therefore, the Gaussian approximation error satisfies
\begin{equation}\label{eq:g-sandwich-rate}
0\le \Phi(a\sqrt n)-g(\sqrt n)\le \Phi(a\sqrt n)-\Phi(a\sqrt n)^d.
\end{equation}
Now let $p:=\Phi(a\sqrt n)\in(1/2,1)$ and note that for $x\in[0,1]$,
$1-x^{d-1}\le (d-1)(1-x)$ (Bernoulli inequality).
The Bernoulli inequality yields
\[
p-p^d=p\bigl(1-p^{d-1}\bigr)\le (d-1)p(1-p)\le (d-1)(1-p).
\]
Using Mills' ratio $1-\Phi(x)\le \phi(x)/x$ for $x>0$, we obtain
\[
0\le \Phi(a\sqrt n)-g(\sqrt n)
\le (d-1)\bigl(1-\Phi(a\sqrt n)\bigr)
\le (d-1)\frac{\phi(a\sqrt n)}{a\sqrt n}
\le \frac{(d-1)\phi(a)}{a\sqrt n},
\]
where the last step uses $\phi(a\sqrt n)\le \phi(a)$ for $n\ge 1$.

From \eqref{eq:sandwich-D}, \eqref{eq:BE-two-orthants}, and \eqref{eq:boundary-gap-rate},
\[
\bigl|p_n-\mathbb{P}(G\in A_0)\bigr|
\le \frac{C_{\mathrm{BE}}d^{1/4}}{\sqrt n}\rho(\btheta)
+\frac{d}{\sqrt n}\cdot \frac{1}{\sqrt{2\pi}\sigma_{\min}}.
\]
Adding Step~5 yields
\[
\bigl|p_n-\Phi(a\sqrt n)\bigr|
\le
\frac{C_{\mathrm{BE}}d^{1/4}}{\sqrt n}\rho(\btheta)
+\frac{d}{\sqrt n}\cdot \frac{1}{\sqrt{2\pi}\sigma_{\min}}
+\frac{(d-1)\phi(a)}{a\sqrt n}.
\]
This proves the claim with $b=0$ and the stated explicit constant $C(\btheta,M)$.
\end{proof}
\begin{remark}
    Indeed, in practice we use the regression method to decide parameter $a,b$ for given $\btheta$, such approximation has very great precision, see Figure~\ref{fig:voting_curves_panels} as a reference.
\end{remark}

Recall the Gaussian surrogate family
\[
g_{a,b}(k)\;:=\;\Phi\!\bigl(a\sqrt{k}+b\bigr),\qquad a>0,\ k\ge 0,
\]
where $\Phi$ is the standard normal CDF and $\phi:=\Phi'$ is the standard normal PDF.

\paragraph{Concavity on the relevant budget range.}
In our allocation procedure, each question (or difficulty grid) receives at least a small warm-up budget before the greedy stage.
Let $k_{\min}\ge 1$ denote the smallest budget value that can occur in the greedy stage (e.g., $k_{\min}=4$ if we warm up with 4 samples).
The next lemma shows $g_{a,b}$ is concave for all $k\ge k_{\min}$ as long as $a\sqrt{k}+b$ is nonnegative on that range.

\begin{lemma}[Concavity of $g_{a,b}$ for $k\ge k_{\min}$]
\label{lem:gaussian-concave}
Fix $a>0$ and $b\in\R$. If
\begin{equation}\label{eq:concave-condition}
a\sqrt{k_{\min}}+b \;\ge\; 0,
\end{equation}
then the function $k\mapsto g_{a,b}(k)$ is concave on $[k_{\min},\infty)$ (in the usual continuous sense).
In particular, $g_{a,b}'(k)$ is nonincreasing on $[k_{\min},\infty)$.
\end{lemma}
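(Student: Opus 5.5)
We prove concavity by computing the second derivative of $g_{a,b}(k)=\Phi(a\sqrt{k}+b)$ in $k$ on $(0,\infty)$ and checking its sign. Write $u(k):=a\sqrt{k}+b$, so that $u'(k)=\tfrac{a}{2}k^{-1/2}$ and $u''(k)=-\tfrac{a}{4}k^{-3/2}$. By the chain rule,
\[
g_{a,b}'(k)=\phi(u(k))\,u'(k)=\frac{a}{2\sqrt{k}}\,\phi\bigl(a\sqrt{k}+b\bigr),
\]
which is strictly positive. Differentiating again and using $\phi'(x)=-x\phi(x)$ gives
\[
g_{a,b}''(k)=\phi(u)\bigl(u''-u\,(u')^2\bigr)
=-\phi(u)\Bigl(\frac{a}{4k^{3/2}}+\bigl(a\sqrt{k}+b\bigr)\frac{a^2}{4k}\Bigr).
\]

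The sign analysis is then immediate. For $k\ge k_{\min}\ge 1$, the map $k\mapsto a\sqrt{k}+b$ is increasing because $a>0$. Condition \eqref{eq:concave-condition} therefore gives $a\sqrt{k}+b\ge a\sqrt{k_{\min}}+b\ge 0$ on the whole range. In the bracket, the first term $\tfrac{a}{4k^{3/2}}$ is strictly positive, and the second term is a nonnegative number times $\tfrac{a^2}{4k}>0$. Since $\phi(u)>0$, we conclude that $g_{a,b}''(k)<0$ for all $k\ge k_{\min}$.

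A twice-differentiable function with nonpositive second derivative on an interval is concave there, and its first derivative is nonincreasing. This proves both claims. Equivalently, one could argue the monotonicity of $g_{a,b}'$ directly: it is the product of the positive decreasing factor $\tfrac{a}{2\sqrt{k}}$ and the factor $\phi(a\sqrt{k}+b)$, which is positive and nonincreasing because its argument is nonnegative and increasing while $\phi$ decreases on $[0,\infty)$.

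There is no real obstacle. The only point requiring care is the sign of the term $u\,(u')^2$. This is exactly where the hypothesis $a\sqrt{k_{\min}}+b\ge 0$ enters: it guarantees that the Gaussian argument stays in the region $u\ge 0$, where $\phi$ is decreasing. Without it, concavity can fail near $k_{\min}$.
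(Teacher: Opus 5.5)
Your proposal is correct and follows the paper's proof essentially verbatim: compute $g_{a,b}''(k)=-\phi(t(k))\bigl(\tfrac{a^2 t(k)}{4k}+\tfrac{a}{4k^{3/2}}\bigr)$ with $t(k)=a\sqrt{k}+b$, and use the hypothesis to get $t(k)\ge 0$ on $[k_{\min},\infty)$. Your additional remarks (strict negativity, and the direct monotonicity argument for $g_{a,b}'$ as a product of two positive nonincreasing factors) are valid small strengthenings.
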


\begin{proof}
For $k>0$, let $t(k):=a\sqrt{k}+b$. By the chain rule,
\[
g_{a,b}'(k)
= \phi(t(k))\cdot \frac{a}{2\sqrt{k}}.
\]
Differentiating again (using $\phi'(x)=-x\phi(x)$) yields
\begin{align*}
g_{a,b}''(k)
&= \frac{d}{dk}\Bigl(\phi(t(k))\cdot \frac{a}{2\sqrt{k}}\Bigr)\\
&= \phi'(t(k))\cdot t'(k)\cdot \frac{a}{2\sqrt{k}}
\;+\;\phi(t(k))\cdot \frac{d}{dk}\Bigl(\frac{a}{2\sqrt{k}}\Bigr)\\
&= \bigl(-t(k)\phi(t(k))\bigr)\cdot \frac{a}{2\sqrt{k}}\cdot \frac{a}{2\sqrt{k}}
\;+\;\phi(t(k))\cdot\Bigl(-\frac{a}{4k^{3/2}}\Bigr)\\
&= -\,\phi(t(k))\left(\frac{a^2\,t(k)}{4k}+\frac{a}{4k^{3/2}}\right).
\end{align*}
Now assume \eqref{eq:concave-condition}. Then for all $k\ge k_{\min}$ we have $t(k)=a\sqrt{k}+b\ge 0$.
Since $\phi(t(k))>0$ and the bracketed term is nonnegative, we conclude $g_{a,b}''(k)\le 0$ for all $k\ge k_{\min}$,
i.e., $g_{a,b}$ is concave on $[k_{\min},\infty)$.
\end{proof}

In our multi-choice extension, we approximate $\mathrm{SC}(\btheta_i;k)$ by
\[
\mathrm{SC}(\btheta;k)\approx =\Phi(a_i\sqrt{k}+b_i),
\]
which has diminishing returns on the greedy range. Therefore, replacing $\mathrm{SC}(\btheta_i;k)$ by this Gaussian family preserves the key structural property needed by
Algorithm~\ref{alg:greedy}, and the same greedy allocation rule applies verbatim in the multi-choice case.

\begin{figure*}[t]
    \centering
    \includegraphics[width=1\linewidth]{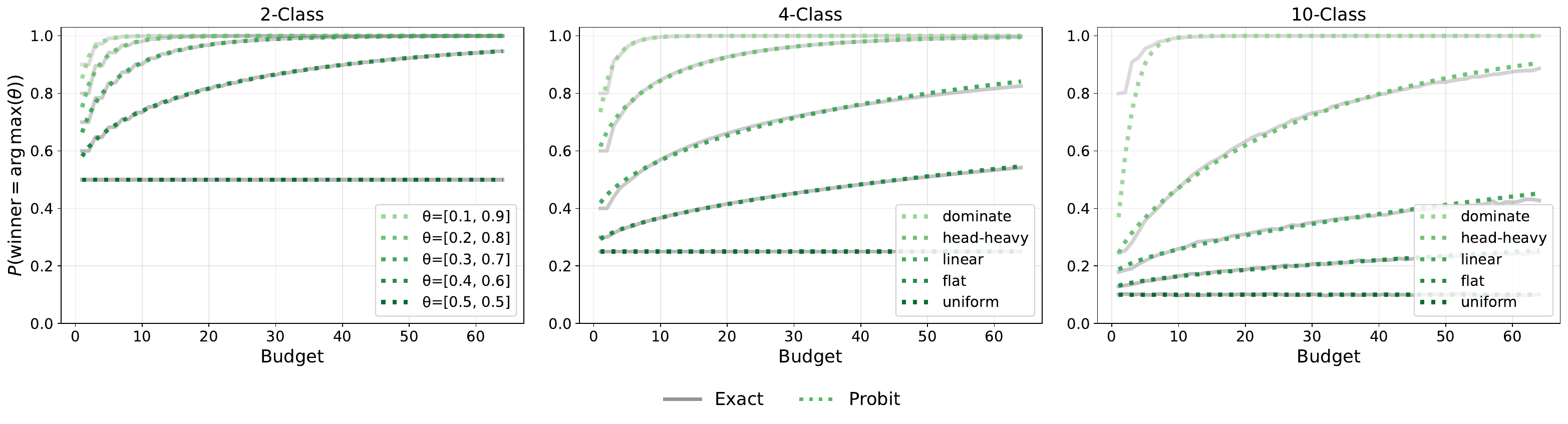}
\caption{
Probability that multinomial majority voting selects the true best option as a function of budget.
We plot
$\mathbb{P}\!\left(Y^{\mathrm{Maj}}(B)=\arg\max_{y\in\mathcal{Y}}\theta_y\right)$
versus $\textsc{Budget}\in\{1,\ldots,64\}$ for different ground-truth preference vectors
$\bm{\theta}=(\theta_1,\ldots,\theta_M)$.
Gray curves (\textit{Exact}) are computed from the true $\bm{\theta}$ (exact for $M=2,4$; Monte Carlo estimates for $M=10$),
while green dotted curves (\textit{Probit}) are produced by fitting a two-parameter probit model and evaluating the fitted model across budgets.
The fitted probit curves closely track the exact/MC curves in all regimes.
Panels correspond to $M\in\{2,4,10\}$.
For $M=4$, we use:
\textsc{dominate} $[0.8,0.1,0.1,0]$,
\textsc{head-heavy} $[0.6,0.3,0.05,0.05]$,
\textsc{linear} $[0.4,0.3,0.2,0.1]$,
\textsc{flat} $[0.3,0.25,0.25,0.2]$,
and \textsc{uniform} $[0.25,0.25,0.25,0.25]$.
For $M=10$, we use:
\textsc{dominate} $[0.8,0.15,0.05,0,\ldots,0]$,
\textsc{head-heavy} $[0.25,0.15,0.10,0.0714,\ldots,0.0714]$,
\textsc{linear} $[10,9,\ldots,1]/55$,
\textsc{flat} $\mathrm{normalize}([10,9,\ldots,1]^{0.4})$,
and \textsc{uniform} $[0.1,\ldots,0.1]$.
}

    \label{fig:voting_curves_panels}
\end{figure*}

\subsection{Warm-up griding and Difficulty Estimation Details}
\label{app:warmup-griding}

In practice, we use a short warm-up to map each question to a coarse difficulty grid, and then apply a precomputed one-shot budget for that grid. 
For each question $q$, we first draw $4$ responses and let $C_q^{(4)}=(c_1,c_2,c_3,c_4)$ be the sorted option counts. Up to label permutation, $C_q^{(4)}$ takes one of five patterns, inducing five grids $T_q\in\{1,\dots,5\}$ via a deterministic rule $T_q=g(C_q^{(4)})$. Using \textbf{training} questions with large i.i.d.\ answer pools, we estimate the grid proportions $\{\hat p_j\}_{j=1}^5$ and fit a representative surrogate curve $(\hat a_j,\hat b_j)$ for each grid. At \textbf{test} time, we assign $q$ to $T_q$ using the $4$ warm-up samples and then allocate $B_{T_q}$ additional samples in a single shot. We provide the estimation details delow.

Given a question $q$, we draw $4$ i.i.d.\ LLM responses and count how many times each answer option is selected. Let
\[
C_q^{(4)}=(c_1,c_2,c_3,c_4)
\]
denote the sorted (descending) counts of answer options among the first four responses, padding with zeros if fewer than four distinct options appear. Note that this definition is valid regardless of the total number of answer options: with only four samples, at most four options can appear, so the count pattern space is unchanged for multiple-choice with $\ge4$ options and also extends to other discrete-answer settings.

By symmetry among answer labels, there are exactly five possible patterns:
\[
\Bigl\{
(4,0,0,0),\,
(3,1,0,0),\,
(2,2,0,0),\,
(2,1,1,0),\,
(1,1,1,1)
\Bigr\}.
\]
We define a deterministic grid mapping
\[
T_q=g(C_q^{(4)})\in\{1,\dots,5\},
\]
e.g., $g$ can map the patterns above to grids in the listed order (any fixed one-to-one mapping works as long as it is used consistently).

For each training question $q$, we assume access to a large pool of i.i.d.\ answers generated from
\[
y_q \sim \mathrm{Cat}(\btheta_q).
\]
To account for the randomness of using only four warm-up samples, we repeatedly subsample four answers (without replacement from the pool, or with replacement if the pool is large) and record the induced grid:
\[
T_q^{(r)} = g\!\left(C_{q,r}^{(4)}\right), \qquad r=1,\dots,R.
\]
This yields an empirical, question-specific grid distribution
\[
\hat p_q(j)=\frac{1}{R}\sum_{r=1}^R \mathbf{1}\{T_q^{(r)}=j\},
\qquad j\in\{1,\dots,5\}.
\]
Averaging over training questions produces the empirical grid proportions
\[
\hat p_j
=\frac{1}{|\cQ_{\mathrm{train}}|}\sum_{q\in\cQ_{\mathrm{train}}}\hat p_q(j),
\qquad j\in\{1,\dots,5\}.
\]

For each training question $q$, we fit its two-parameter difficulty curve $(\hat a_q,\hat b_q)$ using the procedure in Section~\ref{sec:approx-2} based on the full answer pool (so that the fit is stable). 

To obtain a representative curve for each grid, we aggregate the per-question fits within that grid. To reduce sensitivity to borderline cases, we recommend using the soft grid weights $\hat p_q(j)$:
\[
(\hat a_j,\hat b_j)
=
\arg\min_{(a,b)}
\sum_{q\in\cQ_{\mathrm{train}}}\hat p_q(j)\,\ell_q(a,b),
\]
where $\ell_q(a,b)=\E_{n}\Bigl|\mathrm{SC}(\btheta(q);n)-\Phi(a\sqrt n+b)\Bigr|$ is the fitting loss in Section~\ref{sec:approx-2}.
A simpler alternative is a weighted average in parameter space,
\[
\hat a_j=\frac{\sum_q \hat p_q(j)\hat a_q}{\sum_q \hat p_q(j)},
\qquad
\hat b_j=\frac{\sum_q \hat p_q(j)\hat b_q}{\sum_q \hat p_q(j)},
\]
which we found to work well when the per-question fits are already accurate.

\section{An Asymptotic Perspective on Online vs.\ Offline~\method}\label{app:equiv}
In this section, we prove that as the total budget $B\to\infty$, the budget allocated to each question type converges to a fixed proportion for both the offline (Section~\ref{sec:offline_met}) and the online (Section~\ref{sec:online_met}) settings. Moreover, the limiting proportions in the offline and online cases are very close, which highlights the consistency between the two methods.

\subsection{Convergence of Budget Proportions for the Online Case}\label{sec:conv-online}

We first show that as $B\to\infty$, every problem type will be sampled infinitely many times.

\begin{lemma}\label{lem:infinite-sample}
For a binary-choice problem set $\cP$ with difficulty labels $\{\theta_i\}_{i\in [N]}$ and corresponding probabilities $\{p_i\}_{i\in[N]}$ (where $p_i>0$), suppose we allocate budgets using Algorithm~\ref{alg:greedy}. Then, as the total budget $B\to\infty$, the sample count $B_i(B)$ for every $i$ diverges to infinity.
\end{lemma}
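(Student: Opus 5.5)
We argue by contradiction, using the explicit marginal gains from Lemma~\ref{lem:concave}. We assume $\theta_i\neq\tfrac12$ for every $i$. A type with $\theta_i=\tfrac12$ has $R(\tfrac12,n)\equiv0$, so the statement can only hold for it if ties are broken in its favor; we either exclude such types or state that convention explicitly.

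\textbf{Step 1: the gains are positive and vanish.} By Lemma~\ref{lem:concave}, the effective gain available to type $i$ at even budget $2m$ is
\[
R(\theta_i,2m)=\left(\theta_i-\tfrac12\right)\binom{2m}{m}[\theta_i(1-\theta_i)]^m .
\]
(For $\theta_i<\tfrac12$ we apply the symmetry $R(\theta,n)=R(1-\theta,n)$ first.) Each of these gains is strictly positive. Since $\binom{2m}{m}\le 4^m$, we also have
\[
R(\theta_i,2m)\le \tfrac12\,(4\theta_i(1-\theta_i))^m\to 0 \quad\text{as } m\to\infty,
\]
because $4\theta_i(1-\theta_i)<1$.

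\textbf{Step 2: a type stuck at finite budget keeps a fixed positive gain.} Suppose some type $i$ has $B_i(B)$ that does not diverge. Algorithm~\ref{alg:greedy} only ever increments budgets, so $B_i(B)$ is nondecreasing in $B$ as the run proceeds. It is therefore eventually constant, say equal to $\bar B_i$. From that point on, type $i$'s current gain $\delta_i=R(\theta_i,\bar B_i)$ is frozen. Write $\epsilon:=R(\theta_i,\bar B_i)$; by Step 1, $\epsilon>0$.

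\textbf{Step 3: the other types exhaust their large gains.} Every iteration of the while loop adds at least $\min_j p_j>0$ to $T$. Hence, as $B\to\infty$, the number of greedy iterations tends to infinity. After type $i$ freezes, every one of these iterations selects some $j\neq i$, and each selected $j$ must satisfy $\delta_j\ge\epsilon$ at that moment. By Step 1, each type $j$ has $R(\theta_j,2m)\ge\epsilon$ for only finitely many $m$, say $m\le m_j(\epsilon)$. So type $j$ can be selected at most $m_j(\epsilon)+1$ times while its gain is at least $\epsilon$. Summing over $j$, only finitely many selections are possible before every $\delta_j$ drops below $\epsilon$. After that, the argmax is type $i$, so the algorithm increments $B_i$. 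This contradicts the assumption that $B_i$ was frozen, so $B_i(B)\to\infty$ for every $i$.

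The main obstacle is a monotonicity issue in Step 2. Algorithm~\ref{alg:greedy} is indexed by the input budget $B$, and the statement concerns $B\to\infty$, so we need $B_i(B)$ to be nondecreasing in $B$. This holds because the algorithm's run with budget $B'>B$ extends the run with budget $B$: the sequence of choices does not depend on $B$ except through the stopping time. We fix the tie-breaking rule in the argmax once so that the runs are nested. With that in place, the pigeonhole count in Step 3 is routine. Odd budgets cause no difficulty, since the algorithm only visits budgets $0,1,3,5,\ldots$, and the relevant gains are exactly the even-index $R$ values (as in the proof of Theorem~\ref{thm:optimal}).
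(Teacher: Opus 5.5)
Your argument is correct and is essentially the paper's: a type frozen at a finite budget keeps a fixed positive effective gain, while the gains of the types that keep being selected decay to zero, so the greedy rule must eventually return to the frozen type. Your pigeonhole count over the other types and your remark that runs for different budgets are nested make it somewhat more careful than the paper's version.

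Three small repairs are needed:
\begin{itemize}
\item In Step~3, the number of loop iterations diverges because each iteration adds \emph{at most} $2\max_j p_j$ to $T$. The lower bound $\min_j p_j$ you cite only shows that the loop terminates.
\item For odd $\bar B_i$, the frozen gain must be written as the effective two-step gain $R(\theta_i,\bar B_i+1)$. Taken literally, $R(\theta_i,\bar B_i)=0$ by Lemma~\ref{lem:concave}.
\item The positivity claim in Step~1 also requires $\theta_i\notin\{0,1\}$: for $\theta_i=1$, every gain after the first vanishes. The paper excludes this case implicitly by assuming $\theta_i\in(1/2,1)$.
\end{itemize}
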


\begin{proof}[Proof of Lemma~\ref{lem:infinite-sample}]
Since difficulty labels $\theta$ and $1-\theta$ lead to the same behavior in our algorithm, without loss of generality we assume $\theta_i\in(1/2,1)$. For simplicity, assume each problem type has been sampled once initially. Then, following Algorithm~\ref{alg:greedy}, the marginal accuracy gain from allocating two additional budget units at level $2m-1$ is
\begin{equation}
R(2m-1;\theta_i)
= p_i\left(\theta_i-\frac{1}{2}\right)\binom{2m}{m}\,[\theta_i(1-\theta_i)]^m,
\qquad m\in\N,
\end{equation}
where $\lambda_i$ is the probability mass of the problem type with label $\theta_i$. At round $t$, the algorithm selects the type satisfying
\[
i_t\in \arg\max_{i} R(2m_i(t)-1;\theta_i), 
\qquad 
m_{i_t}(t+1)=m_{i_t}(t)+1,\quad
m_j(t+1)=m_j(t)\ \ (j\neq i_t),
\]
with the initialization $m_i(0)=1$. Define $\lambda_i:=\theta_i(1-\theta_i)\in(0,1/4)$. Then 

\[
\frac{R(2m+1;\theta_i)}{R(2m-1;\theta_i)}
=
\frac{\binom{2m+2}{m+1}}{\binom{2m}{m}}\cdot \lambda_i
=
\left(4-\frac{2}{m+1}\right)\lambda_i
< 4\lambda_i< 1.
\]

Therefore, as $m$ increases, the marginal gain for each arm strictly decreases.

Now suppose there exists some type $j$ that is sampled only finitely many times as $B\to\infty$. Then there exists a constant $M$ such that $m_j(t)\equiv M$ for all sufficiently large $t$, and thus $R(m_j(t);\theta_j)\equiv R(M;\theta_j)>0$ remains constant. On the other hand, at least one arm $i$ must be selected infinitely often; for that arm, we have $m_i(t)\to\infty$ and hence $R(m_i(t);\theta_i)\to 0$, which contradicts the greedy choice rule (since eventually $R(M;\theta_j)$ would dominate). Therefore, every arm must be sampled infinitely often, i.e., $m_i(t)\to\infty$ for all $i$.
\end{proof}

Next, we derive the convergence of sampling proportions and identify the limiting ratios.

\begin{proposition}\label{prop:conv-online}
Under the same conditions as Lemma~\ref{lem:infinite-sample}, the ratio $m_i(B)/B$ converges to a constant for each $i$ as $B\to\infty$. Moreover, the limiting allocation ratio satisfies
\[
\frac {m_i(B)}{B}\propto\frac{1}{p(\theta_i)}.
\]
where $p(\theta)=-\log(4\theta(1-\theta))>0$.
\end{proposition}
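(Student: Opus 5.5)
The plan is to analyze the greedy rule of Algorithm~\ref{alg:greedy} through the logarithm of the effective marginal gains. By the proof of Lemma~\ref{lem:infinite-sample}, when type $i$ sits at level $m$ its current gain is
\[
R_i(m)=p_i\left(\theta_i-\tfrac12\right)\binom{2m}{m}\lambda_i^m ,\qquad \lambda_i=\theta_i(1-\theta_i).
\]
Stirling's formula gives $\binom{2m}{m}=\frac{4^m}{\sqrt{\pi m}}(1+O(1/m))$. Hence
\[
\log R_i(m)= -m\,p(\theta_i)-\tfrac12\log m+c_i+o(1),
\]
where $p(\theta)=-\log(4\theta(1-\theta))>0$ for $\theta\neq\tfrac12$, and $c_i=\log\bigl(p_i(\theta_i-\tfrac12)/\sqrt{\pi}\bigr)$ is a constant.

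The key structural fact is a balancing condition. Since each list $R_i(\cdot)$ is strictly decreasing (Lemma~\ref{lem:concave}), the greedy procedure is a water-filling rule. Let $\tau$ be the current threshold, namely the value of the last selected gain. Then every type $i$ satisfies
\[
R_i(m_i)\le \tau\le R_i(m_i-1),
\]
so all current gains agree up to one step of their own list. Taking logs, this becomes
\[
\bigl|m_i\,p(\theta_i)+\tfrac12\log m_i-L\bigr|\le p(\theta_i)+|c_i|+o(1),
\]
where $L=-\log\tau$. Lemma~\ref{lem:infinite-sample} ensures $m_i\to\infty$ for every $i$, so the $o(1)$ terms are legitimate, and $L\to\infty$ as $B\to\infty$.

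From here I divide by $L$. Because $\log m_i=o(m_i)$ and $L\asymp m_i$, it follows that $m_i p(\theta_i)/L\to1$, that is, $m_i\sim L/p(\theta_i)$ for each $i$. The budget constraint gives $\sum_i p_i(2m_i-1)\approx B$ up to $O(1)$ from the rounding step. Therefore
\[
L\sim \frac{B}{2\sum_j p_j/p(\theta_j)},
\]
and consequently
\[
\frac{m_i}{B}\to\frac{1/p(\theta_i)}{2\sum_j p_j/p(\theta_j)}\propto\frac{1}{p(\theta_i)}.
\]
This shows both that the limit exists and that it has the stated form.

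I expect the main obstacle to be the careful statement of the sandwich inequality on gains. The greedy rule compares the gains $R_i$, including the factor $p_i$, while the budget is measured in expected cost $p_i$ per increment, so the comparison is between gains and not gain-per-cost. This is consistent with the lemma's formula, and the $p_i$ only shifts $c_i$, so it does not affect the limit. I will also need to check that the threshold bracketing holds at every stage, including right after initialization. Treating a possibly tie-broken or randomized final step as an $O(1)$ perturbation handles the rest.
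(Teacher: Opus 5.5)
Your proposal is correct and follows essentially the same route as the paper. The paper also works with the potentials $-\log R_i$ and shows, through an induction on the range $U(t)-L(t)$ with bounded per-step increments, that their pairwise differences stay bounded; your bracketing $R_i(m_i)\le\tau\le R_i(m_i-1)$ plays the same role. It then applies Stirling's formula to get $m_i\,p(\theta_i)=m_j\,p(\theta_j)+O(\log t)$, and although your $p_i$-weighted normalization $\sum_i p_i(2m_i-1)=B+O(1)$ is more faithful to Algorithm~\ref{alg:greedy} than the paper's unweighted count $\sum_i m_i(t)=t+N$, both yield the same proportionality $m_i/B\propto 1/p(\theta_i)$.
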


\begin{proof}[Proof of Proposition~\ref{prop:conv-online}]
Define, for each difficulty label,
\begin{equation}
\Phi_i(2m-1):=-\log R(2m-1;\theta_i).
\end{equation}
For each update of arm $i$, the corresponding increment in $\Phi_i$ is
\[
\Delta_i(2m-1)=\Phi_i(2m+1)-\Phi_i(2m-1)
= -\log\left(\left(4-\frac{2}{m+1}\right)p_i\right).
\]

Since for $m\ge 1$ we have $4-\frac{2}{m+1}\in[3,4)$, it follows that for each fixed $i$,
\[
0< -\log(4p_i)\leq \Delta_i(m)\leq -\log(3p_i)<\infty.
\]
Let $\Delta_{\max}:=\max_i\sup_{m\geq 1} \Delta_i(m)=\max_i\bigl(-\log(3p_i)\bigr)<\infty$.
Define the minimum and maximum potentials at time $t$ as
\[
L(t):= \min_{i} \Phi_i\bigl(m_i(t)\bigr), 
\qquad 
U(t):= \max_{i} \Phi_i\bigl(m_i(t)\bigr).
\]
At time $t$, the algorithm samples the arm attaining the minimum potential, say $i_t$, and increases its potential from $L(t)$ to at most $L(t)+\Delta_{\max}$, while leaving all other arms unchanged. Hence,
\[
U(t+1)\leq \max\{U(t),\,L(t)+\Delta_{\max}\}.
\]
Consequently, the potential range satisfies
\[
U(t+1)-L(t+1)\leq \max\{U(0)-L(0),\,\Delta_{\max}\}.
\]
By induction, letting $C:= \max\{U(0)-L(0),\,\Delta_{\max}\}$, we obtain
\begin{equation}\label{eq:boundedness}
\bigl|\Phi_i\bigl(m_i(t)\bigr)-\Phi_j\bigl(m_j(t)\bigr)\bigr|\leq C,\qquad \forall i,j.
\end{equation}

Next, we derive a first-order approximation for $\Phi_i(m)$. By Stirling's formula,
\begin{align}
R(m;\theta_i)
&=\lambda_i\left(\theta_i-\frac{1}{2}\right)\binom{2m}{m}p_i^m\\
&=\lambda_i\left(\theta_i-\frac{1}{2}\right)\frac{4^m}{\sqrt{\pi m}}
\left(1+O\left(\frac{1}{m}\right)\right)p_i^m.
\end{align}
Therefore, the first-order potential expansion is
\begin{equation}\label{eq:approx-phi}
\Phi_i(m)=-\log R(m;\theta_i)
= -m\log(4p_i)+\frac{1}{2}\log m+b_i+o(1),
\end{equation}
where $b_i:=- \log \left(\frac{\lambda_i\left(\theta_i-\frac{1}{2}\right)}{\sqrt{\pi}}\right)$ is a constant, and $o(1)\to 0$ as $m\to\infty$.

Combining \eqref{eq:boundedness} and \eqref{eq:approx-phi}, we obtain
\[
-m_i(t)\log(4p_i)+\frac{1}{2}\log m_i(t)+b_i
=
-m_j(t)\log(4p_j)+\frac{1}{2}\log m_j(t)+b_j+O(1).
\]
Moving the logarithmic terms to the right-hand side yields
\[
m_i(t)\log(4p_i)=m_j(t)\log(4p_j)+O(\log t),
\]
since $m_j(t)\le t$ and all arms are sampled infinitely often asymptotically. Fixing $m_1(t)$ gives
\[
m_i(t)=\frac{\log(4p_1)}{\log(4p_i)}\,m_1(t)+o(t).
\]
Therefore, the total sample count satisfies
\[
t+N=\sum_{i}m_i(t)
=
m_1(t)\sum_i \frac{\log(4p_1)}{\log(4p_i)}+o(t).
\]
Finally, the limiting ratio is
\begin{equation}
\frac{m_i(t)}{m_j(t)}
=
\frac{\log(4p_j)}{\log(4p_i)}+o(1)
\quad\Longrightarrow\quad
\lim_{t\to\infty}\frac{m_i(t)}{m_j(t)}
=
\frac{\log(4p_j)}{\log(4p_i)}
=
\frac{\log\!\bigl(4\theta_j(1-\theta_j)\bigr)}{\log\!\bigl(4\theta_i(1-\theta_i)\bigr)}.
\end{equation}
\end{proof}

\subsection{Convergence of Budget Proportions for Offline Case}\label{sec:conv-offline}

Let $X\sim \mathrm{Beta}(a,b)$ with parameters $a,b\in\N_+$.
Its density is
\[
f_{a,b}(x)=\frac{1}{B(a,b)}x^{a-1}(1-x)^{b-1},\qquad x\in(0,1),
\]
where the Beta function is
\[
B(a,b)=\int_0^1 t^{a-1}(1-t)^{b-1}\,dt
=\frac{\Gamma(a)\Gamma(b)}{\Gamma(a+b)}
=\frac{(a-1)!(b-1)!}{(a+b-1)!}.
\]

We define the tail probability at $1/2$:
\[
P_{a,b}:=\mathbb{P}(X\ge 1/2).
\]

We first obtain a closed form for $\mathbb{P}(X\ge x)$ when $a,b$ are integers:

\begin{lemma}
\label{lem:beta_tail_poly}
For any integer $a,b\ge 1$ and any $x\in[0,1]$,
\[
\mathbb{P}(X\ge x)
=\sum_{k=0}^{a-1}\binom{a+b-1}{k}x^k(1-x)^{a+b-1-k}.
\]
Equivalently, the lower tail can be written as
\[
\mathbb{P}(X\le x)
=\sum_{k=a}^{a+b-1}\binom{a+b-1}{k}x^k(1-x)^{a+b-1-k}.
\]
\end{lemma}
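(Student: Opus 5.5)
The identity to prove is the classical Beta--Binomial duality. I would derive it from a uniform order-statistics representation, with integration by parts as a backup.

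Set $n:=a+b-1$. Let $U_1,\ldots,U_n$ be i.i.d.\ $\mathrm{Unif}(0,1)$ with order statistics $U_{(1)}\le\cdots\le U_{(n)}$.

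\textbf{Step 1.} Show that $U_{(a)}\sim\mathrm{Beta}(a,b)$. The density of the $a$-th order statistic is
\[
\frac{n!}{(a-1)!(n-a)!}\,x^{a-1}(1-x)^{n-a}.
\]
Since $n-a=b-1$, the prefactor equals $1/B(a,b)$ by the factorial form of the Beta function stated above. Hence this density is exactly $f_{a,b}$, and we may identify $X$ with $U_{(a)}$ in distribution.

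\textbf{Step 2.} Rewrite the event in terms of a count. Let $K:=\#\{\ell:U_\ell<x\}$, so $K\sim\mathrm{Bin}(n,x)$. Then
\[
\{U_{(a)}\ge x\}=\{K\le a-1\}
\]
up to a null set, since ties have probability zero. Therefore
\[
\mathbb{P}(X\ge x)=\sum_{k=0}^{a-1}\binom{n}{k}x^k(1-x)^{n-k},
\]
which is the first formula. The second formula follows by taking complements: the binomial probabilities sum to $1$, and $\mathbb{P}(X=x)=0$.

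\textbf{Endpoints.} The cases $x\in\{0,1\}$ should be checked directly, using the convention $0^0=1$. At $x=0$ both sides equal $1$. At $x=1$ the only possibly nonzero term is $k=n$, which is excluded from the sum because $a-1<n$ (as $b\ge1$), so both sides equal $0$.

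\textbf{Backup route.} If a purely analytic proof is preferred, define
\[
F(x):=\sum_{k=0}^{a-1}\binom{n}{k}x^k(1-x)^{n-k}.
\]
Differentiate termwise. The sum telescopes to
\[
F'(x)=-n\binom{n-1}{a-1}x^{a-1}(1-x)^{n-a},
\]
and $n\binom{n-1}{a-1}=1/B(a,b)$, so $F'(x)=-f_{a,b}(x)$. Together with $F(1)=0$, this gives $F(x)=\int_x^1 f_{a,b}$.

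\textbf{Main obstacle.} The only real risk is bookkeeping. In the telescoping of the derivative, one must check that the index shift cancels every term except the last. One must also confirm the normalizing constant $n\binom{n-1}{a-1}=\frac{(a+b-1)!}{(a-1)!(b-1)!}$. With the order-statistics route the argument is essentially immediate.
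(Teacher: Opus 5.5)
Your proposal is correct. Your backup route is exactly the paper's proof. The paper defines the binomial sum $Q(x)$ and uses the same index shift to get $Q'(x)=-\frac{(a+b-1)!}{(a-1)!(b-1)!}x^{a-1}(1-x)^{b-1}$. It matches this with the derivative of $T(x)=\mathbb{P}(X\ge x)$ and concludes from $Q(1)=T(1)=0$.

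Your main route is genuinely different. You represent $X$ as the $a$-th uniform order statistic and read off $\{U_{(a)}\ge x\}=\{K\le a-1\}$ with $K\sim\mathrm{Bin}(n,x)$. This explains \emph{why} the identity holds: it is the Beta--Binomial duality that the paper only records afterwards, in its remark $\mathbb{P}(X\ge x)=\mathbb{P}(Y\ge b)$ with $Y\sim\mathrm{Bin}(a+b-1,1-x)$, which is your statement with $Y=n-K$. It also gives both tails and the endpoint cases at no extra cost. The event identity is in fact exact, not just up to a null set, because $K$ counts strict inequalities.

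The price is that all the analytic content moves into the order-statistic density in Step~1. If you justify that density by differentiating $\mathbb{P}(U_{(a)}\le x)=\mathbb{P}(K\ge a)$, you are redoing the paper's telescoping computation. The route is independent only if you derive the density from the joint density $n!$ of $(U_{(1)},\ldots,U_{(n)})$ on $\{0<u_1<\cdots<u_n<1\}$. Integrating out the $a-1$ smaller coordinates gives $x^{a-1}/(a-1)!$, and integrating out the $n-a$ larger ones gives $(1-x)^{n-a}/(n-a)!$. The paper's version, by contrast, is self-contained one-variable calculus, paid for with exactly the index bookkeeping you flagged.
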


\begin{proof}
Define the polynomial
\[
Q(x):=\sum_{k=0}^{a-1}\binom{a+b-1}{k}x^k(1-x)^{a+b-1-k},\qquad x\in[0,1].
\]
We will show that $Q(x)=\mathbb{P}(X\ge x)$ by verifying that $Q$ has the same derivative
as the Beta tail probability and the same boundary value at $x=1$.

\medskip
\noindent
\textbf{(i) Differentiate $Q(x)$.}
For each term $x^k(1-x)^{a+b-1-k}$ we have
\[
\frac{d}{dx}\Bigl[x^k(1-x)^{a+b-1-k}\Bigr]
=kx^{k-1}(1-x)^{a+b-1-k}-(a+b-1-k)x^k(1-x)^{a+b-2-k}.
\]
Hence, differentiating termwise gives
\begin{align*}
Q'(x)
&=\sum_{k=0}^{a-1}\binom{a+b-1}{k}
\Bigl[kx^{k-1}(1-x)^{a+b-1-k}-(a+b-1-k)x^k(1-x)^{a+b-2-k}\Bigr].
\end{align*}
The $k=0$ term in the first part is zero, so re-index the first sum by $j=k-1$:
\begin{align*}
\sum_{k=1}^{a-1}\binom{a+b-1}{k}k\,x^{k-1}(1-x)^{a+b-1-k}
&=\sum_{j=0}^{a-2}\binom{a+b-1}{j+1}(j+1)\,x^{j}(1-x)^{a+b-2-j}.
\end{align*}
Also rewrite the second part as
\[
\sum_{k=0}^{a-1}\binom{a+b-1}{k}(a+b-1-k)x^k(1-x)^{a+b-2-k}.
\]
Therefore, the derivative simplifies as
\begin{align*}
Q'(x)
&=\sum_{j=0}^{a-2}\binom{a+b-1}{j+1}(j+1)x^{j}(1-x)^{a+b-2-j}\\
&\qquad-\sum_{k=0}^{a-1}\binom{a+b-1}{k}(a+b-1-k)x^k(1-x)^{a+b-2-k}\\
&=(a+b-1)\sum_{j=0}^{a-2}\binom{a+b-2}{j}x^{j}(1-x)^{a+b-2-j}
-(a+b-1)\sum_{k=0}^{a-1}\binom{a+b-2}{k}x^k(1-x)^{a+b-2-k}.
\end{align*}
These two sums cancel term-by-term for $k=0,1,\dots,a-2$, leaving only the $k=a-1$ term
from the second sum:
\[
Q'(x)=-(a+b-1)\binom{a+b-2}{a-1}x^{a-1}(1-x)^{b-1}=-\frac{(a+b-1)!}{(a-1)!(b-1)!}\,x^{a-1}(1-x)^{b-1}.
\]

\medskip
\noindent
\textbf{(ii) Differentiate the Beta tail probability.}
Define
\[
T(x):=\mathbb{P}(X\ge x)=\int_x^1 \frac{1}{B(a,b)}t^{a-1}(1-t)^{b-1}\,dt.
\]
By the fundamental theorem of calculus,
\[
T'(x)=-\frac{1}{B(a,b)}x^{a-1}(1-x)^{b-1}=-\frac{(a+b-1)!}{(a-1)!(b-1)!}\,x^{a-1}(1-x)^{b-1},
\]
which matches $Q'(x)$.

At $x=1$,
\[
T(1)=\mathbb{P}(X\ge 1)=0.
\]
The polynomial has the same boundary value:
\[
Q(1)=\sum_{k=0}^{a-1}\binom{a+b-1}{k}\,1^k\,(1-1)^{a+b-1-k}=0,
\]

Since $Q'(x)=T'(x)$ on $[0,1]$ and $Q(1)=T(1)$, we conclude $Q(x)=T(x)$ for all $x\in[0,1]$,
i.e.
\[
\mathbb{P}(X\ge x)=Q(x)=\sum_{k=0}^{a-1}\binom{a+b-1}{k}x^k(1-x)^{a+b-1-k}.
\]
This proves the lemma.
\end{proof}
We can then derive the explicit increment of self-consistency of a problem from state $(m,n)$ to $(m+1,n)$ (that is, add one positive answer based on $m$ positive and $n$ negative answer) if the true answer is positive.
\begin{lemma}\label{lem:offline-diff}
    We have
    \[
        \Delta_{m,n}:=P_{m+1,n}-P_{m,n}=\frac{1}{2^{m+n}}\binom{m+n-1}{n-1}.
    \]
    where the shorthand is
    \[
    P_{m,n}=\mathbb{P}(X\ge 1/2),X\sim\mathrm{Beta}(m,n).
    \]
\end{lemma}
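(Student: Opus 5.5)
We compute $\Delta_{m,n}=P_{m+1,n}-P_{m,n}$ directly from the polynomial form of the Beta tail in Lemma~\ref{lem:beta_tail_poly}, evaluated at $x=1/2$. At $x=1/2$ every term $x^k(1-x)^{a+b-1-k}$ equals $2^{-(a+b-1)}$. The lemma therefore gives
\[
P_{a,b}=\frac{1}{2^{a+b-1}}\sum_{k=0}^{a-1}\binom{a+b-1}{k}.
\]
Applying this with $(a,b)=(m+1,n)$ and $(a,b)=(m,n)$, we get
\[
P_{m+1,n}=\frac{1}{2^{m+n}}\sum_{k=0}^{m}\binom{m+n}{k},
\qquad
P_{m,n}=\frac{1}{2^{m+n-1}}\sum_{k=0}^{m-1}\binom{m+n-1}{k}.
\]

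The key step is to bring both expressions to the common denominator $2^{m+n}$. We apply Pascal's rule $\binom{m+n}{k}=\binom{m+n-1}{k}+\binom{m+n-1}{k-1}$ inside the first sum, with the convention $\binom{\cdot}{-1}=0$. This gives
\[
\sum_{k=0}^{m}\binom{m+n}{k}=\sum_{k=0}^{m}\binom{m+n-1}{k}+\sum_{k=0}^{m-1}\binom{m+n-1}{k}
=2\sum_{k=0}^{m-1}\binom{m+n-1}{k}+\binom{m+n-1}{m}.
\]
Dividing by $2^{m+n}$, the first part is exactly $P_{m,n}$. The remainder is therefore
\[
\Delta_{m,n}=\frac{1}{2^{m+n}}\binom{m+n-1}{m}=\frac{1}{2^{m+n}}\binom{m+n-1}{n-1},
\]
where the last equality is the symmetry of binomial coefficients.

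The only real care needed is bookkeeping: matching the exponents $a+b-1$, and handling the boundary index $k=m$ in Pascal's rule. Note that $m,n\ge1$ is required for $\mathrm{Beta}(m,n)$ to be defined, so $\binom{m+n-1}{n-1}$ is well defined. As a sanity check, take $m=n=1$. Then $P_{1,1}=1/2$ and $P_{2,1}=\mathbb{P}(X\ge1/2)$ with density $2x$, which is $3/4$. The formula gives $\frac14\binom{1}{0}=\frac14$, which matches.
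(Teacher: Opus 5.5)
Your proposal is correct and follows essentially the same route as the paper's proof. Both evaluate the polynomial Beta-tail formula of Lemma~\ref{lem:beta_tail_poly} at $x=1/2$, then apply Pascal's rule to the $(m+1,n)$ sum so that $2\sum_{k=0}^{m-1}\binom{m+n-1}{k}$ appears (which, divided by $2^{m+n}$, is exactly $P_{m,n}$), and read off the leftover term $2^{-(m+n)}\binom{m+n-1}{m}=2^{-(m+n)}\binom{m+n-1}{n-1}$.
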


\begin{proof}
    Apply Lemma~\ref{lem:beta_tail_poly} with $x=\tfrac12$:
\[
P_{a,b}
=\sum_{k=0}^{a-1}\binom{a+b-1}{k}\Bigl(\frac12\Bigr)^k\Bigl(\frac12\Bigr)^{a+b-1-k}
=2^{-(a+b-1)}\sum_{k=0}^{a-1}\binom{a+b-1}{k}.
\]

Let $m,n\in\{1,\dots,N\}$
\[
\Delta_{m,n}
:=P_{m+1,n}-P_{m,n}
=\mathbb{P}(\mathrm{Beta}(m+1,n)\ge 1/2)-\mathbb{P}(\mathrm{Beta}(m,n)\ge 1/2).
\]

We write
\[
P_{m+1,n}=2^{-(m+n)}\sum_{k=0}^{m}\binom{m+n}{k},
\qquad
P_{m,n}=2^{-(m+n-1)}\sum_{k=0}^{m-1}\binom{m+n-1}{k}.
\]
Let $A:=m+n$. Then
\[
P_{m+1,n}=2^{-A}\sum_{k=0}^{m}\binom{A}{k},
\qquad
P_{m,n}=2^{-(A-1)}\sum_{k=0}^{m-1}\binom{A-1}{k}.
\]
Hence, the increment can be written as
\[
\Delta_{m,n}
=2^{-A}\sum_{k=0}^{m}\binom{A}{k}
-2^{-(A-1)}\sum_{k=0}^{m-1}\binom{A-1}{k}.
\]
Now apply Pascal's identity $\binom{A}{k}=\binom{A-1}{k}+\binom{A-1}{k-1}$:
\begin{align*}
\sum_{k=0}^{m}\binom{A}{k}
&=\sum_{k=0}^{m}\binom{A-1}{k}+\sum_{k=0}^{m}\binom{A-1}{k-1}\\
&=\binom{A-1}{m}+2\sum_{k=0}^{m-1}\binom{A-1}{k}.
\end{align*}
Therefore, Pascal's identity yields
\begin{align*}
\Delta_{m,n}
&=2^{-A}\Bigl(\binom{A-1}{m}+2\sum_{k=0}^{m-1}\binom{A-1}{k}\Bigr)
-2^{-(A-1)}\sum_{k=0}^{m-1}\binom{A-1}{k}\\
&=2^{-A}\binom{A-1}{m}
+\Bigl(2^{-A}\cdot 2-2^{-(A-1)}\Bigr)\sum_{k=0}^{m-1}\binom{A-1}{k}\\
&=2^{-A}\binom{A-1}{m},
\end{align*}

Finally, substituting back $A=m+n$ yields
\[
\Delta_{m,n}=2^{-(m+n)}\binom{m+n-1}{m}
=\frac{1}{2^{m+n}}\binom{m+n-1}{n-1}.
\]
This is strictly positive because the binomial coefficient is positive.
\end{proof}

\begin{remark}[A simple probabilistic interpretation]
    
Lemma~\ref{lem:beta_tail_poly} can be rewritten as a Binomial CDF identity:
if $Y\sim \mathrm{Binomial}(a+b-1,\,1-x)$, then
\[
\mathbb{P}(X\ge x)=\mathbb{P}(Y\ge b).
\]
In particular, at $x=1/2$,
\[
P_{a,b}=2^{-(a+b-1)}\sum_{k=0}^{a-1}\binom{a+b-1}{k}
=\mathbb{P}\bigl(\mathrm{Binomial}(a+b-1,1/2)\le a-1\bigr).
\]
\end{remark}

Next, we derive the convergence of sampling proportions and identify the limiting ratios.

\begin{proposition}\label{prop:conv-offline}
Consider the offline setting with all questions $\{q_i\}_{i=1}^N$ available upfront and difficulty labels $\theta_i$. Under equal-weight Offline \method with $\mathrm{Beta}(1,1)$, the ratio $m_i(B)/B$ converges for each $i$ as $B\to\infty$ and obeys
\[
\frac {m_i(B)}{B}\propto\frac{1}{p(\theta_i)}.
\]
Define the KL divergence to $\tfrac12$:
\[
\KL(q\Vert\tfrac12):=q\ln(2q)+(1-q)\ln\big(2(1-q)\big),\qquad q\in(0,1),
\]

then $p(\theta)=\KL(\theta\|\frac{1}{2})$.
\end{proposition}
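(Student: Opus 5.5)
We mirror the proof of Proposition~\ref{prop:conv-online}: a potential-balancing argument for the greedy OKG rule, combined with a large-deviation estimate of the one-step optimistic gain.

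\emph{Setup.} In the equal-weight binary case with prior $\mathrm{Beta}(1,1)$, the belief for question $i$ after $a_i-1$ positive and $b_i-1$ negative answers is $\mathrm{Beta}(a_i,b_i)$. The utility is $U=\max\{P_{a_i,b_i},1-P_{a_i,b_i}\}$. By Lemma~\ref{lem:offline-diff}, the optimistic gain from adding a positive answer to a question with posterior majority on the positive side is
\[
R_i^+=\Delta_{a_i,b_i}=2^{-(a_i+b_i)}\binom{a_i+b_i-1}{b_i-1}.
\]
Adding a negative answer lowers $U$ in this regime, unless the majority is near the boundary. So the optimistic gain is governed by $\Delta_{a_i,b_i}$, and symmetrically when the majority is negative.

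\emph{Step 1: every question is sampled infinitely often.} As in Lemma~\ref{lem:infinite-sample}, we argue by contradiction. A question sampled finitely often keeps a fixed positive $R^+$. Meanwhile, $\Delta_{a,b}\to0$ along any path with $a+b\to\infty$ and $a/(a+b)$ bounded away from $\tfrac12$, which holds almost surely when $\theta_i\neq\tfrac12$. This contradicts the greedy argmax rule.

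\emph{Step 2: asymptotics of the gain.} Let $m_i=a_i+b_i$ denote the sample count. By the strong law of large numbers, $a_i/m_i\to\theta_i$ almost surely, and we take $\theta_i>\tfrac12$. Stirling's formula applied to $\binom{m-1}{b-1}$ with $b\approx(1-\theta_i)m$ gives
\[
-\log R_i^+ = m_i\bigl[\ln 2 - H(\hat\theta_i)\bigr] + O(\log m_i) = m_i\,\KL(\hat\theta_i\Vert\tfrac12)+O(\log m_i),
\]
where $H$ is the natural-log entropy and $\hat\theta_i=a_i/m_i$. Define the potential $\Phi_i:=-\log R_i^+$.

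\emph{Step 3: bounded potential gap.} One update changes $\Phi_i$ by a bounded amount, since the ratio $\Delta_{a+1,b}/\Delta_{a,b}$ (and the analogous ratio for $b$) stays in a compact subset of $(0,\infty)$. Because the greedy rule always raises the minimum potential, the potentials of all questions stay within a constant $C$ of one another, exactly as in \eqref{eq:boundedness}. This gives
\[
m_i\KL(\hat\theta_i\Vert\tfrac12)=m_j\KL(\hat\theta_j\Vert\tfrac12)+O(\log t).
\]
Dividing by $t=\sum_i m_i$ and using $\hat\theta_i\to\theta_i$ and continuity of $\KL$, we obtain $m_i/t\to c/\KL(\theta_i\Vert\tfrac12)$. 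The normalization $\sum_i m_i/t=1$ fixes the constant $c$.

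\emph{Main obstacle.} The hard part is controlling the randomness of $\hat\theta_i$ inside $\Phi_i$. A fluctuation of order $\sqrt{m\log\log m}$ in $a_i$ shifts $\Phi_i$ by $O(\sqrt{m\log\log m})$. This is still $o(m)$, so it does not affect the ratio limit, but the bounded-increment argument must be run with a random bound. Rather than a uniform constant $\Delta_{\max}$, we need an almost-sure eventual bound on the per-step increments $\log(\Delta_{a,b}/\Delta_{a+1,b})$ and $\log(\Delta_{a,b}/\Delta_{a,b+1})$. We also need to handle the early phase, where the posterior majority may cross $\tfrac12$ and the $\max$ in $U$ switches branches. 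We will treat that phase as a finite, almost surely terminating prefix whose only effect is $O(1)$ additive constants in the potentials.
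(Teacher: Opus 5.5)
Your route is the paper's own: SLLN plus Stirling on the increment of Lemma~\ref{lem:offline-diff} gives $-\frac1m\log\Delta\to\KL(\theta\Vert\tfrac12)$, and then you apply the potential-balancing step of Proposition~\ref{prop:conv-online}, which the paper only asserts is ``similar''. That transfer fails as you state it in Step~3, and your ``main obstacle'' paragraph misdiagnoses the reason.

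The online argument needs $L(t+1)\ge L(t)$. This holds there because $R(\theta,2m)$ is deterministic and strictly decreasing, so each greedy step \emph{raises} the selected potential, which yields $U(t+1)-L(t+1)\le\max\{U(t)-L(t),\Delta_{\max}\}$. Offline, for $a>b$,
\[
\frac{\Delta_{a+1,b}}{\Delta_{a,b}}=\frac{a+b}{2(a+1)},\qquad\frac{\Delta_{a,b+1}}{\Delta_{a,b}}=\frac{a+b}{2b}>1.
\]
So every minority answer \emph{lowers} the potential of the question just sampled, by about $\ln\frac{1}{2(1-\theta_i)}$. The potential only rises on average, at rate $\theta_i\ln(2\theta_i)+(1-\theta_i)\ln(2(1-\theta_i))=\KL(\theta_i\Vert\tfrac12)$. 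After such a drop the same question is still the argmin and is sampled again. A run of $k$ minority answers is therefore consumed consecutively and widens the gap by roughly $k\ln\frac{1}{2(1-\theta_i)}$. For $\theta_i<1$, runs of length of order $\log t$ occur infinitely often almost surely. Hence ``the greedy rule always raises the minimum potential'' is false and no constant $C$ exists. An almost-sure bound on the \emph{size} of the increments cannot repair this, because the defect is their sign.

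The fix is to drop the gap bound altogether. Realize question $i$'s answers as a fixed i.i.d.\ sequence consumed in order, so that $\Phi_i$ is a random function of the count $m$ alone. Since $R_i^+=\Delta_{\max(a_i,b_i),\min(a_i,b_i)}$ in every phase and $\KL(\cdot\Vert\tfrac12)$ is symmetric about $\tfrac12$, your Step~2 gives, almost surely, $|\Phi_i(m)-\kappa_i m|\le\varepsilon m+C_\varepsilon$ for all $m$, with $\kappa_i:=\KL(\theta_i\Vert\tfrac12)>0$. Boundary crossings then need no separate treatment.

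Fix $t$ and let $\tau\le t$ be the last time question $i$ was selected. The greedy rule gives $\Phi_i(m_i(t)-1)\le\Phi_j(m_j(\tau))$ for every $j$. Because $m_j(\tau)\le m_j(t)$ and $\kappa_j>0$, this yields $\kappa_i m_i(t)\le\kappa_j m_j(t)+2\varepsilon t+O_\varepsilon(1)$. Exchanging $i$ and $j$ gives $|\kappa_i m_i(t)-\kappa_j m_j(t)|=o(t)$ almost surely, which is all your final division by $t$ uses. The $\sqrt{m\log\log m}$ fluctuations are simply absorbed into $\varepsilon m$. Keeping the gap approach instead would require proving the gap is $O(\log t)$ via drawdowns of a positive-drift walk, which is more work. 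With this replacement, the rest of your proposal, including Step~1, goes through.
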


\begin{proof}
    Let $\theta\in(0,1)$ be the success probability.
Draw i.i.d.\ Bernoulli random variables
\[
Y_1,Y_2,\ldots \stackrel{\text{i.i.d.}}{\sim} \mathrm{Bernoulli}(\theta),
\qquad \mathbb{P}(Y_i=1)=\theta,\ \mathbb{P}(Y_i=0)=1-\theta.
\]
For each sample size $T\ge 1$, define the success and failure counts
\[
m_T := \sum_{i=1}^T Y_i,
\qquad
n_T := T - m_T = \sum_{i=1}^T (1-Y_i).
\]
Clearly, $m_T+n_T=T$ and $(m_T,n_T)$ is the empirical outcome of a binomial experiment with parameter $\theta$, by Strong Law of Large Numbers,
\[
\frac{m_T}{n_T}=\frac{m_T/T}{1-m_T/T}\ \longrightarrow\ \frac{\theta}{1-\theta}
\qquad \text{almost surely}.
\]

Then along the binomial sample path $(m_T,n_T)$, by Lemma~\ref{lem:offline-diff}
\[
\Delta_{m_T,n_T}
=\frac{1}{2^{T}}\binom{T-1}{m_T}
=\frac12\mathbb{P}\Big(\mathrm{Binomial}(T-1,\tfrac12)=m_T\Big),
\]
and moreover, with $q_T:=m_T/(T-1)$, follows from Stirling's formula applied to the binomial pmf
$\mathbb{P}(\mathrm{Binomial}(T-1,\tfrac12)=m_T)$ , yielding
\[
\mathbb{P}\big(\mathrm{Binomial}(T-1,\tfrac12)=m_T\big)
\approx
\frac{1}{\sqrt{2\pi (T-1)q_T(1-q_T)}}
\exp\!\Big(-(T-1)\,\KL(q_T\Vert\tfrac12)\Big).
\]
Finally, since $q_T\to\theta$ almost surely, the exponent satisfies
\[
\frac{1}{T}\log \Delta_{m_T,n_T}\ \longrightarrow\ -\KL(\theta\Vert\tfrac12)
\qquad \text{a.s.}
\]

The remaining part is similar with online case in Section \ref{sec:conv-online}. Which means, since the linear term of $\Delta_{m_T,n_T}$ (resp. $\log R(m;\theta)$) converge to $-\KL(\theta\|\frac{1}{2})$, then we have similar expression
\[
    m_i(t)\KL(\theta_i\|\frac{1}{2})=m_j(t)\KL(\theta_j\|\frac{1}{2})+O(\log t).
\]
The rest derivation is similar, hence
\[
    \frac {m_i(B)}{B}\propto\frac{1}{\KL(\theta_i\|\frac{1}{2})}.
\]

\end{proof}

\subsection{Discussion of the Asymptotic Behavior for Online vs. Offline Cases}
As shown in Sections~\ref{sec:conv-offline} and~\ref{sec:conv-online}, the asymptotic limits induced by the two allocation paradigms are not exactly identical.
This discrepancy mainly stems from the different statistical viewpoints adopted in each setting.
In the offline case, the difficulty parameters are inferred during the allocation process from progressively collected samples, and thus optimal decision-making requires a Bayesian perspective: one must update the posterior distribution via a Bayesian framework, otherwise (as discussed in \citep{pmlr-v28-chen13f}) the optimality of the resulting allocation cannot be properly assessed.
In contrast, the online streaming case assumes the answer distribution parameters are known \emph{a priori} (or at least can be estimated with sufficient confidence), and therefore employs a frequentist-style update based on empirical frequencies.
As a consequence, the two settings lead to slightly different criteria for determining the majority-voting outcome.
Nevertheless, we observe that the resulting limiting behaviors are remarkably close in practice; see Figure~\ref{fig:budget plan} for an illustration.

\section{Appendix of Experiment}
\label{sec:add_exp}

\subsection{Implementation Details in \method-Offline.}

\paragraph{Confidence-weighted voting.}
In the confidence-weighted setting, each reasoning trajectory is associated with a scalar confidence score extracted from the model output following \citet{fu2025deepthinkconfidence}.
Specifically, we compute a \emph{tail trace confidence} for each generated reasoning trace by averaging the token-level confidence scores over the last $2048$ tokens, and use this value as the weight in majority voting.
To mitigate the influence of low-confidence traces, we further apply a confidence-based filtering strategy: only the top $70\%$ most confident traces are retained for voting, while the remaining $30\%$ are assigned zero weight.
We refer to this scheme as \emph{top-$70\%$ confidence-weighted majority voting}.
The resulting weighted voting rule is used consistently throughout the paper for defining both self-consistency and the population majority label $y_i^\infty$.
Note that we use confidence-weighted majority voting as a proxy for the Bayes-optimal terminal decision rule.

\paragraph{Uniform allocation baseline.}
Under uniform allocation (standard test-time scaling), we fix an average per-question budget $b=1,\ldots,64$ and sample exactly $b$ trajectories for every question.
When comparing against \method-Offline, we evaluate Uniform at the per-question budget $b^\star$ that matches the trace count where \method-Offline first reaches full self-consistency.

\subsection{Implementation Details  in \method-Online. }

In the experiment, \method-Online is evaluated against \method-Oracle and Uniform allocation. We now introduce each method in detail.

\paragraph{\method-Online streaming allocation.}
The \method-Online assumes not having access to $\boldsymbol{\theta}_q$ and implements a fully deployable online streaming policy.
Instead of observing the true difficulty, we use a short warm-up procedure to obtain a coarse difficulty proxy.
Specifically, for each question we first sample four responses and compute the sorted option-count vector $C_q^{(4)}$.
Up to permutation of answer labels, $C_q^{(4)}$ falls into one of five predefined patterns, which deterministically map the question to one of five difficulty grids.

Using training questions with large i.i.d.\ answer pools, we estimate (i) the empirical mass of each grid and (ii) a representative two-parameter difficulty curve for each grid by fitting a Gaussian-based approximation to the self-consistency curve.
Based on these grid-level statistics, we pre-compute the optimal per-grid budgets using the greedy allocation algorithm~\ref{alg:greedy} as in the oracle setting.
At test time, each incoming question undergoes the same four-sample warm-up to determine its grid, after which the corresponding precomputed budget is allocated in a single shot.
This procedure respects the streaming and one-shot constraints, requires no access to future questions, and uses only minimal online sampling to approximate question difficulty $\bm\theta$.

\paragraph{\method-Oracle online streaming allocation.}
In the oracle online streaming experiment, we assume access to the true difficulty vector $\boldsymbol{\theta}_q$ of each arriving question $q$, i.e., the underlying answer distribution of the model under infinite sampling. This oracle setting is not deployable in practice and is used solely as an upper bound to assess the optimality of the proposed online allocation strategy.
Using a held-out training set with large i.i.d.\ answer pools, we estimate the prior distribution over question difficulties by discretizing the continuous difficulty space into $K$ grids, each represented by a prototype $\boldsymbol{\theta}_j$ with prior mass $p_j$.
Based only on the prior statistics $(p_j,\boldsymbol{\theta}_j)$, we solve the discretized optimization problem in Equation~\eqref{eq:discrete-case} using the greedy algorithm (Alg.~\ref{alg:greedy}) to obtain a fixed budget allocation $\{B_j\}_{j=1}^K$ for all grids.
At test time, questions arrive sequentially in a streaming manner. For each incoming question, its true difficulty vector $\boldsymbol{\theta}_q$ is revealed, the corresponding grid index is identified, and the precomputed budget $B_i$ is allocated in a single shot.
Predictions are obtained by (weighted) majority voting, and we evaluate self-consistency and accuracy under the same protocol as in the offline setting.

\paragraph{Uniform online streaming allocation.}
As a baseline for the online streaming setting, we consider a uniform allocation strategy that assigns the same fixed budget to every question, independent of its difficulty.
Given an average per-question budget constraint $\bar B = B_{\mathrm{total}}/T$, uniform allocation assigns $B_q \equiv \lfloor \bar B \rfloor$ samples to each arriving question.

\clearpage
\subsection{Full Experiment Results}

\subsubsection{Full \method-Offline Results}

\begin{figure*}[h]
    \centering
    \includegraphics[width=1\linewidth]{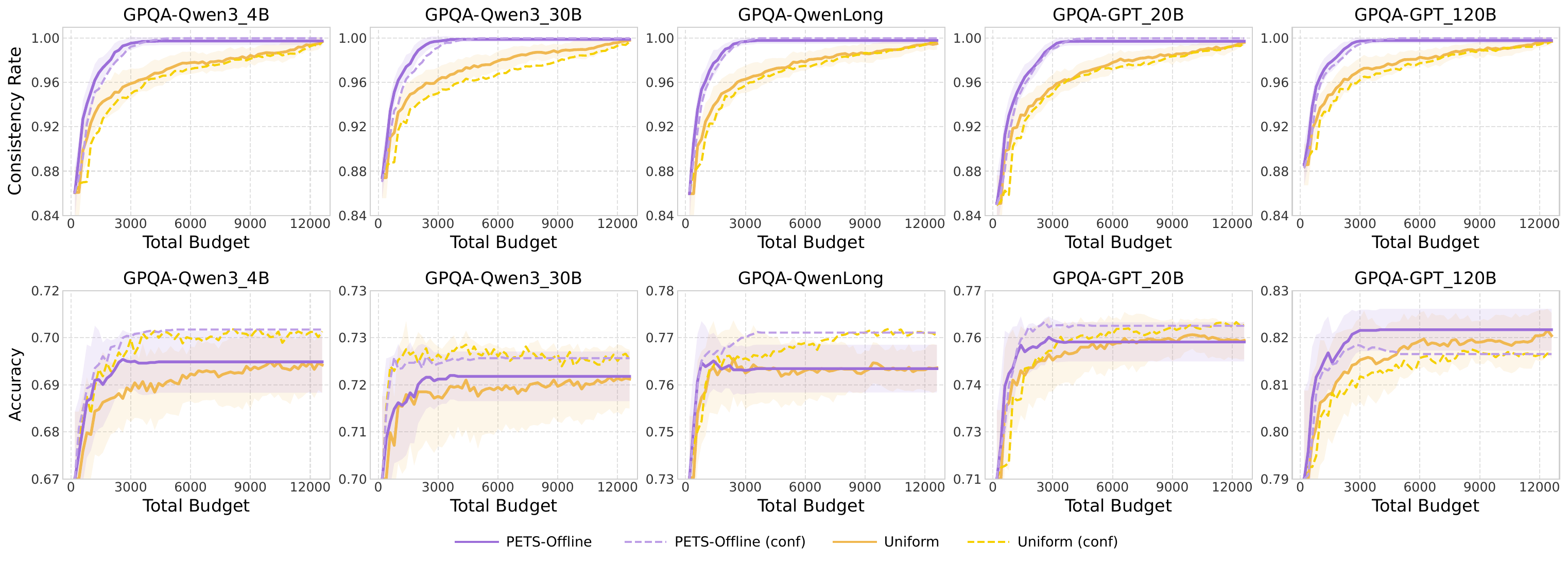}
    \caption{GPQA offline}
    \label{fig:gpqa_offline_10pane;}
\end{figure*}

\begin{figure*}[h]
    \centering
    \includegraphics[width=1\linewidth]{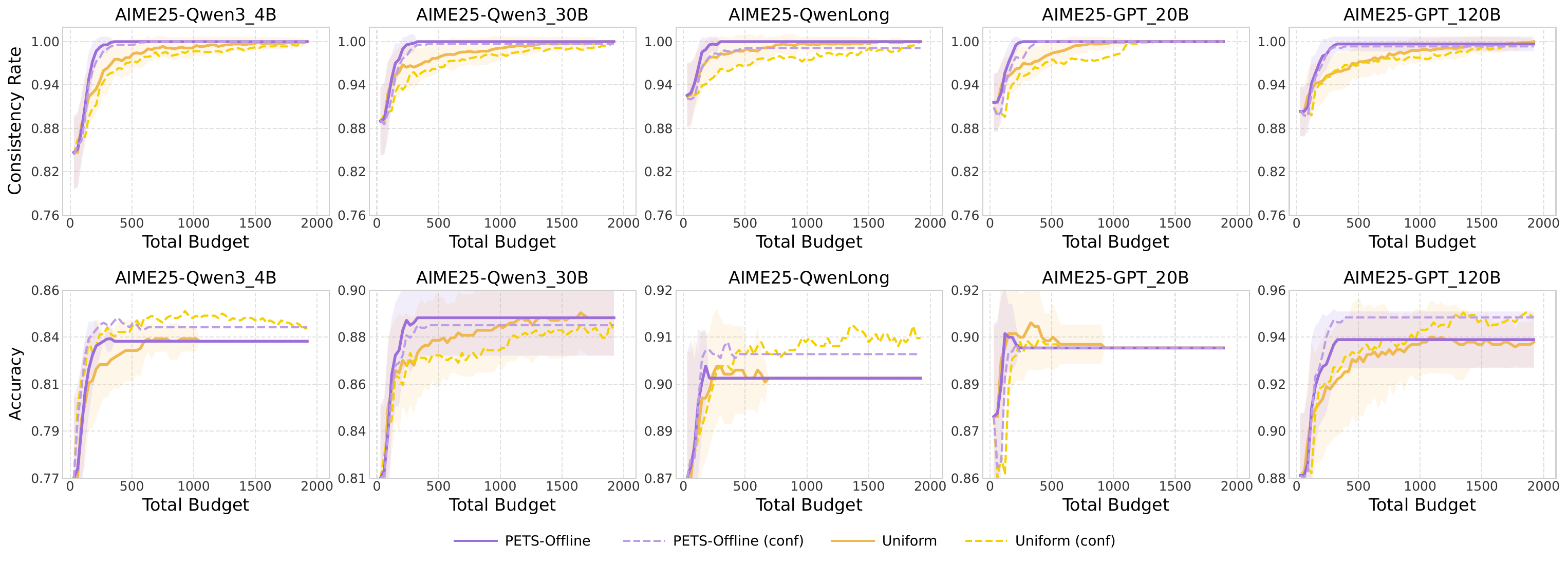}
    \caption{AIME 25 offline}
    \label{fig:aime25_offline_10panel}
\end{figure*}

\begin{figure*}[h]
    \centering
    \includegraphics[width=1\linewidth]{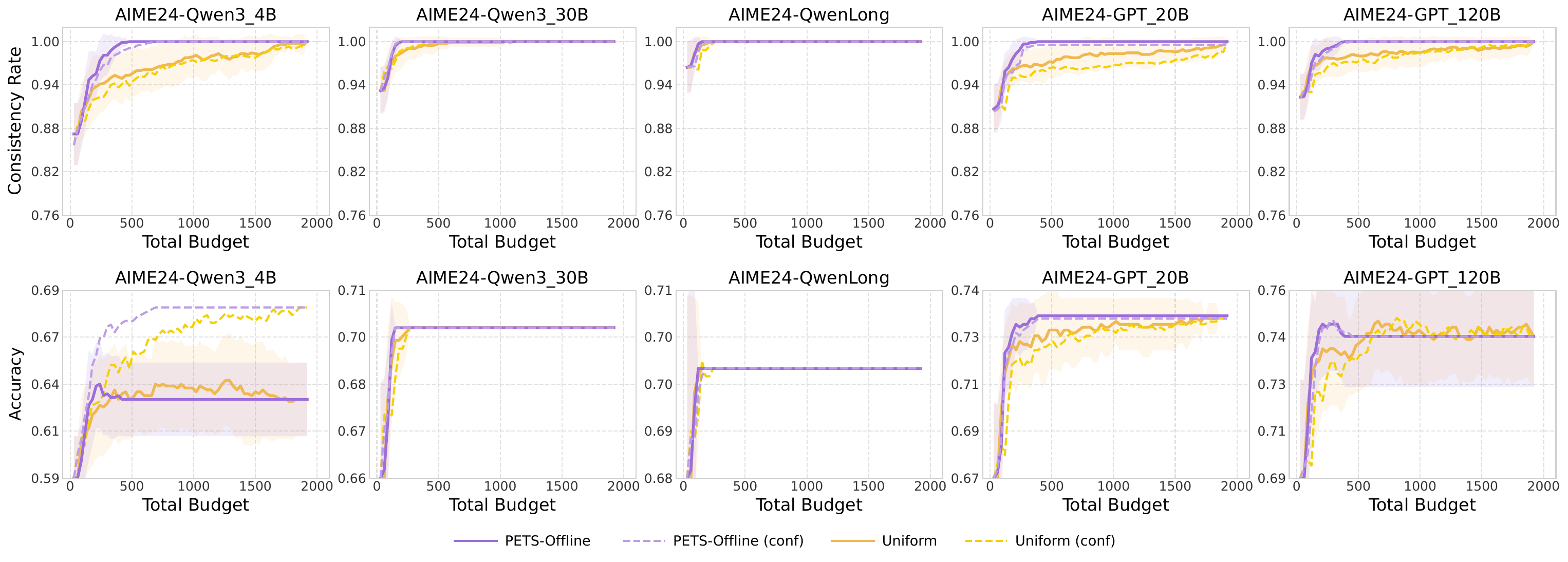}
    \caption{AIME 24 offline}
    \label{fig:aime24_offline_10panel}
\end{figure*}

\begin{figure*}[h]
    \centering
    \includegraphics[width=1\linewidth]{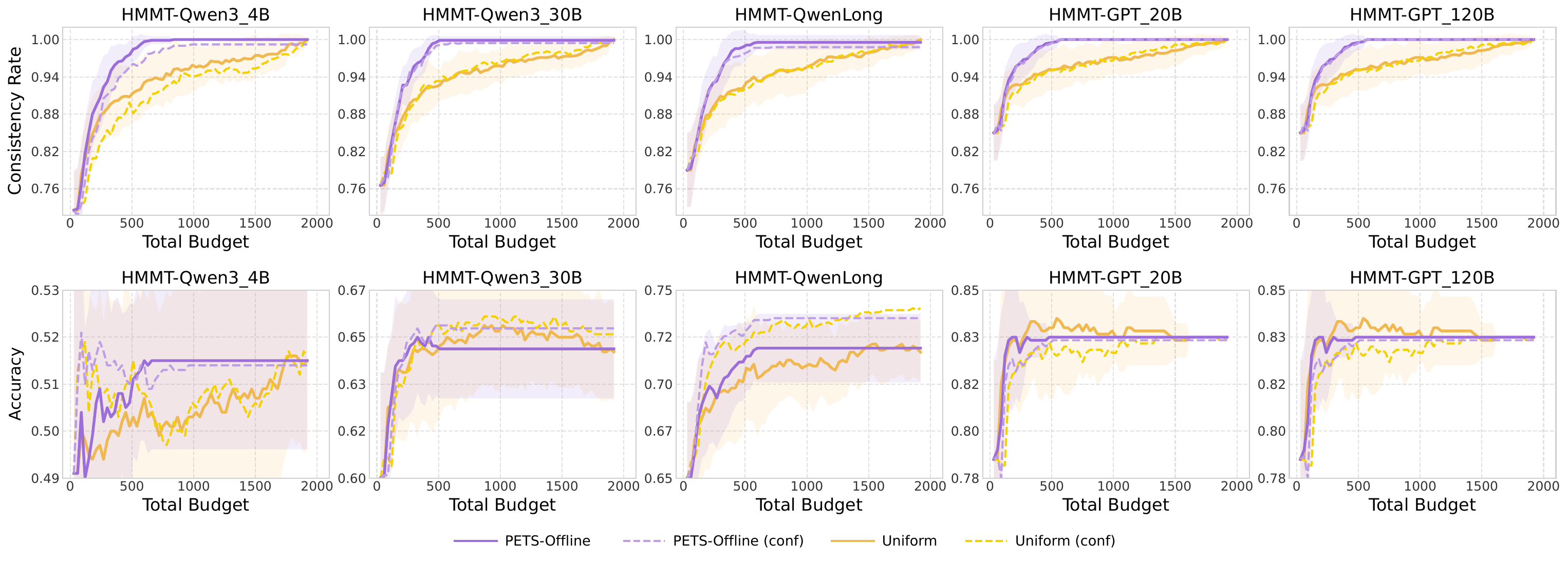}
    \caption{HMMT offline}
    \label{fig:hmmt_offline_10panel}
\end{figure*}

\begin{figure*}[h]
    \centering
    \includegraphics[width=1\linewidth]{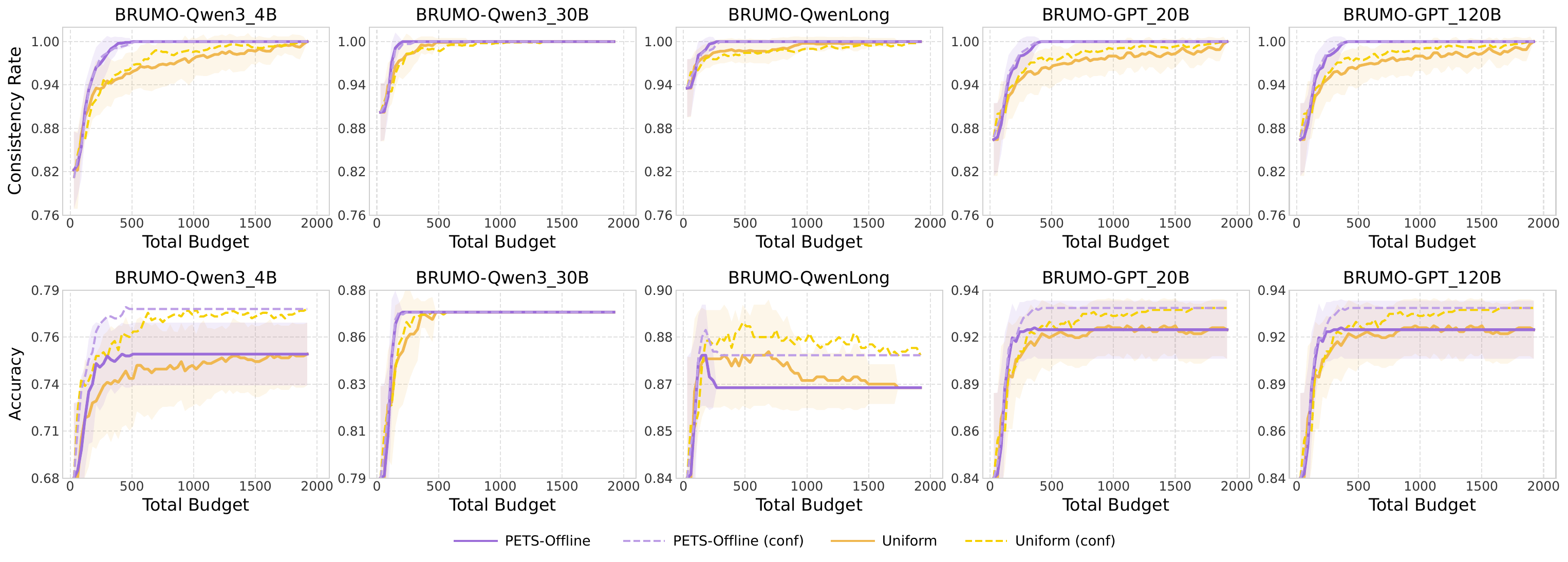}
    \caption{BRUMO offline}
    \label{fig:brumo_offline_10panel}
\end{figure*}

\begin{sidewaystable}[h]
\centering
\caption{\method-Offline results. Results are formatted as mean (variance).}
\label{tab:offline_results_full}
\resizebox{\textheight}{!}{%
\begin{tabular}{l|l|ccc|ccc|ccc|ccc|ccc}
\toprule
\multicolumn{1}{l|}{} &
  \multicolumn{1}{l|}{} &
  \multicolumn{3}{c|}{Qwen3-4B} &
  \multicolumn{3}{c|}{Qwen3-30B} &
  \multicolumn{3}{c|}{Qwen-Long} &
  \multicolumn{3}{c|}{GPT-20B} &
  \multicolumn{3}{c}{GPT-120B} \\
\cmidrule(l){3-17}
\multicolumn{1}{l|}{Dataset} &
  \multicolumn{1}{l|}{Method} &
  \#tok@con=1$\downarrow$ &
  con@match$\uparrow$ &
  acc@match$\uparrow$ &
  \#tok@con=1$\downarrow$ &
  con@match$\uparrow$ &
  acc@match$\uparrow$ &
  \#tok@con=1$\downarrow$ &
  con@match$\uparrow$ &
  acc@match$\uparrow$ &
  \#tok@con=1$\downarrow$ &
  con@match$\uparrow$ &
  acc@match$\uparrow$ &
  \#tok@con=1$\downarrow$ &
  con@match$\uparrow$ &
  acc@match$\uparrow$ \\
\midrule


\multirow{4}{*}{GPQA}
& PETS-Offline & 2780 (515.55) & 0.997 (0.00) & 0.697 (0.01) & 2607 (547.05) & 0.999 (0.00) & 0.718 (0.01) & 2513 (529.63) & 0.998 (0.00) & 0.763 (0.01) & 3180 (365.21) & 0.998 (0.00) & 0.757 (0.01) & 2580 (493.68) & 0.998 (0.00) & 0.825 (0.01) \\
& PETS-Offline (conf) & 3667 (758.10) & 0.989 (0.01) & 0.703 (0.01) & 3687 (657.44) & 0.988 (0.01) & 0.721 (0.01) & 2887 (459.94) & 0.992 (0.01) & 0.770 (0.01) & 3693 (534.94) & 0.992 (0.01) & 0.762 (0.01) & 3540 (788.10) & 0.990 (0.01) & 0.821 (0.01) \\
& Uniform & 11013 (1679.85) & 0.957 (0.02) & 0.689 (0.01) & 10367 (2031.53) & 0.958 (0.01) & 0.715 (0.01) & 9973 (2531.31) & 0.958 (0.01) & 0.765 (0.01) & 10853 (2021.22) & 0.958 (0.01) & 0.751 (0.01) & 10393 (2148.61) & 0.965 (0.01) & 0.815 (0.01) \\
& Uniform (conf) & 11453 (1382.58) & 0.947 (0.01) & 0.699 (0.01) & 11607 (1154.88) & 0.949 (0.01) & 0.722 (0.01) & 11400 (1353.41) & 0.950 (0.02) & 0.764 (0.01) & 11193 (1460.74) & 0.956 (0.01) & 0.756 (0.01) & 10500 (1988.76) & 0.957 (0.02) & 0.808 (0.01) \\ \midrule

\multirow{4}{*}{AIME25}
& PETS-Offline & 212 (47.23) & 1.000 (0.00) & 0.833 (0.00) & 190 (56.99) & 1.000 (0.00) & 0.886 (0.02) & 152 (60.48) & 1.000 (0.00) & 0.900 (0.00) & 181 (47.59) & 1.000 (0.00) & 0.900 (0.00) & 212 (62.50) & 0.997 (0.01) & 0.939 (0.01) \\
& PETS-Offline (conf) & 257 (81.37) & 0.979 (0.02) & 0.840 (0.02) & 223 (58.26) & 0.966 (0.03) & 0.867 (0.03) & 203 (82.13) & 0.970 (0.02) & 0.904 (0.02) & 251 (87.55) & 0.970 (0.03) & 0.897 (0.01) & 211 (65.67) & 0.976 (0.02) & 0.933 (0.02) \\
& Uniform & 470 (286.16) & 0.937 (0.03) & 0.821 (0.02) & 545 (462.35) & 0.957 (0.03) & 0.861 (0.02) & 288 (224.17) & 0.969 (0.03) & 0.893 (0.02) & 410 (244.78) & 0.953 (0.03) & 0.902 (0.02) & 681 (353.75) & 0.947 (0.03) & 0.913 (0.03) \\
& Uniform (conf) & 610 (395.44) & 0.913 (0.05) & 0.833 (0.03) & 763 (508.98) & 0.928 (0.04) & 0.856 (0.03) & 752 (564.94) & 0.942 (0.03) & 0.889 (0.02) & 618 (342.65) & 0.937 (0.04) & 0.890 (0.03) & 911 (542.15) & 0.943 (0.03) & 0.919 (0.03) \\ \midrule

\multirow{4}{*}{AIME24}
& PETS-Offline & 259 (90.68) & 1.000 (0.00) & 0.631 (0.02) & 135 (31.27) & 1.000 (0.00) & 0.700 (0.00) & 83 (45.04) & 1.000 (0.00) & 0.700 (0.00) & 200 (70.17) & 1.000 (0.00) & 0.733 (0.00) & 184 (81.01) & 1.000 (0.00) & 0.744 (0.02) \\
& PETS-Offline (conf) & 369 (122.85) & 0.961 (0.03) & 0.660 (0.03) & 120 (35.23) & 0.996 (0.01) & 0.699 (0.01) & 91 (53.13) & 0.989 (0.02) & 0.699 (0.01) & 232 (72.18) & 0.969 (0.02) & 0.726 (0.01) & 218 (93.86) & 0.986 (0.02) & 0.737 (0.02) \\
& Uniform & 861 (531.96) & 0.938 (0.03) & 0.630 (0.03) & 202 (129.44) & 0.977 (0.02) & 0.696 (0.01) & 96 (62.23) & 0.990 (0.02) & 0.699 (0.01) & 665 (554.41) & 0.962 (0.02) & 0.719 (0.02) & 448 (483.41) & 0.967 (0.03) & 0.737 (0.03) \\
& Uniform (conf) & 942 (516.00) & 0.922 (0.04) & 0.634 (0.03) & 179 (115.38) & 0.973 (0.03) & 0.686 (0.02) & 115 (78.82) & 0.978 (0.03) & 0.697 (0.01) & 1206 (676.46) & 0.941 (0.02) & 0.708 (0.02) & 535 (501.94) & 0.948 (0.02) & 0.719 (0.03) \\ \midrule

\multirow{4}{*}{HMMT}
& PETS-Offline & 464 (136.40) & 1.000 (0.00) & 0.516 (0.02) & 381 (79.24) & 0.999 (0.01) & 0.647 (0.02) & 363 (97.88) & 0.996 (0.01) & 0.717 (0.02) & 329 (129.81) & 1.000 (0.00) & 0.833 (0.00) & 329 (129.81) & 1.000 (0.00) & 0.833 (0.00) \\
& PETS-Offline (conf) & 579 (198.24) & 0.963 (0.03) & 0.520 (0.03) & 361 (103.47) & 0.973 (0.03) & 0.652 (0.02) & 371 (126.91) & 0.971 (0.03) & 0.732 (0.02) & 324 (118.80) & 0.989 (0.02) & 0.831 (0.01) & 324 (118.80) & 0.989 (0.02) & 0.831 (0.01) \\
& Uniform & 1133 (489.01) & 0.904 (0.04) & 0.500 (0.03) & 1219 (483.32) & 0.913 (0.03) & 0.647 (0.03) & 1227 (463.11) & 0.911 (0.04) & 0.688 (0.03) & 916 (483.33) & 0.938 (0.03) & 0.836 (0.02) & 916 (483.33) & 0.938 (0.03) & 0.836 (0.02) \\
& Uniform (conf) & 1383 (472.53) & 0.872 (0.07) & 0.506 (0.04) & 1089 (473.89) & 0.913 (0.05) & 0.651 (0.03) & 1165 (419.04) & 0.901 (0.03) & 0.717 (0.03) & 902 (464.29) & 0.930 (0.04) & 0.826 (0.03) & 902 (464.29) & 0.930 (0.04) & 0.826 (0.03) \\ \midrule

\multirow{4}{*}{BRUMO}
& PETS-Offline & 280 (74.88) & 1.000 (0.00) & 0.754 (0.02) & 148 (28.33) & 1.000 (0.00) & 0.867 (0.00) & 149 (51.95) & 1.000 (0.00) & 0.867 (0.00) & 253 (75.44) & 1.000 (0.00) & 0.921 (0.02) & 253 (75.44) & 1.000 (0.00) & 0.921 (0.02) \\
& PETS-Offline (conf) & 292 (109.71) & 0.978 (0.03) & 0.773 (0.03) & 162 (53.20) & 0.981 (0.03) & 0.861 (0.02) & 150 (77.59) & 0.981 (0.02) & 0.878 (0.02) & 217 (77.06) & 0.984 (0.02) & 0.924 (0.02) & 217 (77.06) & 0.984 (0.02) & 0.924 (0.02) \\
& Uniform & 826 (481.21) & 0.936 (0.03) & 0.732 (0.03) & 237 (87.14) & 0.958 (0.03) & 0.837 (0.03) & 320 (310.71) & 0.974 (0.01) & 0.873 (0.02) & 776 (494.32) & 0.943 (0.02) & 0.901 (0.03) & 776 (494.32) & 0.943 (0.02) & 0.901 (0.03) \\
& Uniform (conf) & 589 (309.52) & 0.941 (0.04) & 0.757 (0.03) & 310 (256.06) & 0.949 (0.03) & 0.836 (0.03) & 421 (463.32) & 0.970 (0.02) & 0.872 (0.02) & 574 (430.99) & 0.952 (0.03) & 0.907 (0.03) & 574 (430.99) & 0.952 (0.03) & 0.907 (0.03) \\

\bottomrule
\end{tabular}%
}
\end{sidewaystable}

\clearpage
\subsubsection{Full \method-Online Results}

Besides the full results, here we also presents the comparison with the oracle case of the online setting, which assumes access to the latent parameter $\bm\theta$; in the realistic online setting, $\bm\theta$ is unavailable and must be learned from a training dataset.

\begin{figure*}[h]
    \centering
    \includegraphics[width=1\linewidth]{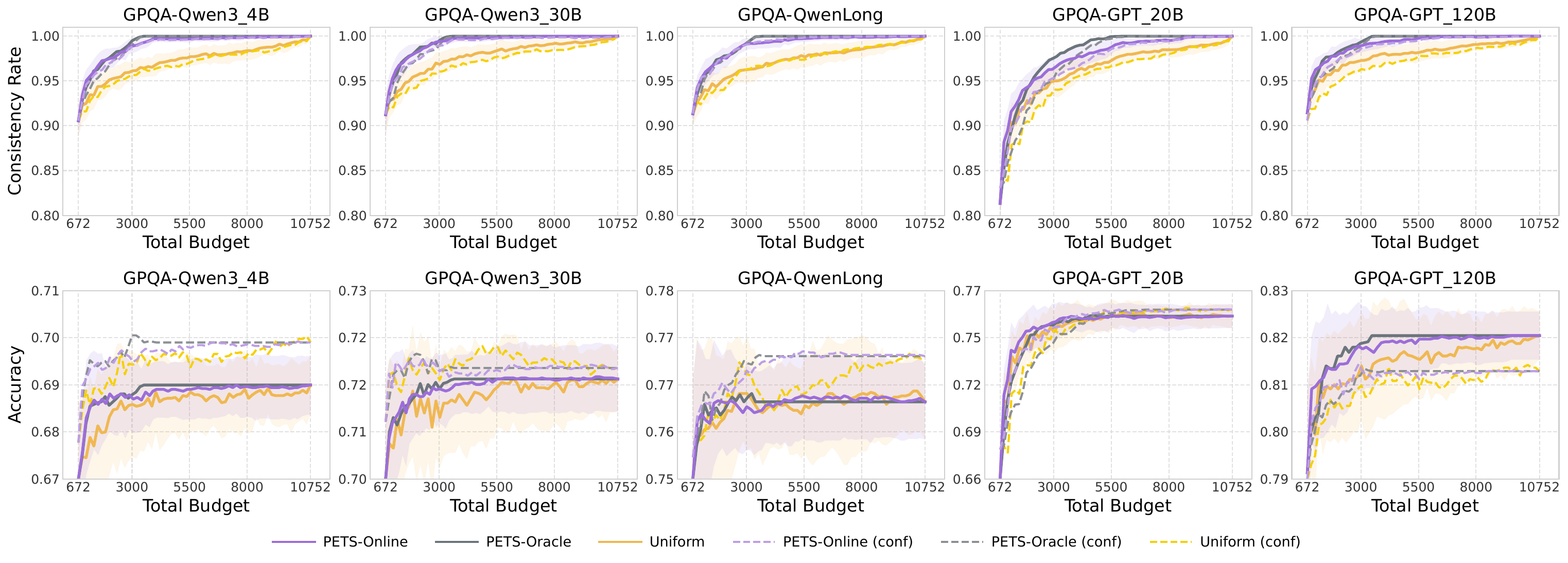}
    \caption{GPQA online}
    \label{fig:gpqa_online_10panel}
\end{figure*}

\begin{figure*}[h]
    \centering
    \includegraphics[width=1\linewidth]{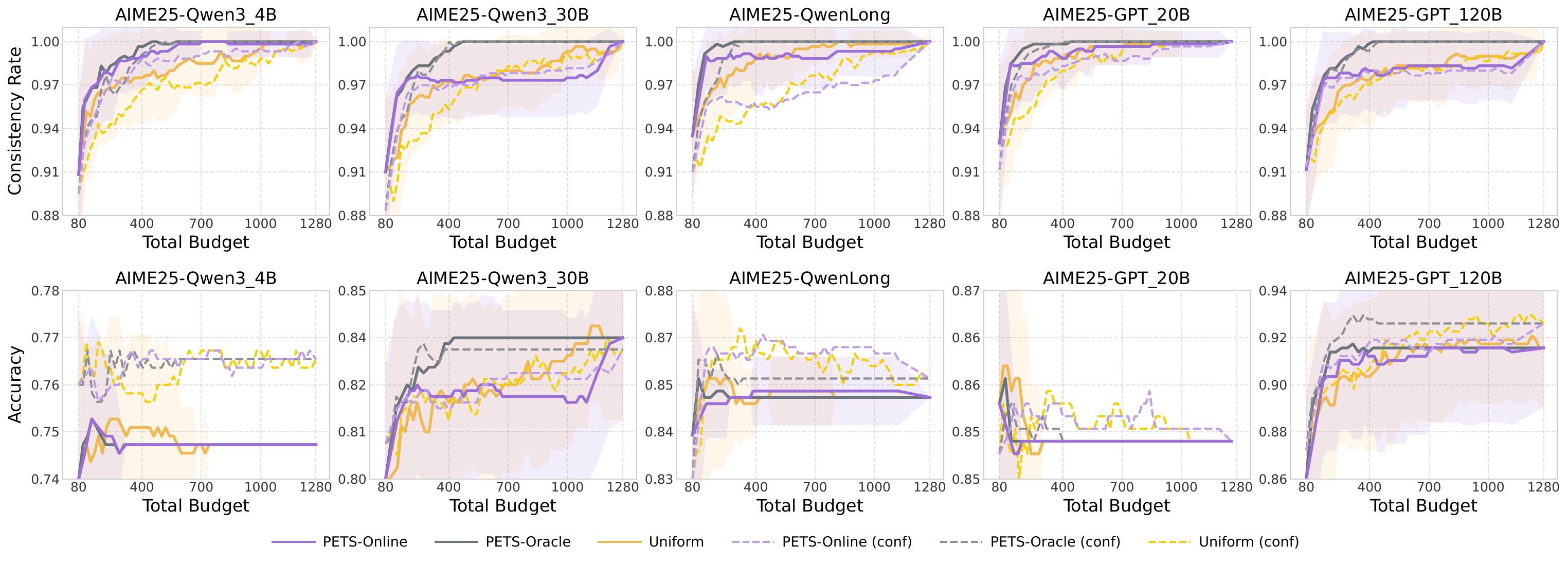}
    \caption{AIME 25 online}
    \label{fig:aime25_online_10panel}
\end{figure*}

\begin{figure*}[h]
    \centering
    \includegraphics[width=1\linewidth]{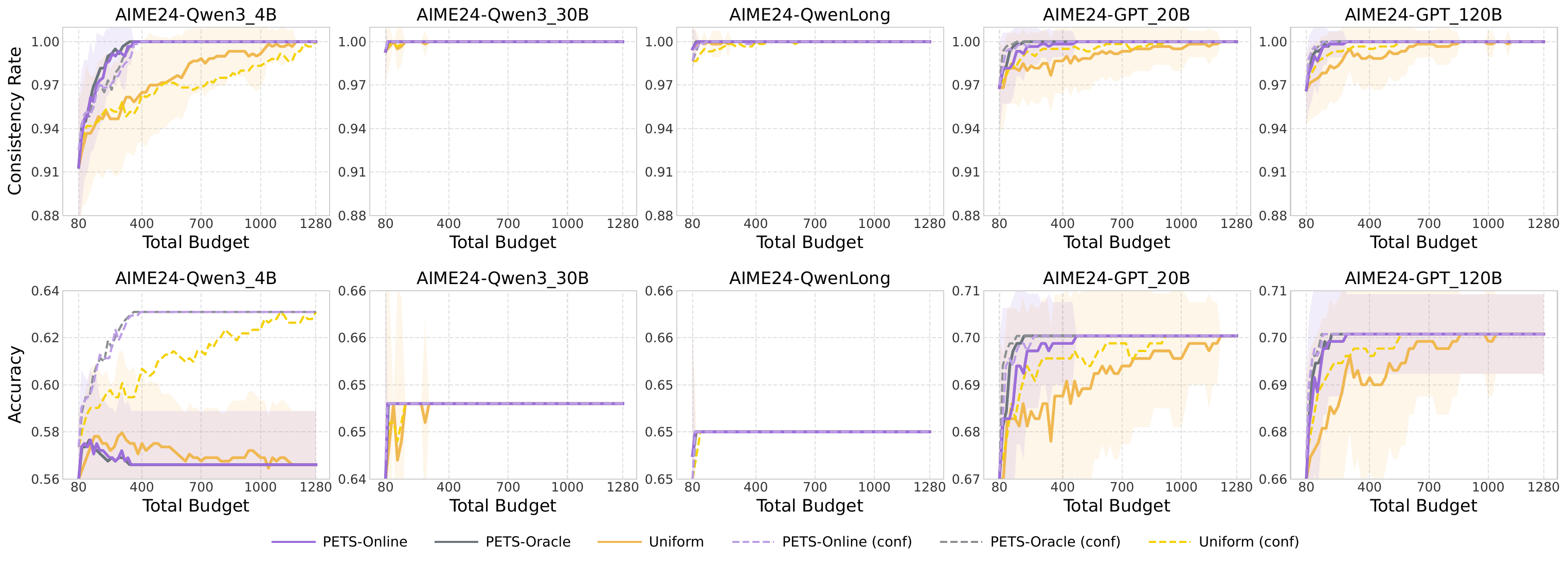}
    \caption{AIME 24 online}
    \label{fig:aime24_online_10panel}
\end{figure*}

\begin{figure*}[h]
    \centering
    \includegraphics[width=1\linewidth]{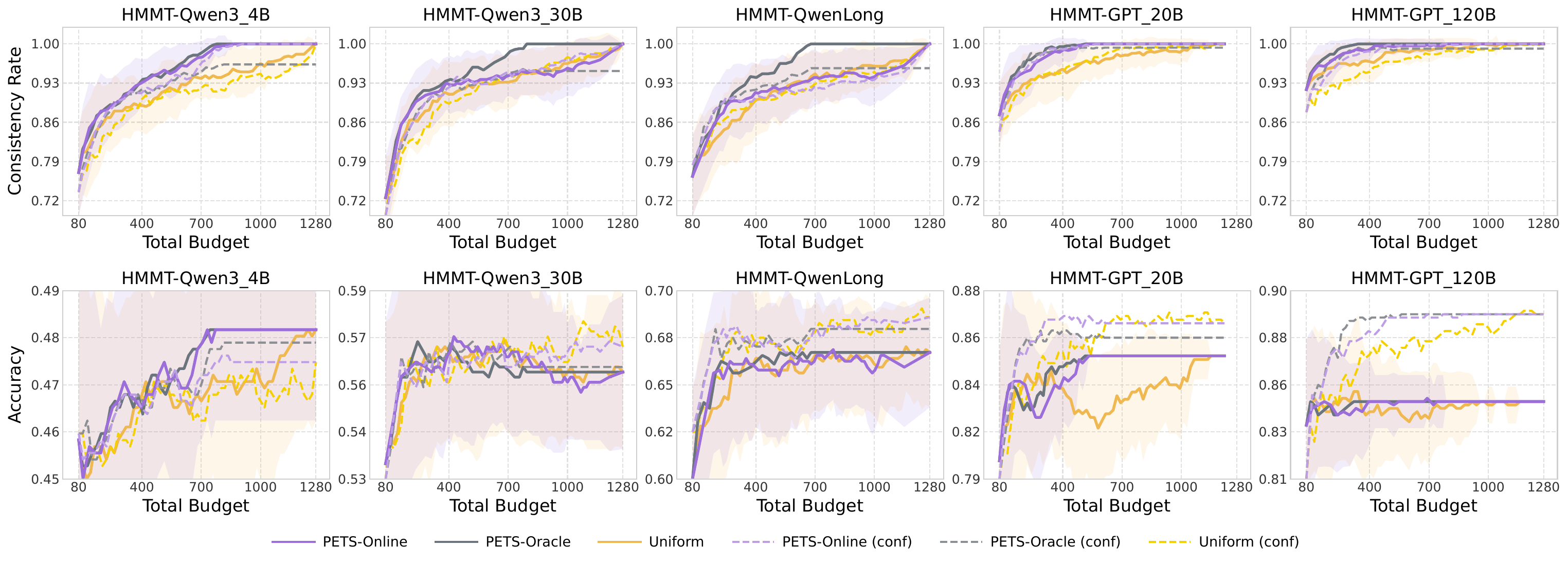}
    \caption{HMMT online}
    \label{fig:hmmt_online_10panel}
\end{figure*}

\begin{figure*}[h]
    \centering
    \includegraphics[width=1\linewidth]{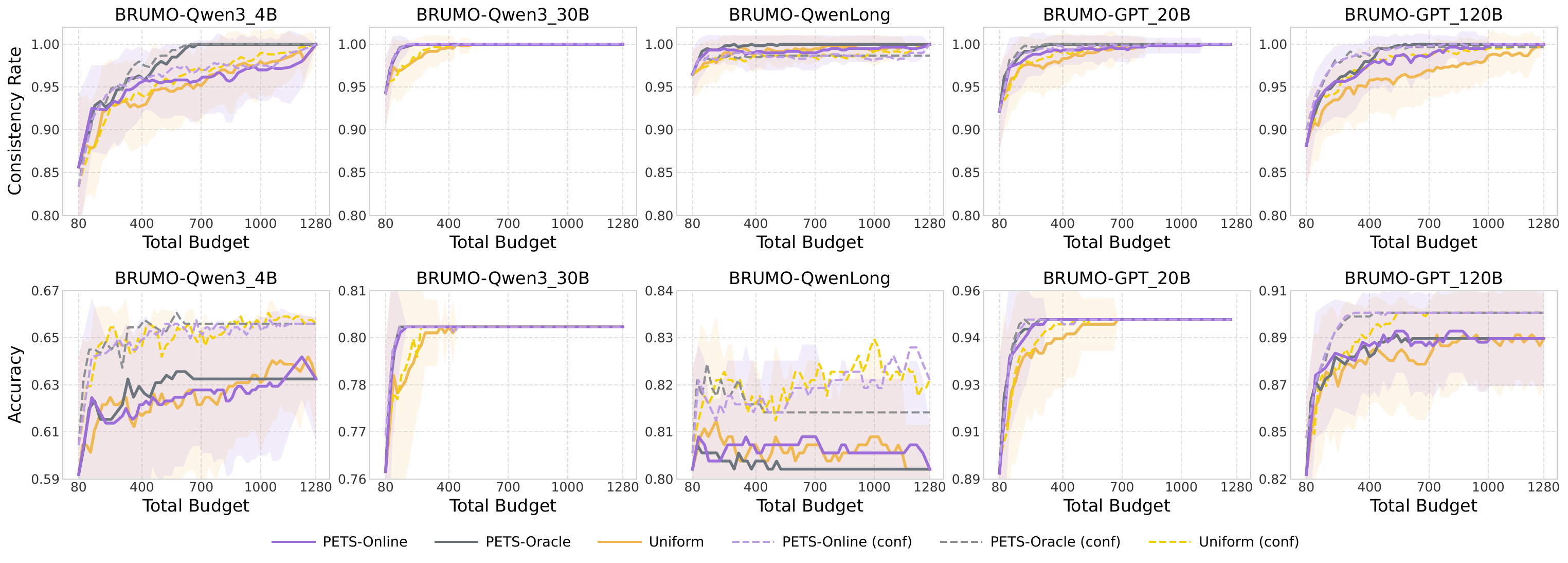}
    \caption{BRUMO online}
    \label{fig:brumo_online_10panel}
\end{figure*}

\begin{sidewaystable}[h]
\centering
\caption{\method-Online results. Results are formatted as mean (variance). Oracle case of the online setting assumes access to the latent parameter $\bm\theta$; in the realistic online setting, $\bm\theta$ is unavailable and must be learned from a training dataset.}
\label{tab:online_results_full}
\resizebox{\textwidth}{!}{%
\begin{tabular}{l|l|ccc|ccc|ccc|ccc|ccc}
\toprule
\multicolumn{1}{l|}{} &
  \multicolumn{1}{l|}{} &
  \multicolumn{3}{c|}{Qwen3-4B} &
  \multicolumn{3}{c|}{Qwen3-30B} &
  \multicolumn{3}{c|}{Qwen-Long} &
  \multicolumn{3}{c|}{GPT-20B} &
  \multicolumn{3}{c}{GPT-120B} \\
\cmidrule(l){3-17}
\multicolumn{1}{l|}{Dataset} &
  \multicolumn{1}{l|}{Method} &
  \#tok@con=1$\downarrow$ &
  con@match$\uparrow$ &
  acc@match$\uparrow$ &
  \#tok@con=1$\downarrow$ &
  con@match$\uparrow$ &
  acc@match$\uparrow$ &
  \#tok@con=1$\downarrow$ &
  con@match$\uparrow$ &
  acc@match$\uparrow$ &
  \#tok@con=1$\downarrow$ &
  con@match$\uparrow$ &
  acc@match$\uparrow$ &
  \#tok@con=1$\downarrow$ &
  con@match$\uparrow$ &
  acc@match$\uparrow$ \\
\midrule

\multirow{6}{*}{GPQA}
& PETS-Online & 4662 (1771.56) & 1.000 (0.00) & 0.688 (0.01) & 3826 (1695.09) & 1.000 (0.00) & 0.716 (0.01) & 4852 (2352.11) & 1.000 (0.00) & 0.763 (0.01) & 6826 (1663.76) & 1.000 (0.00) & 0.759 (0.01) & 4172 (1267.37) & 1.000 (0.00) & 0.824 (0.01) \\
& PETS-Online (conf) & 4888 (2048.79) & 0.995 (0.01) & 0.696 (0.01) & 4357 (2223.79) & 0.993 (0.01) & 0.718 (0.01) & 4146 (1741.11) & 0.995 (0.01) & 0.769 (0.01) & 7269 (1543.73) & 0.991 (0.01) & 0.761 (0.01) & 5094 (1403.07) & 0.990 (0.01) & 0.816 (0.01) \\
& PETS-Oracle & 3008 (299.30) & 0.998 (0.00) & 0.687 (0.01) & 2878 (441.03) & 0.998 (0.00) & 0.715 (0.01) & 3127 (394.72) & 0.995 (0.01) & 0.763 (0.01) & 4458 (715.04) & 0.999 (0.00) & 0.759 (0.01) & 2970 (526.74) & 0.998 (0.00) & 0.823 (0.01) \\
& PETS-Oracle (conf) & 3174 (334.03) & 0.995 (0.01) & 0.697 (0.01) & 3061 (548.13) & 0.995 (0.01) & 0.719 (0.01) & 3187 (284.75) & 0.995 (0.01) & 0.769 (0.01) & 5242 (673.92) & 0.996 (0.01) & 0.761 (0.01) & 3289 (276.80) & 0.995 (0.01) & 0.815 (0.01) \\
& Uniform & 9520 (1258.23) & 0.969 (0.01) & 0.681 (0.01) & 8456 (2024.35) & 0.974 (0.02) & 0.712 (0.01) & 9195 (1768.03) & 0.973 (0.02) & 0.760 (0.01) & 9699 (1311.70) & 0.983 (0.01) & 0.761 (0.01) & 8109 (2676.36) & 0.976 (0.01) & 0.818 (0.01) \\
& Uniform (conf) & 9856 (1356.49) & 0.967 (0.02) & 0.694 (0.01) & 9257 (1947.92) & 0.964 (0.01) & 0.720 (0.01) & 9526 (1249.55) & 0.971 (0.01) & 0.764 (0.01) & 9895 (1201.26) & 0.978 (0.01) & 0.763 (0.01) & 9593 (1686.56) & 0.968 (0.01) & 0.814 (0.01) \\ \midrule

\multirow{6}{*}{AIME25}
& PETS-Online & 194 (111.95) & 1.000 (0.00) & 0.750 (0.00) & 504 (493.65) & 1.000 (0.00) & 0.835 (0.02) & 218 (246.36) & 1.000 (0.00) & 0.850 (0.00) & 195 (221.00) & 1.000 (0.00) & 0.850 (0.00) & 454 (464.19) & 1.000 (0.00) & 0.917 (0.02) \\
& PETS-Online (conf) & 287 (219.04) & 0.947 (0.04) & 0.763 (0.03) & 535 (450.59) & 0.970 (0.04) & 0.825 (0.03) & 681 (492.92) & 0.955 (0.03) & 0.857 (0.03) & 330 (296.69) & 0.960 (0.04) & 0.848 (0.02) & 493 (490.02) & 0.983 (0.02) & 0.923 (0.03) \\
& PETS-Oracle & 168 (89.50) & 0.970 (0.03) & 0.750 (0.01) & 201 (101.39) & 0.978 (0.04) & 0.823 (0.04) & 126 (39.82) & 0.983 (0.03) & 0.857 (0.02) & 134 (65.99) & 0.982 (0.02) & 0.857 (0.02) & 175 (80.60) & 0.982 (0.03) & 0.908 (0.03) \\
& PETS-Oracle (conf) & 223 (102.49) & 0.932 (0.05) & 0.762 (0.03) & 226 (89.60) & 0.952 (0.05) & 0.817 (0.03) & 191 (65.17) & 0.950 (0.04) & 0.855 (0.03) & 178 (67.21) & 0.958 (0.03) & 0.847 (0.02) & 192 (86.07) & 0.963 (0.04) & 0.912 (0.04) \\
& Uniform & 290 (262.13) & 0.948 (0.03) & 0.748 (0.02) & 415 (319.94) & 0.947 (0.06) & 0.810 (0.04) & 229 (151.20) & 0.953 (0.04) & 0.847 (0.03) & 221 (190.98) & 0.955 (0.04) & 0.860 (0.02) & 416 (338.82) & 0.952 (0.05) & 0.892 (0.04) \\
& Uniform (conf) & 452 (311.86) & 0.923 (0.05) & 0.763 (0.03) & 486 (262.88) & 0.935 (0.06) & 0.812 (0.03) & 495 (301.53) & 0.925 (0.04) & 0.835 (0.03) & 349 (222.46) & 0.930 (0.03) & 0.852 (0.03) & 459 (332.33) & 0.943 (0.04) & 0.898 (0.04) \\ \midrule

\multirow{6}{*}{AIME24}
& PETS-Online & 170 (82.05) & 1.000 (0.00) & 0.562 (0.03) & 81 (3.82) & 1.000 (0.00) & 0.650 (0.00) & 83 (8.68) & 1.000 (0.00) & 0.650 (0.00) & 130 (96.34) & 1.000 (0.00) & 0.700 (0.00) & 107 (37.18) & 1.000 (0.00) & 0.698 (0.01) \\
& PETS-Online (conf) & 185 (103.12) & 0.957 (0.04) & 0.600 (0.04) & 81 (3.52) & 0.998 (0.01) & 0.648 (0.01) & 86 (14.12) & 0.988 (0.02) & 0.648 (0.01) & 104 (40.90) & 0.988 (0.02) & 0.688 (0.02) & 92 (19.26) & 0.997 (0.01) & 0.695 (0.02) \\
& PETS-Oracle & 160 (60.49) & 0.992 (0.02) & 0.562 (0.03) & 82 (5.50) & 1.000 (0.00) & 0.650 (0.00) & 82 (5.38) & 1.000 (0.00) & 0.650 (0.00) & 103 (27.69) & 1.000 (0.00) & 0.700 (0.00) & 103 (28.26) & 0.998 (0.01) & 0.697 (0.01) \\
& PETS-Oracle (conf) & 176 (93.59) & 0.953 (0.04) & 0.603 (0.05) & 82 (5.16) & 0.998 (0.01) & 0.648 (0.01) & 87 (14.00) & 0.990 (0.02) & 0.648 (0.01) & 94 (17.47) & 0.992 (0.02) & 0.692 (0.02) & 95 (20.72) & 0.995 (0.02) & 0.693 (0.02) \\
& Uniform & 393 (269.74) & 0.935 (0.04) & 0.568 (0.04) & 83 (10.61) & 0.997 (0.01) & 0.647 (0.01) & 82 (6.10) & 1.000 (0.00) & 0.650 (0.00) & 237 (294.79) & 0.977 (0.03) & 0.677 (0.03) & 205 (180.65) & 0.972 (0.03) & 0.670 (0.02) \\
& Uniform (conf) & 405 (355.95) & 0.942 (0.05) & 0.588 (0.05) & 83 (8.68) & 0.997 (0.01) & 0.647 (0.01) & 101 (64.22) & 0.988 (0.02) & 0.648 (0.01) & 171 (184.40) & 0.980 (0.02) & 0.680 (0.02) & 144 (120.85) & 0.978 (0.03) & 0.677 (0.03) \\ \midrule

\multirow{6}{*}{HMMT}
& PETS-Online & 531 (199.60) & 1.000 (0.00) & 0.485 (0.02) & 825 (389.51) & 1.000 (0.00) & 0.562 (0.03) & 872 (397.80) & 1.000 (0.00) & 0.670 (0.03) & 248 (129.24) & 1.000 (0.00) & 0.850 (0.00) & 221 (146.53) & 1.000 (0.00) & 0.850 (0.00) \\
& PETS-Online (conf) & 615 (182.54) & 0.933 (0.06) & 0.468 (0.02) & 761 (330.68) & 0.952 (0.05) & 0.568 (0.03) & 1061 (362.50) & 0.950 (0.06) & 0.688 (0.04) & 233 (96.81) & 0.968 (0.05) & 0.855 (0.04) & 263 (134.09) & 0.955 (0.05) & 0.865 (0.03) \\
& PETS-Oracle & 498 (169.60) & 0.937 (0.03) & 0.468 (0.02) & 513 (198.68) & 0.973 (0.05) & 0.558 (0.03) & 803 (331.95) & 0.960 (0.06) & 0.665 (0.04) & 215 (101.26) & 0.982 (0.03) & 0.838 (0.03) & 166 (71.99) & 0.985 (0.03) & 0.850 (0.02) \\
& PETS-Oracle (conf) & 497 (190.79) & 0.915 (0.06) & 0.470 (0.03) & 453 (176.85) & 0.923 (0.06) & 0.565 (0.03) & 690 (364.53) & 0.927 (0.06) & 0.675 (0.04) & 205 (78.56) & 0.955 (0.05) & 0.847 (0.03) & 199 (72.56) & 0.950 (0.04) & 0.867 (0.03) \\
& Uniform & 792 (323.71) & 0.893 (0.05) & 0.462 (0.03) & 813 (345.51) & 0.925 (0.08) & 0.562 (0.04) & 803 (331.81) & 0.962 (0.06) & 0.667 (0.04) & 475 (311.66) & 0.920 (0.04) & 0.833 (0.06) & 309 (181.08) & 0.947 (0.04) & 0.848 (0.03) \\
& Uniform (conf) & 931 (315.65) & 0.885 (0.07) & 0.457 (0.04) & 786 (278.13) & 0.922 (0.11) & 0.567 (0.04) & 959 (304.29) & 0.947 (0.07) & 0.678 (0.04) & 409 (219.94) & 0.920 (0.06) & 0.832 (0.05) & 458 (280.36) & 0.913 (0.06) & 0.850 (0.04) \\ \midrule

\multirow{6}{*}{BRUMO}
& PETS-Online & 601 (478.46) & 1.000 (0.00) & 0.632 (0.03) & 125 (38.38) & 1.000 (0.00) & 0.800 (0.00) & 130 (89.34) & 1.000 (0.00) & 0.802 (0.01) & 213 (202.63) & 1.000 (0.00) & 0.950 (0.00) & 383 (241.37) & 1.000 (0.00) & 0.888 (0.02) \\
& PETS-Online (conf) & 494 (337.36) & 0.968 (0.06) & 0.647 (0.03) & 121 (30.34) & 0.985 (0.03) & 0.795 (0.02) & 138 (114.69) & 0.985 (0.02) & 0.815 (0.02) & 162 (59.26) & 0.993 (0.02) & 0.947 (0.01) & 241 (142.94) & 0.988 (0.02) & 0.897 (0.01) \\
& PETS-Oracle & 345 (201.30) & 0.975 (0.03) & 0.628 (0.04) & 118 (34.74) & 0.980 (0.02) & 0.792 (0.02) & 112 (33.36) & 0.993 (0.02) & 0.805 (0.02) & 141 (53.84) & 0.977 (0.03) & 0.935 (0.02) & 277 (126.84) & 0.948 (0.04) & 0.868 (0.03) \\
& PETS-Oracle (conf) & 312 (117.58) & 0.948 (0.07) & 0.645 (0.03) & 115 (28.09) & 0.972 (0.04) & 0.787 (0.02) & 109 (35.67) & 0.982 (0.02) & 0.812 (0.02) & 128 (30.64) & 0.965 (0.04) & 0.932 (0.03) & 208 (97.93) & 0.963 (0.04) & 0.880 (0.03) \\
& Uniform & 613 (415.94) & 0.940 (0.06) & 0.615 (0.04) & 172 (107.33) & 0.963 (0.04) & 0.777 (0.04) & 167 (111.72) & 0.970 (0.03) & 0.807 (0.02) & 245 (202.96) & 0.948 (0.05) & 0.917 (0.04) & 521 (344.31) & 0.915 (0.03) & 0.863 (0.03) \\
& Uniform (conf) & 519 (270.25) & 0.925 (0.08) & 0.655 (0.05) & 159 (85.26) & 0.955 (0.04) & 0.777 (0.03) & 189 (131.80) & 0.975 (0.03) & 0.813 (0.02) & 184 (98.60) & 0.948 (0.04) & 0.917 (0.04) & 353 (260.85) & 0.942 (0.04) & 0.872 (0.03) \\

\bottomrule
\end{tabular}%
}
\end{sidewaystable}

\begin{figure*}[t]
\centering
\includegraphics[width=\textwidth]{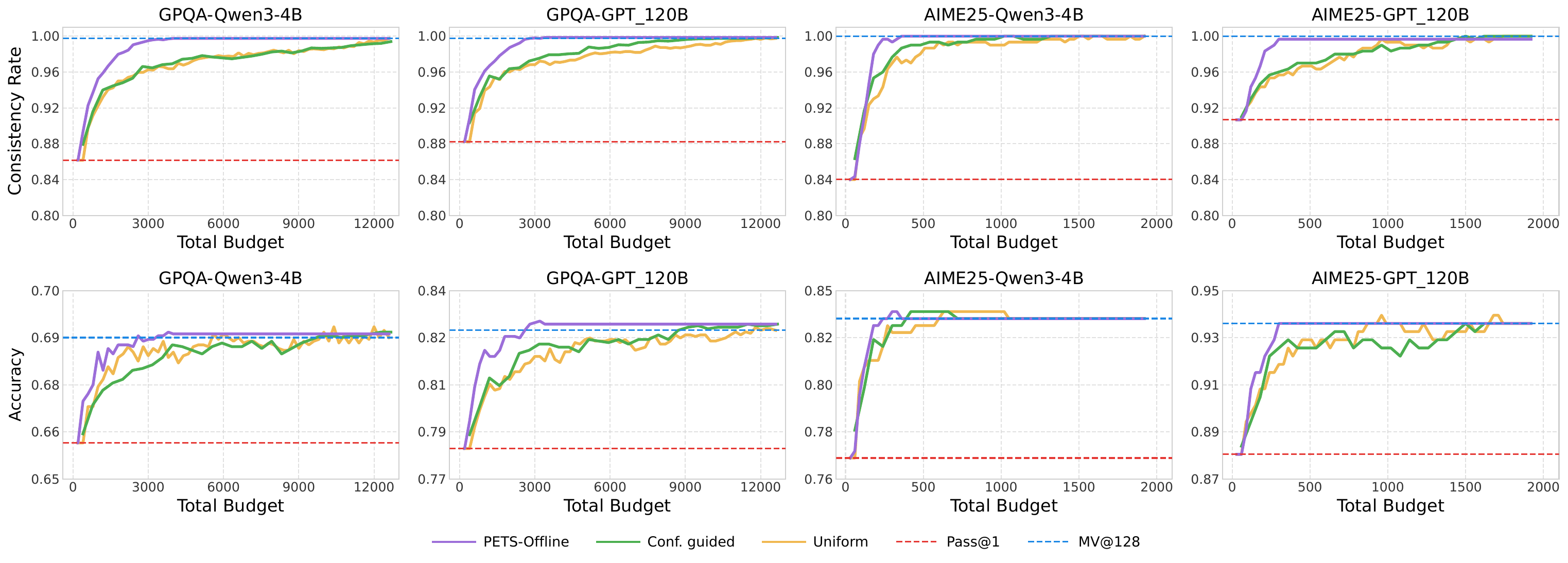}
\caption{Summarization of the detailed offline comparison across datasets and models. Pass@1 denotes single-pass prediction accuracy (a lower-bound reference), while MV@128 denotes majority voting over 128 samples, used as a finite proxy of infinite-budget performance (an upper-bound reference).}
\label{fig:offline-rebuttal}
\end{figure*}

\begin{figure*}[t]
\centering
\includegraphics[width=\textwidth]{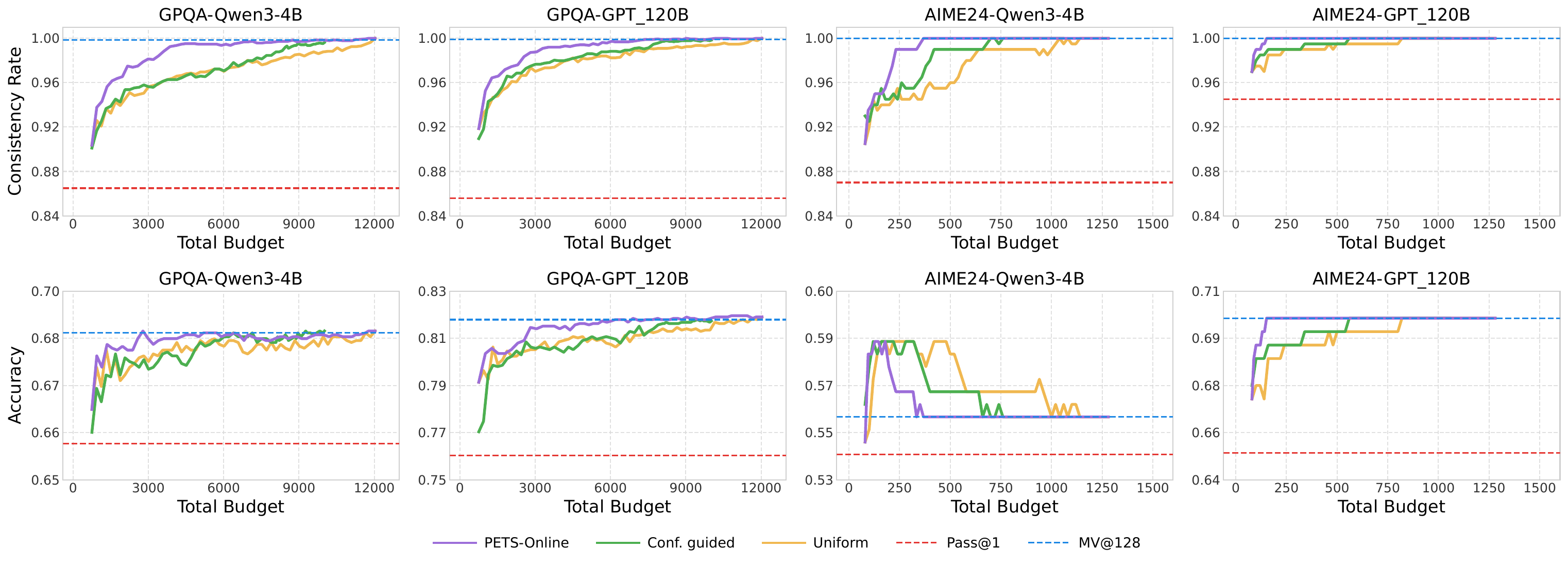}
\caption{Summarization of the detailed online comparison across datasets and models. Pass@1 denotes single-pass prediction accuracy (a lower-bound reference), while MV@128 denotes majority voting over 128 samples, used as a finite proxy of infinite-budget performance (an upper-bound reference).}
\label{fig:online-rebuttal}
\end{figure*}

\end{document}